\renewcommand*\backref[1]{\ifx#1\relax \else (Cited on #1) \fi}  % backward references
\title{Bilinear Exponential Family of MDPs: Frequentist Regret Bound with Tractable Exploration \& Planning}
\author{%
  Reda Ouhamma\thanks{https://redaouhamma.github.io/} \qquad Debabrota Basu \qquad Odalric-Ambrym Maillard \\
  Univ. Lille, Inria, CNRS,\\
  Centrale Lille, UMR 9189 CRIStAL,\\
  F-59000 Lille, France\\
  \texttt{reda.ouhamma@gmail.com , debabrota.basu@inria.fr , odalric.maillard@inria.fr} \\
  % examples of more authors
%   \And
%   Debabrota Basu \\
%   Univ. Lille, Inria, CNRS,\\
%   Centrale Lille, UMR 9189 CRIStAL,\\
%   F-59000 Lille, France\\
%   \texttt{Debabrota.basu@inria.fr} \\
%   \And
%   Odalric Maillard \\
%   Univ. Lille, Inria, CNRS,\\
%   Centrale Lille, UMR 9189 CRIStAL,\\
%   F-59000 Lille, France\\
%   \texttt{odalric.maillard@inria.fr} \\
  % \And
  % Coauthor \\
  % Affiliation \\
  % Address \\
  % \texttt{email} \\
  % \And
  % Coauthor \\
  % Affiliation \\
  % Address \\
  % \texttt{email} \\
}
\newtheorem{theorem}{Theorem}
\newtheorem{lemma}[theorem]{Lemma}
\newtheorem{corollary}[theorem]{Corollary}
\newtheorem{assumption}[theorem]{Assumption}
\newtheorem{remark}{Remark}
\newcommand \dd {\,\mathrm{d}}
\newcommand{\bR}{\mathbb{R}}
\newcommand{\bN}{\mathbb{N}}
\newcommand{\bE}{\mathbb{E}}
\newcommand{\bP}{\mathbb{P}}
\newcommand{\bA}{\mathbb{A}}
\newcommand{\bS}{\mathbb{S}} % span of a function
\newcommand{\cS}{\mathcal{S}}
\newcommand{\cR}{\mathcal{R}}
\newcommand{\cA}{\mathcal{A}} % set of actions
\newcommand{\cM}{\mathcal{M}} % MDP
\newcommand{\cN}{\mathcal{N}} % Normal distribution
\newcommand{\V}[1]{V_{\thetap,\thetar,#1}}
\newcommand{\tV}[1]{V_{\thetahp,\thetatr,#1}}
\newcommand{\tQ}[1]{Q_{\thetahp,\thetatr,#1}}
\newcommand{\pG}{\bar{G}_{k}^\texttt{p}}
\newcommand{\rG}{\bar{G}_{k}^\texttt{r}}
\newcommand{\thetar}{\theta^{\texttt{r}}}
\newcommand{\thetarp}{\theta^{\texttt{r} \prime}}
\newcommand{\thetap}{\theta^{\texttt{p}}}
\newcommand{\thetapp}{\theta^{\texttt{p} \prime}}
\newcommand{\thetahr}{\hat{\theta}^{\texttt{r}}}
\newcommand{\thetatr}{\tilde{\theta}^{\texttt{r}}}
\newcommand{\thetahp}{\hat{\theta}^{\texttt{p}}}
\newcommand{\probr}[1]{\mathbb{P}_{\thetar}^\texttt{r}\left(#1\right)}
\newcommand{\probrp}[1]{\mathbb{P}_{\theta^{\prime \texttt{r}}}^\texttt{r}\left(#1\right)}
\newcommand{\betar}{\beta^{\texttt{r}}}
\newcommand{\betap}{\beta^{\texttt{p}}}
\newcommand{\alphar}{\alpha^{\texttt{r}}}
\newcommand{\alphap}{\alpha^{\texttt{p}}}
\newcommand{\ppsi}{\psi^{\texttt{p}}}
\newcommand{\rz}{Z^{\texttt{r}}}
\newcommand{\pz}{Z^{\texttt{p}}}
\newcommand{\pmu}{\mu^{\texttt{p}}}
\newcommand{\pphi}{\phi^{\texttt{p}}}
\newcommand{\Var}{\mathbb{V}\mathrm{ar}}
\newcommand{\prob}[1]{\mathbb{P}\left(#1\right)}
\renewcommand{\det}{\operatorname{det}}
\newcommand{\tr}{\operatorname{tr}}
\DeclareMathOperator*{\argmax}{arg\,max}
\DeclareMathOperator*{\argmin}{arg\,min}
\newcommand{\eqdef}{\stackrel{\rm def}{=}}
\newcommand{\ie}{\textit{i.e.} }
\newcommand{\eg}{\textit{e.g.} }
\newcommand{\cf}{\textit{cf.} }
\newcommand{\algo}{\texttt{BEF-RLSVI}\xspace}
\newcommand{\RLSVI}{\texttt{RLSVI}\xspace}
\DeclareRobustCommand{\bigO}{%
  \text{\usefont{OMS}{cmsy}{m}{n}O}%
}
\begin{document}

\maketitle

% For TOC in appendix (https://tex.stackexchange.com/a/419290)
\setcounter{parttocdepth}{4}
\doparttoc % Tell to minitoc to generate a toc for the parts
\faketableofcontents % Run a fake tableofcontents command for the partocs

\begin{abstract}
We study the problem of episodic reinforcement learning in continuous state-action spaces with unknown rewards and transitions. Specifically, we consider the setting where the rewards and transitions are modeled using parametric bilinear exponential families. We propose an algorithm, \algo, that a) uses penalized maximum likelihood estimators to learn the unknown parameters, b) injects a calibrated Gaussian noise in the parameter of rewards to ensure exploration, and c) leverages linearity of the exponential family with respect to an underlying RKHS to perform tractable planning. We further provide a frequentist regret analysis of \algo that yields an upper bound of $\tilde{\bigO}(\sqrt{d^3H^3K})$, where $d$ is the dimension of the parameters, $H$ is the episode length, and $K$ is the number of episodes. Our analysis improves the existing bounds for the bilinear exponential family of MDPs by $\sqrt{H}$ and removes the handcrafted clipping deployed in existing \RLSVI-type algorithms. Our regret bound is order-optimal with respect to $H$ and $K$.

%Function approximation using expressive but effectively learnable representations is imperative for reinforcement learning with continuous state-action spaces. In order to develop efficient algorithms with tractable planning, existing works require the value function to satisfy linearity in a Euclidean sense. However, we know that the linearity assumptions often do not hold in practice. In this paper, we consider a widely expressive class of bilinear exponential family representations for transitions and rewards. We show that this class of representations induces linearity of value function while expressed in a reproducible kernel Hilbert space. Moreover, we show that by perturbing the estimations in a specific manner, it is possible to ensure sufficient exploration without resorting to intractable UCB-style algorithms. To the best of our knowledge, this is the first work showing frequentist regret bounds with tractable planning for RL in continuous state-action spaces with realistic models for transitions and rewards.
\end{abstract}

%\oam{Just looking at the table of content, we can spot a few things. E.g. subsections without brothers/sisters.}
%\tableofcontents
%\newpage
\section{Introduction}
%\subsection{Generic intro RL in continuous spaces}
Reinforcement Learning (RL) is a well-studied and popular framework for sequential decision making, where an agent aims to compute a \textit{policy} that allows her to maximize the accumulated reward over a horizon by interacting with an \textit{unknown} environment~\citep{sutton2018reinforcement}.

\noindent\textbf{Episodic RL.} In this paper, we consider the episodic finite-horizon MDP formulation of RL, in short \textit{Episodic RL}~\cite{osband2013more, azar2017minimax,dann2017unifying}. Episodic RL is a tuple $\cM = \langle \cS, \cA, \bP, r, K, H \rangle$, where the state (resp. action) space $\cS$ (resp. $\cA$) might be continuous. In episodic RL, the agent interacts with the environment in episodes consisting of $H$ steps. Episode $k$ starts by observing state $s_1^k$. Then, for $t=1,\ldots H$, the agent draws action $a_t^k$ from a (possibly time-dependent) policy $\pi_t(s_t^k)$, observes the reward $r(s_t^k,a_t^k) \in [0,1]$, and transits to a state $s_{t+1}^k \sim \bP(.\mid s_t^k,a_t^k)$ according to the transition function $\bP$. The performance of a policy $\pi$ is measured by the total expected reward $V_{1}^\pi$ starting from a state $s\in \cS$, the value function and the state-action value functions at step $h\in [H]$ are defined as
\begin{equation*}
    V_h^\pi (s) \eqdef \bE \left[\sum_{t=h}^H r(s_t,a_t) \mid s_h =s \right], \quad\text{and}\quad    Q_h^\pi (s,a) \eqdef \bE \left[\sum_{t=h}^H r(s_t,a_t) \mid s_h =s, a_h=a \right].
\end{equation*}
% An MDP consists of a state space $\cS$ and an action space $\cA$. An agent at state $s\in\cS$ interacts by taking an action $a \in \cA$. It leads her to a new state $s'\in\cS$ according to an unknown transition function $\mathbb{P}:\cS\times\cA \rightarrow\Delta_{\cS}$ and also returns an immediate reward $r$ from an unknown reward distribution $\mathcal{R}:\cS\times\cA \rightarrow\Delta_{\bR}$.\footnote{$\Delta_A$ is the probability simplex defined over the set $A$.}
% In episodic RL, the agent interacts with the environment in episodes consisting of $H$ steps.
Here, computing the policy leading to maximization of cumulative reward requires the agent to strategically control the actions in order to learn the transition functions and reward functions as precisely as required. This tension between learning the unknown environment and reward maximization is quantified as \textit{regret}: the typical performance measure of an episodic RL algorithm. \textit{Regret} is defined as the difference between the \textit{expected cumulative reward} or \textit{value} collected by the optimal agent that knows the environment and the expected cumulative reward or value obtained by an agent that has to learn about the unknown environment. Formally, the regret over $K$ episodes is $$\cR(K) \triangleq \sum_{k=1}^K \left(V_{1}^{\pi^\star}(s_1^k)-V_{1}^{\pi_t}(s_1^k) \right).$$
%\OM{Ref to eqn or literature?}

%\oam{This previous paragraph introduces concepts, ok, but what is the issue we want to tackle precisely in this work?}

%RL in episodic MDPs; see, e.g., . 

\noindent\textbf{Key Challenges.} \textit{The first key challenge in episodic RL is to tackle the exploration--exploitation trade-off}. This is traditionally addressed with the \textit{optimism principle} that either carefully crafts optimistic upper bounds on the value (or state-action value) functions~\cite{azar2017minimax}, or maintains a posterior on the parameters to perform posterior sampling~\cite{osband2013more}, or perturbs the value (or state-action value) function estimates with calibrated noise~\cite{osband2016generalization}. Though the first two approaches induce theoretically optimal exploration, they might not yield tractable algorithms for large/continuous state-action spaces as they either involve optimization in the optimistic set or maintaining a high-dimensional posterior. Thus, \textit{we focus on extending the third approach of \emph{Randomized Least-Square Value Iteration (RLSVI)} framework, and inject noise only in rewards to perform tractable exploration.}

\textit{The second challenge}, which emerges \textit{for continuous state-action spaces}, \textit{is to learn a parametric functional approximation of either the value function or the rewards and transitions} in order to perform planning and exploration. Different functional representations (or models), such as linear~\cite{jin2020provably}, bilinear~\cite{du2021bilinear}, and bilinear exponential families~\cite{chowdhury2021reinforcement}, are studied in literature to develop optimal algorithms for episodic RL with continuous state-action spaces. Since the linear assumption is restrictive in real-life -where non-linear structures are abundant-, generalized representations have obtained more attention recently~\cite{chowdhury2021reinforcement, li2021exponential, du2021bilinear, foster2021statistical}. The bilinear exponential family model is of special interest as it is expressive enough to represent tabular MDPs (discrete state-action), factored MDPs~\cite{kearns1999efficient} and linearly controlled dynamical systems (such as Linear Quadratic Regulators~\cite{abbasi2011regret}) as special cases~\cite{chowdhury2021reinforcement}. Thus, in this paper, \textit{we study the bilinear exponential family of MDPs, i.e. the episodic RL setting where the rewards and transition functions can be modelled with bilinear exponential families.}

\textit{The third challenge is to perform tractable planning\footnote{By tractable planning, we mean having a planner with (pseudo-)polynomial complexity in the problem parameters, i.e. dimension of parameters, dimension of features, horizon, and number of episodes.} given the perturbation for exploration and the model class.} Existing work~\citep{osband2014model, chowdhury2021reinforcement} assumes an oracle to perform planning and yield policies that aren't explicit. The main difficulty in such planning approaches is that dynamic programming requires calculating $\int \bP(s'\mid s,a) V_{h}(s)$ for all $(s,a)$ pairs. This is not trivial unless the transition is assumed to be linear and decouples $s'$ from $(s,a)$, which is not known to hold except for tabular MDPs. Much ink has been spilled about this challenge recently, \eg \cite{du2019good} asks when misspecified linear representations are enough for a polynomial sample complexity in several settings. \cite{shariff2020efficient, lattimore2020learning, van2019comments} provide positive answers for specific linear settings. In this paper, \textit{we aim to design a tractable planner for the bilinear exponential family representation.}

In this paper, we aim to address the following question that encompasses the three challenges:
\begin{center}
    Can we design an algorithm that performs \textbf{tractable exploration} and \textbf{planning} for \textit{bilinear exponential family of MDPs} yielding a  \textbf{near-optimal frequentist regret bound}?
\end{center}

\noindent\textbf{Our Contributions.} Our contributions to this question are three-fold.

1. \textit{Formalism:} We assume that neither rewards nor transitions are known, whereas existing efforts on the bilinear exponential family of MDPs assume knowledge of rewards. This makes the addressed problem harder, practical, and more general. We also observe that though the transition model can represent non-linear dynamics, it implies a linear behavior (see Section~\ref{sec:Bilenar_exponential_family}) in a Reproducible Kernel Hilbert Space (RKHS). This observation contributes to the tractability of planning.

2. \textit{Algorithm:} We propose an algorithm \algo that extends the \RLSVI framework to bilinear exponential families (see Section~\ref{sec:algo_description}). \algo a) injects calibrated Gaussian noise in the rewards to perform exploration, b) leverages the linearity of the transition model with respect to an underlying RKHS to perform tractable planning and c) uses penalized maximum likelihood estimators to learn the parameters corresponding to rewards and transitions (see Section~\ref{sec:building_blocks}). To the best of our knowledge, \textit{\algo is the first algorithm for the bilinear exponential family of MDPs with tractable exploration and planning under unknown rewards and transitions.}

3. \textit{Analysis:} We carefully develop an analysis of \algo that yields $\tilde{\bigO}(\sqrt{d^3 H^3 K})$ regret which improves the existing regret bound for bilinear exponential family of MDPs with known reward by a factor of $\sqrt{H}$ (Section~\ref{sec:upper_bound}). Our analysis (Section~\ref{sec:proof_outline}) builds on existing analyses of RLSVI-type algorithms~\cite{osband2016generalization}, but contrary to them, we remove the need to handcraft a clipping of the value functions~\cite{zanette2020frequentist}. We also do not need to \emph{assume} anti-concentration bounds as we can explicitly control it by the injected noise. This was not done previously except for the linear MDPs. We illustrate this comparison in Table~\ref{tab:comparison_RL_algos}. We highlight three technical tools that we used to improve the previous analyses: 1) Using transportation inequalities instead of the simulation lemma reduces a $\sqrt{H}$ factor compared to \cite{ren2021free}, 2) Leveraging the observation that true value functions are bounded enables using an improved elliptical lemma (compared to \cite{chowdhury2021reinforcement}), and 3) Noticing that the norm of features can only be large for a finite amount of time allows us to forgo clipping and reduce a $\sqrt{d}$ factor from the regret compared to~\cite{zanette2020frequentist}.

\setlength{\textfloatsep}{10pt}% Remove \textfloatsep
\begin{table*}
   \caption{A comparison of RL Algorithms for MDPs with functional representations.}\label{tab:comparison_RL_algos}
    \centering
    \resizebox{\textwidth}{!}{
    \begin{tabular}{cccccc}
    \toprule
    Algorithm  &  Regret & Tractable & Tractable & Free of & Model, assumptions \\
    && exploration & planning & clipping\\
    \midrule
    Thompson sampling & {\footnotesize$\sqrt{d^2 H^3 K}$} & \ding{55} \quad & \quad \ding{51} & N.A & Gaussian $\bP$\\
    \cite{ren2021free}& (Bayesian) &  & & &Known rewards \\
    \midrule
    % \hline
    \texttt{EXP-UCRL} & {\footnotesize$\sqrt{d^2 H^4 K}$} & \ding{55} \quad & \quad \ding{55}  & N.A & Bilinear Exp Family (BEF)\\
    \cite{chowdhury2021reinforcement}& (Frequentist)&  & & &known rewards \\
    \midrule
    % \hline
    \texttt{SMRL} \cite{li2021exponential} & {\footnotesize$\sqrt{d^2 H^4 K}$} & \ding{55} \quad & \quad \ding{55}  & N.A & BEF, known rewards\\
    %& &  & & &known rewards \\
    \midrule
    \texttt{UCRL-VTR} \cite{pmlr-v119-ayoub20a} & {\footnotesize$\sqrt{d^2 H^4 K}$} & \ding{55} \quad & \quad \ding{55}  & N.A & Linear mixture model\\
    \midrule
    $\mathcal{F}-$\texttt{PHE-LSVI} \cite{ishfaq2021randomized} & {\footnotesize$\operatorname{poly}(d_E H)\sqrt{K H}$} & \ding{51} \quad & \quad \ding{55}  & \ding{55} & Eluder dimension, Tabular \\
    $\texttt{PHE-LSVI}$ (linear-RL) & $\sqrt{d^3 H^4 K}$ &&&& Anti-concentration \\
    \midrule
    \texttt{UC-MatrixRL} \cite{yang2020reinforcement} & {\footnotesize$\sqrt{d^2 H^5 K}$} & \ding{55} \quad & \quad \ding{55}  & N.A & Linear factor MDP \\
    \midrule
    % \hline
    \texttt{OPT-RLSVI} \cite{zanette2020frequentist} & {\footnotesize$\sqrt{d^4 H^5 K}$} & \ding{51} \quad & \quad \ding{51}  & \ding{55} & Linear $V$ \\
    \midrule
    \algo (this work) & {\footnotesize$\sqrt{d^3 H^3 K}$} & \ding{51} \quad & \quad \ding{51}  & \ding{51} & Bilinear Exp Family \\
    \bottomrule
    \end{tabular}}%\vspace*{-1em}
\end{table*}

\section{Bilinear exponential family of MDPs}\label{sec:Bilenar_exponential_family}

In this section, we introduce the bilinear exponential family (BEF) model coined in~\cite{chowdhury2021reinforcement}, extend it to parametric rewards, and we state a novel observation about linearity of this representation. 

\noindent\textbf{Bilinear exponential family.} We consider transitions and rewards from the BEF. Specifically,
\begin{align}
    \label{def:transition_model}
    \prob{\tilde{s} \mid s,a} &= \exp\left(\psi (\tilde{s})^\top M_{\thetap}\varphi(s,a) - Z_{s,a}^\texttt{p}(\thetap)\right),\\
    \prob{r \mid s,a} &= \exp\left(r\: B^\top M_{\thetar}\varphi(s,a) - Z_{s,a}^\texttt{r}(\thetar)\right), \label{def:reward_model}
\end{align}
where $\varphi \in (\bR_{+}^{q})^{\cS\times \cA}$ and $\psi \in (\bR_{+}^{p})^{\cS}$ are known feature mappings, and $B \in \bR^p$ is a known matrix. The reward and transition parameters are $\thetap, \thetar \in \bR^d$. $M_{\theta^{\cdot}} \eqdef \sum_{i=1}^d \theta_i^{\cdot} A_i $, where $(A_i)_{1\le i\le d}$ are known matrices. The log partition function: $\quad \pz_{s,a}(\thetap) \eqdef \log \int_{\cS} \exp\left(\psi(\tilde{s})^\top M_{\thetap} \varphi(s,a)\right)d\tilde{s}, \quad$ and $Z^{\texttt{r}}$ is defined similarly. Finally, we emphasize a minor difference with the original BEF model: like~\cite{li2021exponential}, we omit a base measure of the form $h(s,\tilde{s},a)$ from the model, note that all the examples provided in \cite{chowdhury2021reinforcement} still hold with this slight restriction. 

% \begin{equation*}
%     \pz_{s,a}(\thetap) \eqdef \log \int_{\cS} \exp\left(\psi(\tilde{s})^\top M_{\thetap} \varphi(s,a)\right)d\tilde{s},
% \end{equation*}

We denote $V_{\thetap,\thetar,h}^\pi$, respectively $Q_{\thetap,\thetar,h}^\pi$, the value, respectively state-action value function for policy $\pi$ in the MDP parameterized by $(\thetap,\thetar)$ at time $h$. A policy $\pi^\star$ is \textit{optimal} if for all $s\in \cS,\: V_{\theta,h}^{\pi^\star}(s) = \max\limits_{\pi \in \Pi} V_{\theta,h}^{\pi}(s)$. A learning algorithm minimizes the (pseudo-)regret defined as:
\begin{equation}
    \label{def:regret}
    \cR(K) \triangleq \sum_{k=1}^K \left(V_{\theta,1}^{\pi^\star}(s_1^k)-V_{\theta,1}^{\pi^t}(s_1^k) \right).
\end{equation}

\noindent\textbf{Linearity of transitions.}
Now, we state an observation about the bilinear exponential family and discuss how it helps with the challenge of planning in episodic RL. Specifically, the popular assumption of linearity of the transition kernel is a direct consequence of our model. Indeed,
\begin{equation*}
   2 \psi\left(s^{\prime}\right)^{\top} M_{\thetap} \varphi(s, a) = -\|(\psi(s^{\prime}) - M_{\thetap}\varphi(s, a)\|^2
    + \|\psi(s')\|^2+\|M_{\thetap}\varphi(s,a)\|^2.
\end{equation*}
Notice that the quadratic term resembles the Radial Basis Function (RBF) kernel. More precisely, for an RBF kernel with covariance $\Sigma \!=\! I_p$ and $k(x,y) \!\eqdef\! \exp\left(-\|x-y\|^2 /2\right)$, we find

\begin{equation}\label{eq:linear_transition}
    \prob{s^{\prime} \mid s, a} = \langle \pphi(s,a), \pmu(s') \rangle_{\mathcal{H}},
\end{equation}

where $\mathcal{H}$ is the RKHS associated with the kernel, $\pmu(s') = (2\pi)^{-p/2} \: k\left(\psi(s'),.\right) \: \exp\left(\|\psi\left(s^{\prime}\right)\|^2 /2\right)$, and $\pphi(s,a) =  k\left(M_{\thetap}^\top \varphi(s,a),.\right) \: \exp\left(\|M_{\thetap} \varphi(s, a) \|^2 /2  -Z_{s,a}(\thetap) \right)$. Equation~\eqref{eq:linear_transition} shows that $s'$ is decoupled from $(s,a)$, we see hereafter why this is crucial to reducing the complexity of planning. 
\begin{remark}
Up to our knowledge, \cite{ren2021free} is the only work providing an example of linear transition kernel for RL with continuous state-action spaces. They consider Gaussian transitions with an unknown mean ($f^\star (s,a)$) and known variance ($\sigma^2$). Actually, linear $f^\star$ is a special case of the bilinear exponential family model, where $\psi(s')= (s', \|s'\|^2)$ and $M_\theta \varphi(s,a) = (f_\theta (s,a)/\sigma^2 , -1/\sigma^2)$.
\end{remark}

% \dbcomment{A probably useful result here is that the maximum information gain of two additive kernels $k_1: X_1 \times X_1 \rightarrow \mathbb{R}$ and $k_2: X_2 \times X_2 \rightarrow \mathbb{R}$ satisfies $MIG_t(k_1 \oplus k_2, X_1 \times X_2) \leq MIG_t(k_1, X_1) + MIG_t (k_2, X_2) + 2\ln t$ and for product kernel, $MIG_t(k_1 \otimes k_2, X_1 \times X_2) \leq dMIG_t(k_1, X_1) + d\ln t$, where $d = dim(X_2)$ is the maximum possible rank of kernel $k_2$.}

\noindent\textbf{Importance of linearity.}
To understand the planning challenge in RL, recall the Bellman equation:
\begin{equation*}
    Q_h^\pi(s,a) = r(s,a) + \int_{\tilde{s}\in \cS} P(s'\mid s,a) V_{h+1}^\pi(\tilde{s}) d \tilde{s},
\end{equation*}
We must approximate the integral at the R.H.S.for $(s,a) \in \cS \times \cA$. For a tabular MDP with $|S|$ states and $|A|$ actions, we need to evaluate $(Q_h^\pi)_{h\in [H]}$, i.e. to approximate $|S| \times |A| \times H$ integrals per episode, which can be very expensive. However, with the linear transition model of Equation~\eqref{eq:linear_transition}, although $\pphi$ and $\pmu$ are infinite dimensional, we show in Section~\ref{sec:building_blocks} (§~planning) that the planning complexity becomes polynomial in the problem parameters.

%%------------- End of sec II: Preliminaries ---------------

\section{\algo: algorithm design and frequentist regret bound}\label{sec:algo_description}
In this section, we formally introduce the Bilinear Exponential Family Randomized Least-Squares Value Iteration (\algo) algorithm along with a high probability upper-bound on its regret.

\subsection{\algo: algorithm design}

\algo is based on \RLSVI~\cite{osband2016generalization} framework with the distinction that we only perturb the reward parameters and not all the parameters of the value function. \RLSVI algorithms are reminiscent of Thompson Sampling, yet more tractable with better control over the probability to be optimistic.

\begin{algorithm}[!h]
	\caption{\algo}
	\label{algo:EXP-RLSVI}
	\begin{algorithmic}[1]
		\STATE \textbf{Input:} failure rate $\delta$, constants $\alphap,\eta$ and $(x_k)_{k\in[K]}\in \bR^+$
% 		\STATE Initial confidence set
% 		$C_0 \!=\! \{\theta_r \!\in\!\bR^{d}\!:\! \left\|\theta_r \right\|_F^2\!\leq\! 1\!\}$.
		\FOR{episode $k=1,2,\ldots$}
		\STATE Observe initial state $s_{1}^k$
		\STATE Sample noise $\xi_k \sim \cN\left(0,x_k (\pG)^{-1}\right)$ such that\label{alg_1:line_4} %\\
		\begin{center}$\pG = \frac{\eta}{\alphap}\bA+ \sum_{\tau=1}^{k-1} \sum_{h=1}^H (\varphi(s_h^\tau,a_h^\tau)^\top A_i^\top A_j \varphi(s_h^\tau,a_h^\tau))_{i,j \in [d]}$\end{center}
		\STATE Perturb reward parameter: $\thetatr(k) = \thetahr(k) + \xi_k$
		\STATE Compute $(\tQ{h}^{k})_{h\in [H]}$ via Bellman-backtracking, see Algorithm~\ref{algo:Bellman}
		\FOR{$h=1,\ldots,H$}
		\STATE Pull action $a_h^k=\argmax_{a} \tQ{h} (s_h^k,a)$
		\STATE Observe reward $r(s_h^k,a_h^k)$ and state $s_{h+1}^k$. 
		\ENDFOR
		\STATE Update the penalized ML estimators $\thetahp(k), \thetahr(k)$, see Equation~\eqref{eq:ML_transition} and Equation~\eqref{eq:ML_reward}
		\ENDFOR
	\end{algorithmic}
\end{algorithm}
We can see that Algorithm~\ref{algo:EXP-RLSVI} performs exploration by a Gaussian perturbation of the reward parameter (Line~\ref{alg_1:line_4}). Contrary to optimistic approaches, this method is explicit and also more efficient since it does not a involve high-dimensional optimization.

\begin{algorithm}
    \caption{Bellman Backtracking}
    \label{algo:Bellman}
    \begin{algorithmic}[1]
        \STATE \textbf{Input} Parameters $\thetahp, \thetatr$, initialize $ \tilde{\theta} = (\thetatr, \thetahp)$ and $\forall s, V_{H+1}(s) = 0$
        \FOR{steps $h=H-1, H-2, \cdots, 0$}
        \STATE Calculate $Q_{\tilde{\theta},h}(s, a) = \bE_{s,a}^{\thetatr}[r] + \langle \pphi(s, a), \int V_{ \tilde{\theta},h+1}(s^\prime) \pmu(s^\prime) d s^\prime\rangle_{\mathcal{H}}$.\label{alg_2:planning}
        \ENDFOR
    \end{algorithmic}
\end{algorithm}
We can approximate Line~\ref{alg_2:planning} of Algorithm~\ref{algo:Bellman} with $\bigO(p H^3 K\log(H K))$ complexity and without harming the learning process (\cf §~planning, Section~\ref{sec:building_blocks}). Therefore, here, planning is tractable.

\subsection{\algo: regret upper-bound} \label{sec:upper_bound}

We state the standard smoothness assumptions on the model~\citep{chowdhury2021reinforcement,jun2017scalable,lu2021low}.
\begin{assumption}
\label{ass:bounds_hessians}
There exist constants $\alphap, \alphar,\betap,\betar >0$, such that the representation model satisfies:
\begin{align*}
    \forall (s,a) \in\cS\times \cA, \forall \theta, x \in \bR^d \quad &\alphap \le x^\top C_{s,a}^\theta [\psi] x \le \betap\\
    \forall (s,a)\in\cS\times \cA, \forall \theta, x \in \bR^d \quad &\alphar \le \Var_{s,a}^\theta (r) \:x^\top B^\top B x \le \betar
\end{align*}
where $\mathbb{C}_{s, a}^{\theta}\left[\psi\left(s^{\prime}\right)\right] \triangleq \mathbb{E}_{s'\sim \bP_{\theta}\mid s, a} \left[\psi\left(s^{\prime}\right) \psi\left(s^{\prime}\right)^{\top}\right]-\mathbb{E}_{s'\sim \bP_{\theta}\mid s, a}\left[\psi\left(s^{\prime}\right)\right] \mathbb{E}_{s'\sim \bP_{\theta}\mid s, a} \left[\psi\left(s^{\prime}\right)^{\top}\right]$ and $\Var_{s,a}^\theta (r) \triangleq \left(\mathbb{E}_{s, a}^{\theta}\left[r^2\right]-\mathbb{E}_{s, a}^{\theta}\left[r\right]^2\right)$ is the variance of the reward under $\theta$.
\end{assumption}
A closer look at the derivatives of the model (see Appendix~\ref{app:properties_exp_fam}) tells us that previous inequalities directly imply a control over the eigenvalues of the Hessian matrices of the log-normalizers.

We now state our main result, the regret upper-bound of \algo.

\begin{theorem}[Regret bound]\label{thm:regret_bound}
Let $\bA \triangleq (\tr(A_i A_j^\top))_{i,j \in [d]}$ and $G_{s,a} \triangleq (\varphi(s,a)^\top A_i^\top A_j \varphi(s,a))_{i,j \in [d]}$. Under Assumption~\ref{ass:bounds_hessians} and further considering that
\begin{enumerate}[nosep,leftmargin=*]
    \item $\max\{\|\thetar\|_{\bA},\|\thetap\|_{\bA}\} \le B_{\bA}, \: \:\|\bA^{-1}G_{s,a}\| \le B_{\varphi,\bA}\:$ and $\:\bE_{\thetar}[r(s,a)]\in [0,1]\:$ for all $(s,a)$.
    \item noise $\xi_k \sim \cN(0,x_k (\pG)^{-1})\:$ satisfies $\:x_k \ge \left(H\sqrt{\frac{\betap \betap(K,\delta)}{\alphap\alphar}}+\frac{\sqrt{\betar\betar(K,\delta)\min\{1,\frac{\alphap}{\alphar}\}}}{2 \alphar}\right)^2 \propto d H^2$,
\end{enumerate}
then for all $\delta \in (0,1]$, with probability at least $1-7\delta$,
% \begin{align*}
%     \mathcal{R}(K) &\le 2 H \sqrt{\frac{2\betap}{\alphap} \betap(N,\delta) N \gamma_{N}^\texttt{p}} + 2H\sqrt{2 N \log(1/\delta)} +\left(\sqrt{\frac{\betar(n,\delta)}{2\alphar}}+c\sqrt{x_K d\log(d K /\delta)}\right)\\
%     & \Bigg[ \frac{3d \betar}{\log(2)}\log\left(1+ \frac{\alpha\|\bA\|_2^2 B_{\varphi,\bA}^2}{\eta \log(2)}\right) \sqrt{H d\log(1+\alpha \eta^{-1} B_{\varphi, \bA}H)} \\
%     &+\betar \sqrt{K H d \log \left(1+\alphar \eta^{-1} B_{\varphi, \mathbb{A}} n\right)} \Bigg] + \sqrt{2K H d \log \left(1+\alphar \eta^{-1} B_{\varphi, \mathbb{A}} n\right) \log(1/\delta)}\\
%     &+ \frac{\betar}{\phi(-1)}(\sqrt{x_K d}+\sqrt{x_K d \log(d/\delta)}) \Bigg[ \sqrt{K H d \log \left(1+\alphar \eta^{-1} B_{\varphi, \mathbb{A}} n\right)\log(e/\delta^2)} \\
%     &+ \sqrt{\frac{3d}{\log(2)}\log\left(1+ \frac{\alphar\|\bA\|_2^2 B_{\varphi,\bA}^2}{\eta \log(2)}\right) \left(1+\frac{\alphar B_{\varphi, A} H}{\eta}\right) H d\log(1+\alphar \eta^{-1} B_{\varphi, \bA}H)} \Bigg]
% \end{align*}
% rewriting:
\begin{align*}
    \mathcal{R}(K) &\le \sqrt{K H}  \Bigg[\underbrace{2 H \left(\sqrt{\frac{2\betap}{\alphap} \betap(K,\delta) \gamma_{K}^\texttt{p}}\! +\! (1\!+\!\sqrt{\gamma^\texttt{r}_K})\sqrt{\log(1/\delta^2)}\right)}_{\text{Transition concentration} ~\approx~ d H} + \underbrace{\betar\sqrt{\frac{\betar(n,\delta)\gamma^\texttt{r}_K}{2\alphar}}}_{\text{Reward concentration}~\approx~d}\\
    &\qquad\qquad+ \underbrace{c\betar\sqrt{x_K d \gamma^\texttt{r}_K\log(d K /\delta)} + \frac{\betar\sqrt{x_K d \gamma^\texttt{r}_K \log(e/\delta^2)}}{\Phi(-1)}(1\!+\!\sqrt{\log(d/\delta)})}_{\text{Noise concentration}~\approx~ d^{3/2} H} \Bigg] \\%concentration%
    &+ \sqrt{H \gamma^\texttt{r}_K} \Bigg[\underbrace{\betar C_d\left(\!\sqrt{\frac{\betar(K,\delta)}{2\alphar}}\!+c\sqrt{x_K d\log(d K /\delta)}\right) }_{\text{Estimation error for no clipping}~\approx~ d H}\\
    &\qquad\qquad
    + \underbrace{\!\frac{\betar d\sqrt{x_K}}{\Phi(-1)}(1\!+\!\sqrt{\log(d/\delta)}) \sqrt{C_d \left(\!1\!+\!\frac{\alphar B_{\varphi, A} H}{\eta}\!\right)}}_{\text{Learning error for no clipping}~ \approx ~(d H)^{3/2}} \!\Bigg]\,,  %no clipping%
\end{align*}
where for $\texttt{i}\in [\texttt{p}, \texttt{r}]$, $\beta^\texttt{i}(K,\delta)\triangleq \frac{\eta}{2}B_{\bA}^2+\gamma_K^\texttt{i}+\log(1/\delta)$, and $\gamma_K^\texttt{i}\triangleq d\log(1+\frac{\beta^\texttt{i}}{\eta}B_{\varphi,\bA}H K)$. Also, $C_d \triangleq\frac{3d}{\log(2)}\log\left(1+ \frac{\alphar\|\bA\|_2^2 B_{\varphi,\bA}^2}{\eta \log(2)}\right) $, $\Phi$ is the Gaussian CDF, and $c$ is a universal constant.

\end{theorem}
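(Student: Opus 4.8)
The plan is to bound the regret via the standard \RLSVI decomposition into an optimism term and an estimation term, and then to control each using the three tools advertised in the introduction. Fix an episode $k$, let $\pi^k$ be the greedy policy, and let $\tilde V_1^k \eqdef \tV{1}^{\pi^k}$ denote the value function returned by Algorithm~\ref{algo:Bellman} under the estimated transition $\thetahp(k)$ and the perturbed reward $\thetatr(k)=\thetahr(k)+\xi_k$. Writing $V^\star\eqdef\V{1}^{\pi^\star}$ and $V^{\pi^k}\eqdef\V{1}^{\pi^k}$, I decompose each per-episode regret as
\[
V^\star(s_1^k) - V^{\pi^k}(s_1^k) = \underbrace{\left(V^\star(s_1^k) - \tilde V_1^k(s_1^k)\right)}_{\text{optimism gap}} + \underbrace{\left(\tilde V_1^k(s_1^k) - V^{\pi^k}(s_1^k)\right)}_{\text{estimation gap}},
\]
and treat the two gaps separately before summing over $k$.

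For the estimation gap I would apply the on-policy value-difference (telescoping) identity to express $\tilde V_1^k - V^{\pi^k}$ as a sum over $h\in[H]$ of per-step Bellman errors $\big(\bE^{\thetatr}_{s_h,a_h}[r]-\bE^{\thetar}_{s_h,a_h}[r]\big) + \int \big(\bP_{\thetahp}-\bP_{\thetap}\big)(s'\mid s_h,a_h)\,\tilde V_{h+1}^k(s')\dd s'$ along the executed trajectory, plus a bounded martingale handled by Azuma--Hoeffding. The reward term splits into the ML estimation error $\thetahr(k)-\thetar$ and the injected noise $\xi_k$; for the transition term I deploy the \textbf{transportation inequality} (Tool~1): since the relevant value function has range $O(H)$, the signed transition difference is controlled by $H$ times a KL-type parameter gap $\|\thetahp(k)-\thetap\|$ weighted by $G_{s_h,a_h}$, which avoids the variance accumulation of the simulation lemma and saves the $\sqrt H$ over~\cite{ren2021free}. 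I would then convert these gaps into the confidence radii $\betap(K,\delta),\betar(K,\delta)$ via self-normalized concentration for the penalized MLE of the exponential family (Assumption~\ref{ass:bounds_hessians} controls the Hessians of the log-normalizers, hence the strong convexity driving concentration). Each per-step bound then has the form $\|\theta-\hat\theta\|_{\bar G}\,\|\varphi(s_h^k,a_h^k)\|_{\bar G^{-1}}$; Cauchy--Schwarz followed by the \textbf{improved elliptical potential lemma} (Tool~2), which exploits boundedness of the \emph{true} value functions, produces the information-gain terms $\gamma_K^{\texttt p},\gamma_K^{\texttt r}$ and the $\sqrt{KH}$ prefactors, assembling the transition- and reward-concentration terms.

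For the optimism gap, the perturbation $\xi_k\sim\cN(0,x_k(\pG)^{-1})$ is the engine. I would show that the calibrated variance $x_k\propto dH^2$ forces the injected noise to dominate the estimation error, so that $\tilde V_1^k(s_1^k)\ge V^\star(s_1^k)$ holds with probability at least $\Phi(-1)$ by Gaussian \emph{anti}-concentration---obtained explicitly here rather than assumed. A stochastic-optimism argument then bounds the expected optimism gap by $1/\Phi(-1)$ times the noise-induced fluctuation of $\tilde V_1^k$, which explains the $1/\Phi(-1)$ factors in the noise-concentration and learning-error terms. Because the reward enters linearly, this fluctuation is a Gaussian linear functional of $\xi_k$, bounded by $\sqrt{x_K d\log(dK/\delta)}$ via a tail bound together with a union argument over the $d$ coordinates.

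The main obstacle---and where Tool~3 enters---is controlling these fluctuations \emph{without clipping}. Since $\tilde V$ can exceed $H$ exactly when $\|\varphi(s_h^k,a_h^k)\|_{\bar G^{-1}}$ is large, the naive fluctuation bound diverges; the key observation is that this norm can exceed a fixed threshold only $O(d\log(\cdot))$ times (an elliptical-potential counting argument), so the contribution of the rare large-feature episodes is summable and the extra $\sqrt d$ incurred by the clipping of~\cite{zanette2020frequentist} is removed, yielding the $C_d\approx d$ factor. The delicate point is interleaving this counting with the calibration of $x_k$, so that the noise is simultaneously large enough to guarantee anti-concentration/optimism yet controlled enough to remain summable over the large-feature episodes. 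The final bound follows by collecting all the high-probability events through a union bound (the $7\delta$), grouping the $K$-dependent concentration terms under the $\sqrt{KH}$ prefactor and the $K$-independent estimation/learning terms under $\sqrt{H\gamma_K^{\texttt r}}$.
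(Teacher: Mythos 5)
Your proposal follows essentially the same route as the paper's proof: the identical learning/estimation decomposition, the transportation inequality replacing the simulation lemma, penalized-MLE concentration plus the improved elliptical lemma under Cauchy--Schwarz, Gaussian anti-concentration with calibrated $x_k$ for constant-probability optimism and the resulting $1/\Phi(-1)$ factor via conditioning on the optimism event, and the worst-case elliptical-potential counting of large-feature (``bad'') rounds that yields $C_d$ and removes clipping. The only cosmetic difference is that you telescope reward and transition Bellman errors jointly before splitting, whereas the paper first separates the transition- and reward-parameter errors at the value-function level; the resulting terms and lemmas are the same.
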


% \begin{align*}
%     \mathcal{R}(K) &\le (\text{C}) +  (\text{O}) +  (\text{A})\\
%     \end{align*}
    
% \begin{align*}
%  (\text{C}) &=   2 H \sqrt{\frac{2\betap}{\alphap} \betap(N,\delta) N \gamma_{N}^\texttt{p}} + 2H\sqrt{2 N \log(1/\delta)} +\left(\sqrt{\frac{\betar(n,\delta)}{2\alphar}}+c\sqrt{x_K d\log(d K /\delta)}\right)\\
%   (\text{O}) &=    
%     \Bigg[ A \betar  \sqrt{D} 
%     +\betar \sqrt{K D} \Bigg] + \sqrt{2K D \log(1/\delta)}\\
% (\text{A}) &=  
%     \frac{\betar}{\phi(-1)}(\sqrt{x_K d}+\sqrt{x_K d \log(d/\delta)}) \Bigg[ \sqrt{K D\log(e/\delta^2)} 
%     + \sqrt{A \left(1+\frac{\alphar B_{\varphi, A} H}{\eta}\right)D} \Bigg]
% \end{align*}

% where we introduced $D= Hd\log(1+\alphar \eta^{-1} B_{\varphi, \bA}H)$ and 

Theorem~\ref{thm:regret_bound} entails a regret $\mathcal{R}(K) = \bigO(\sqrt{d^3 H^3 K})$ for \algo, where $d$ is the number of parameters of the bilinear exponential family model, $K$ is the number of episodes, and $H$ is the horizon of an episode. We now clarify how this contrasts with related literature.

\textit{Comparison with other bounds.} The closest work to ours is~\cite{chowdhury2021reinforcement} as it considers the same model for transitions but with known rewards. They propose a \texttt{UCRL}-type and \texttt{PSRL}-type algorithm, which achieve a regret of order $\widetilde{O}(\sqrt{d^2 H^4 K})$. There are two notable algorithmic differences with our work. First, they do exploration using intractable-optimistic upper bounds or high-dimensional posteriors, while we do it with explicit perturbation. The second difference is in planning. While they assume access to a planning oracle, we do it explicitly with pseudo-polynomial complexity (Section~\ref{sec:planning}). Moreover, we improve the regret bound by a $\sqrt{H}$ factor thanks to an improved analysis, (\cf Lemma~\ref{lem:elliptical}). But similar to all \RLSVI-type algorithms, we pick up an extra $\sqrt{d}$ (\cf~\citep{abeille2017linear}).

\cite{zanette2020frequentist} proposes a variant of \RLSVI for continuous state-action spaces, where there are low-rank models of transitions and rewards. They show a regret bound $R(K) = \widetilde{O}(\sqrt{d^4 H^5 K})$, which is larger than that of \algo by $O(\sqrt{d H^2})$. In algorithm design, we improve on their work by removing the need to carefully clip the value function. Analytically, our model allows us to use transportation inequalities (\cf Lemma~\ref{lem:transportation}) instead of the simulation lemma, which saves us a $\sqrt{H}$ factor.

\cite{ren2021free} considers Gaussian transitions, i.e. $s^{\prime}=f^{*}(s, a)+\epsilon$ such that $\epsilon \sim \mathcal{N}\left(0, \sigma^{2}\right)$. This is a particular case of our model. They propose to use Thompson Sampling, and have the merit of being the first to have observed linearity of the value function from this transition structure. But they do not connect it to the finite dimensional approximation of~\cite{rahimi2007random} unlike us (Section~\ref{sec:building_blocks}). Finally, they show a Bayesian regret bound of $O(\sqrt{d^2 H^3 K})$. This notion of regret is weaker than frequentist regret, hence this result is not directly comparable with Theorem~\ref{thm:regret_bound}.

\textit{Tightness of regret bound.} A lower bound for episodic RL with continuous state-action spaces is still missing. However, for tabular RL, \citep{domingues2021episodic} proves a lower bound of order $\Omega(\sqrt{H^3 S A K})$. If we represent a tabular MDP in our model, we would need $d = S^2 \times A$ parameters (Section 4.3, \citep{chowdhury2021reinforcement}). In this case, our bound becomes $R(K) = O(\sqrt{(S^2 A)^3 H^3 K})$, which is clearly not tight is $S$ and $A$. This is understandable due to the relative generality of our setting. We are however positively surprised that \textbf{our bound is tight in terms of its dependence on $H$ and $K$.}

%%------------- End of sec III: algorithm ---------------
% \section{Intuition and sketch of proof}
% \label{sec:intuition_proof_outline}

% Here we first present some necessary details about the statement of \algo and discuss algorithmic choices and intuitions. Later, we present a sketch of proof for Theorem~\ref{thm:regret_bound}.

\section{Algorithm design: building blocks of \algo}\label{sec:building_blocks}

We present necessary details about \algo and discuss the key algorithm design techniques.

\noindent\textbf{Estimation of parameters.}\label{sec:estimation} We estimate transitions and rewards from observations similar to \texttt{EXP-UCRL}~\cite{chowdhury2021reinforcement}, \ie by using a penalized maximum likelihood estimator 
\begin{equation*}
    \thetahp(k) \in \argmin _{\theta \in \bR^{d}} \sum_{t=1}^{k} \sum_{h=1}^H -\log \bP_{\theta}\left(s_{h+1}^t \mid s_{h}^t, a_{h}^t\right)+\eta \operatorname{pen} (\theta).
\end{equation*}
Here, pen$(\theta)$ is a trace-norm penalty: $\operatorname{pen}(\theta) = \frac{1}{2}\|\theta\|_{\mathbb{A}}$ and $\bA=(\operatorname{tr}(A_i A_j^\top))_{i,j}$. By properties of the exponential family, the penalized maximum likelihood estimator verifies, for all $i\le d$:
\begin{equation}
    \label{eq:ML_transition}
    \sum_{t=1}^{k}\sum_{h=1}^H \left(\psi\left(s_{h+1}^{t}\right)-\mathbb{E}_{s_{h}^t, a_{h}^t}^{\thetahp_{k}}\left[\psi\left(s^{\prime}\right)\right]\right)^{\top} A_{i} \varphi\left(s_{h}^t, a_{h}^t\right)=\eta \nabla_{i} \operatorname{pen}\left(\thetahp_{k}\right).
\end{equation}
Equation~\eqref{eq:ML_transition} can be solved in closed form for simple distributions, like Gaussian, but it can involve integral approximations for other distribution (\cf Appendix~\ref{app:MLe}). We estimate the parameter for reward, \ie $\theta_r$, similarly
\begin{align}
    \thetahr(k) \in \argmin _{\theta \in \bR^{d}} \sum_{t=1}^{k}\sum_{h=1}^H -\log \bP_{\theta}\left(r_{t} \mid s_{h}^t, a_{h}^t\right) &+\eta \operatorname{pen}(\theta),\\
    \implies \qquad \sum_{t=1}^{k}\sum_{h=1}^H \left(r_t -\mathbb{E}_{s_{h}^t, a_{h}^t}^{\thetahr_{k}}\left[r\right]\right)\: B^{\top} A_{i} \varphi\left(s_{h}^t, a_{h}^t\right) =\eta &\nabla_{i} \operatorname{pen}\left(\thetahr_{k}\right)\quad \forall i \in [d].\label{eq:ML_reward}
\end{align}
\noindent\textbf{Exploration.}\label{sec:exploration}
A significant challenge in RL is handling exploration in continuous spaces. The majority of the literature is split between intractable, upper confidence bound-style optimism or Thompson sampling algorithms with high-dimensional posterior and guarantees only in terms of Bayesian regret. In \algo, we adopt the approach of reward perturbation motivated by the \RLSVI-framework~\cite{zanette2020frequentist,osband2016generalization}. We show that perturbing the reward estimation can guarantee optimism with a constant probability, \ie there exists $\nu \in (0,1]$ such that for all $k\in [K]$ and $s_1^k \in \cS$,
\begin{equation*}
    \prob{\tilde{V}_1(s_1^k) - V_1^\star(s_1^k)\ge0}\ge \nu.
\end{equation*} 
\cite{zanette2020frequentist} proves that this suffices to bound the learning error. However, their method clashes with not clipping the value function, as it modifies the probability of optimism. Thus, \cite{zanette2020frequentist} proposes an involved clipping procedure to handle the issue of unstable values. Instead, by careful geometric analysis (\cf Lemma~\ref{lem:worst_case_elliptical}), we bound the occurrences of the unstable values, and in turn, upper bound the regret without clipping. Note that unlike~\citep{ishfaq2021randomized}, \algo does not guarantee that the estimated value function is optimistic but still is able to control the learning error (\cf Section~\ref{sec:proof_outline}).

\noindent\textbf{Planning.}\label{sec:planning}
Recall that with our model assumptions, we can write the state-action value function linearly. Using \algo, we have at step $h$:

\begin{equation}\label{eq:linear_value_function}
    \tQ{h}^\pi (s, a) = \bE_{\thetatr}[r(s,a)] + \bigg\langle \pphi(s,a), \int_{\cS} \pmu(\tilde{s}) \tV{h+1}^\pi (\tilde{s}) d\tilde{s} \bigg\rangle.
\end{equation}

Then, we select the best action greedily using dynamic programming to compute $Q_h(s,a)$. Although our model yields infinite dimensional $\pphi$ and $\ppsi$, approximating them (\cf next paragraph) with linear features of dimension $\bigO(p H^2 K\log(H K))$ is possible without increasing the regret. Thus, the planning is done in $\bigO(p H^3 K\log(H K))$, which is pseudo-polynomial in $p$, $H$ and $K$, \ie tractable.

For details about the finite-dimensional approximation of our transition kernel, refer to Appendix~\ref{app:RFT}. Now, we highlight the schematic of a finite-dimensional approximation of $\pphi$ and $\ppsi$. We proceed in three steps. \textbf{1)} We have with high probability $\bS(V_{\thetahp,\thetatr ,h})\le d H^{3/2}$ (Section~\ref{sec:proof_outline}). \textbf{2)} If we have a uniform $\epsilon$-approximation of $\bP_{\thetap}$, we show that using it incurs at most an extra $\bigO(\epsilon d H^{5/2} K)$ regret. \textbf{3)} Finally, following~\cite{rahimi2007random}, we approximate uniformly the shift invariant kernels, here the RBF in Equation~\eqref{eq:linear_transition}, within $\epsilon$ error and with features of dimensions $\bigO(p \epsilon^{-2}\log\frac{1}{\epsilon^2})$, where $p$ is dimension of $\psi$. Associating these three elements and choosing $\epsilon = 1/\sqrt{(H^2 K)}$, we establish our claim.

\begin{remark}
The observation of linearity (\cf Equation.~\eqref{eq:linear_value_function} and Line~\ref{alg_2:planning}) does not reduce
BEF MDPs to linear MDPs because the former holds in an RKHS. Also, linearity is not in the representation parameter. Therefore, linear RL algorithms do not readily solve the BEF MDPs.
\end{remark}

%%%% End of section: Algorithm design, building blocks of \algo %%%%%%%%%%%%%%%

\section{Theoretical analysis: proof outline}\label{sec:proof_outline}

To convey the novelties in our analysis, we provide a proof sketch for Theorem~\ref{thm:regret_bound}. We start by decomposing the regret into an estimation loss and a learning error, as given below
\begin{equation}
    \label{eq:regret_decomposition_estim_learn}
    R(K) = \sum_{k=1}^K (\V{1}^\star - \V{1}^{\pi_k})(s_{1 k}) = \sum_{k=1}^K (\underbrace{\V{1}^\star - \tV{1}^{\pi_k}}_{learning} + \underbrace{\tV{1}^{\pi_k} - \V{1}^{\pi_k}}_{Estimation})(s_{1 k}).
\end{equation}
For the \textbf{estimation error}, we use smoothness arguments with concentrations of parameters up to some novelties. Regarding the \textbf{learning error}, we show that the injected noise ensures a constant probability of anti-concentration. Applying Assumption~\ref{ass:bounds_hessians} and Lemma~\ref{lem:elliptical} leads to the upper-bound.

\subsection{Bounding the estimation error}
\label{sec:estimation_sketch}

We further decompose the estimation error into the errors in estimating transitions and rewards.
\begin{equation}
    \label{eq:estimation_error_decomposition}
    V_{\thetahp, \thetatr}^\pi(s_{1 k}) -V_{\thetap, \thetar}^\pi(s_{1 k}) = \underbrace{V_{\thetahp, \thetar}^\pi(s_{1 k}) -V_{\thetap, \thetar}^\pi(s_{1 k})}_{\text{transition estimation}} + \underbrace{V_{\thetahp, \thetatr}^\pi(s_{1 k}) - V_{\thetahp, \thetar}^\pi(s_{1 k})}_{\text{reward estimation}}
\end{equation}

\textbf{Transition estimation} Since the reward parameter is exact, the value function's span is $\le H$. Then, using the transportation of Lemma~\ref{lem:transportation} we obtain the bound {\small$H \sum_{h=1}^H \sqrt{2\operatorname{KL}_{s_{h k}, a_{h k}}(\thetap, \thetahp)}$}. We notice that since the reward parameter is exact, the bound is actually {\small$H \min\{1,\sum_{h=1}^H \sqrt{2\operatorname{KL}_{s_{h k}, a_{h k}}(\thetap, \thetahp)}\}$}. Using Lemma~\ref{lem:elliptical} under Assumption~\ref{ass:bounds_hessians}, we win a $\sqrt{H}$ factor compared to the analysis of~\cite{chowdhury2019online}.

\textbf{Reward estimation} Previous work uses clipping to help control this error, but in this case it can hinder the optimism probability by biasing the noise. \cite{zanette2020frequentist} proposes an involved clipping depending on the norms $\|(A_i \varphi(s_h^k,a_h^k))_{i\in[d]}\|_{(\pG)^{-1}}$, which is somewhat delicate to analyze and deploy. We remedy the situation acting solely in the proof. First let's define what we call the set of ``bad rounds'': $\left\{k \in [K], \exists h:\: \|(A_i \varphi(s_h^k,a_h^k))_{i\in[d]}\|_{(\pG)^{-1}}\ge 1\right\}$, these rounds are why clipping is necessary. Thanks to Lemma~\ref{lem:worst_case_elliptical}, we know that the number of such rounds is at most $\bigO(d)$. Surprisingly, it depends neither on $H$ nor on $K$. We show that the ``bad rounds'' incur at most $O(d^{3/2} H^2)$ regret, independent of $K$. Therefore, our algorithm can forgo clipping for free.

\begin{remark}
If it wasn't for the episodic nature of our setting, we could have used the forward algorithm to eliminate the span control issue. We refer to \cite{vovk2001competitive,azoury2001relative} for a description of this algorithm, \cite{ouhamma2021stochastic} for a stochastic analysis, and Section 4 therein for an application to linear bandits.
\end{remark}

\subsection{Bounding the learning error}
To upper-bound this term of the regret, we first show that the estimated value function is optimistic with a constant probability. Then, we show that this is enough to control the learning error.

\textbf{Stochastic optimism.} The perturbation ensures a constant probability of optimism. Specifically,
\begin{align*}
    (\tV{1}-&\V{1}^\star)(s_1) \ge (\tQ{1}^\star-Q_{1}^\star)(s_1,\pi^\star (s_1))\\
    &\ge \underbrace{V_{\thetahp,\thetar}^{\pi^\star} (s_1) - V_{\thetap,\thetar}^{\pi^\star} (s_1)}_{\text{first term}} + \underbrace{V_{\thetahp,\thetahr}^{\pi^\star} (s_1)  - V_{\thetahp,\thetar}^{\pi^\star} (s_1)}_{\text{second term}} + \underbrace{V_{\thetahp,\thetatr}^{\pi^\star} (s_1) - V_{\thetahp,\thetahr}^{\pi^\star} (s_1)}_{\text{third term}}
\end{align*}
The first and second terms are perturbation free, we handle them similarly to the estimation error, \ie using concentration arguments for $\thetahp$ and $\thetahr$. For the third term, we use transportation of rewards (Lemma~\ref{lem:Reward_Transportation}) and anti-concentration of $\xi_k$ (Lemma~\ref{lem:gaussian_anti_concentration}). We find that with probability at least $1-2 \delta$
\begin{align*}
    (\tV{1}-\V{1}^\star)(s_1) \ge& \xi_k ^\top \: \bE_{(\tilde{s}_t)_{t \in [H]}\sim \thetahp \mid s_1^k}\left[\sum_{t=1}^H \frac{\Var^{\thetar_j} (r)}{2} (A_i \varphi(\tilde{s}_t,\pi^\star (\tilde{s}_t)))_{i\in[d]}\right] B\\
    &{\displaystyle - H c(n,\delta) \left\|\sum_{h=1}^H \bE_{(\tilde{s}_t)_{t \in [H]}\sim \thetahp \mid s_1^k} \left[ (A_i \varphi(\tilde{s}_h, \pi^\star (\tilde{s}_h)))_{i\in [d]}\right] \right\|_{(\pG)^{-1}}},
\end{align*}
where {\small $c(n,\delta)\!= \!\left(\!\sqrt{\!\betap \betap(n,\delta)/\alphap}\! +\! \sqrt{\!\betar \betar(n,\delta) \min\{1,\alphap/\alphar\}/(2 \alphar)\!} \!\right)$}. Since $\xi_k \!\sim\! \cN(0,x_k (\pG)^{-1}\!)$ and $x_k \!\ge\! H^2 c(n,\delta)^2$, we get $\prob{\tV{1}^\pi(s_1)-\V{1}^\star(s_1) \ge 0}\ge \Phi(-1)$, where $\Phi$ is the normal CDF. This is ensured by the anti-concentration property of Gaussian random variables, see Lemma~\ref{lem:gaussian_anti_concentration}.

\textbf{From stochastic optimism to error control:} Existing algorithms require the value function to be optimistic (\ie negative learning error) with large probability. Contrary to them, \algo only requires the estimated value to be optimistic with a constant probability. When it is, the learning happens. Otherwise, the policy is still close to a good one thanks to the decreasing estimation error, and the learning still happens. This part of the proof is similar in spirit to that of~\cite{zanette2020frequentist}.

\underline{\textit{Upper bound on $V_1^\star$:}} Draw $(\bar{\xi}_{k})_{k\in [K]}$ i.i.d copies of $(\xi_{k})_{k\in [K]}$ and define the event where optimism holds as $\bar{O}_k \triangleq \{V_{\thetahp,\thetatr_k,1}(s_1^k) - V_1^\star (s_1^k) \ge 0\}$. This implies that $\quad V_1^\star (s_1^k) \le \bE_{\bar{\xi}_k \mid \bar{O}_k }[V_{\thetahp,\thetahr+\bar{\xi}_k,1}(s_1^k)].$
% \begin{align*}
%  V_1^\star (s_1^k) \le \bE_{\bar{\xi}_k \mid \bar{O}_k }[V_{\thetahp,\thetahr+\bar{\xi}_k,1}(s_1^k)].
% \end{align*}

\underline{\textit{Lower bound on $V_{\thetahp,\thetatr}\:$:}} Consider $\underbar{V}_{1}(s_1^k)$ to be a solution of the optimization problem
\begin{equation*}
    \min _{\xi_{ k}} V_{\thetahp, \thetahr+\xi_k,1}(s_{1}^k) \quad \text{ subject to: } \: \|\xi_k\|_{\bar{G}_k} \le \sqrt{x_k d\log(d/\delta)},
\end{equation*}
As the injected noise concentrates, we obtain $\underbar{V}_{1}(s_1^k) \le V_{\thetahp,\thetatr}(s_1^k)$.

\underline{\textit{Combination:}} Using these upper and lower bounds, we show that with probability at least $1-\delta$,
\begin{align*}
    V_1^\star (s_1^k) -& V_{\thetahp,\thetahr+\bar{\xi}_k,1}(s_1^k) \le \bE_{\bar{\xi}_k \mid \bar{O}_k }[V_{\thetahp,\thetahr+\bar{\xi}_k,1}(s_1^k)-\underbar{V}_{1}(s_1^k)]\\
    &\le \left(\bE_{\bar{\xi}_k }[V_{\thetahp,\thetahr+\bar{\xi}_k,1}(s_1^k)-\underbar{V}_{1}(s_1^k)] - \bE_{\bar{\xi}_k \mid \bar{O}_k^\texttt{c} }[V_{\thetahp,\thetahr+\bar{\xi}_k,1}(s_1^k)-\underbar{V}_{1}(s_1^k)]\bP(\bar{O}_k^\texttt{c})\right)/ \bP(\bar{O}_k),
\end{align*}
The last step follows from the tower rule. Note that the term inside the expectations is positive with high probability but not necessarily in expectation. We follow the lines of the estimation error analysis to complete the proof of Theorem~\ref{thm:regret_bound}. We refer to Appendix~\ref{app:learning_error} for the detailed proof.

\section{Related work: functional representation in RL with regret and tractability}\label{sec:related_work}

Our work extends the endeavor of using functional representations for regret minimization in continuous state-action
MDPs. Now, we posit our contributions in existing literature.

\textit{Kernel value function representation.} \cite{pmlr-v119-ayoub20a} studies MDPs with a linear mixtures model then extends to an RKHS setting, this generalizes our work and that of \cite{yang2020reinforcement}. However, the paper proposes an Eluder-dimension analysis, for RKHS settings this leads to the result of \cite{yang2020reinforcement}, \ie a regret $H\log(T)^d$ higher than for \algo.Recently,~\cite{huang2021short} shows that for RKHS, Eluder dimension and the information gain are strictly equivalent, which brings in the extra factor.

\textit{General functional representation.} The Eluder dimension is a complexity measure often used to analyze RL with general function space, \cite{huang2021short} asserts that "common examples of where it is known to be small are function spaces (vector spaces)". \citep{dai2018sbeed} provides the first convergence guarantee for general nonlinear function representations in the Maximum Entropy RL setting, where entropy of a policy is used as a regularizer to induce exploration. Thus, the analysis cannot address episodic RL, where we have to explicitly ensure exploration with optimism. In the episodic setting, \citep{wang2020reinforcement} leverage the UCB approach for tabular MDPs and function spaces with bounded Eluder dimension, this strategy achieves a and achieve a $\tilde{O}\left(\sqrt{d^4 H^2 T}\right)$ regret for linear MDPs. \cite{ishfaq2021randomized} considers the same setting, proposes an \RLSVI based algorithm, and achieves a $\tilde{O}(\sqrt{d^3 H^4 K})$ for linear MDPs. However, the latter assumes an oracle perturbing the estimation to achieve anti-concentration while maintaining a bounded covering number, which is a counter-intuitive mix of boundedness and anti-concentration. Indeed,  \cite{zanette2020frequentist} studied the linear MDP case, and while it managed to design an ingenious clipping verifying previous assumptions, the method is extremely intricate and the proof is involved and unlikely to extend for general value function spaces. \textit{To concertize our design, we focus on the general but explicit BEF of MDPs than any abstract representation. We also remove the requirement to clip with a novel analysis.}

\textit{Bilinear exponential family of MDPs.} Exponential families are studied widely in RL theory, from bandits to MDPs \cite{lu2021low, korda2013thompson,filippi2010parametric,kveton2006solving}, as an expressive parametric family to design theoretically-grounded model-based algorithms. \cite{chowdhury2021reinforcement} first studies episodic RL with Bilinear Exponential Family (BEF) of transitions, which is linear in both state-action pairs and the next-state. It proposes a regularized log-likelihood method to estimate the model parameters, and two optimistic algorithms with upper confidence bounds and posterior sampling. Due to its generality to unifiedly model tabular MDPs, factored MDPs, and linearly controlled dynamical systems, the BEF-family of MDPs has received increasing attention~\citep{li2021exponential}. \cite{li2021exponential} estimates the model parameters based on score matching that enables them to replace regularity assumption on the log-partition function with Fisher-information and assumption on the parameters. Both \citep{chowdhury2021reinforcement,li2021exponential} achieve a worst-case regret of order $\Tilde{O}(\sqrt{d^2 H^4 K})$ for known reward. On a different note, \citep{du2021bilinear,foster2021statistical} also introduces a new structural framework for generalization in RL, called bilinear classes as it requires the Bellman error to be upper bounded by a bilinear form. Instead of using bilinear forms to capture non-linear structures, this class is not identical to BEF class of MDPs, and studying the connection is out of the scope of this paper.
Specifically, \textit{we address the shortcomings of the existing works on BEF-family of MDPs that assume known rewards, absence of RLSVI-type algorithms, and access to oracle planners.} %We address these questions in this paper.

\textit{Tractable planning and linearity.} Planning is a major byproduct of the chosen functional representation. In general, planning can incur high computational complexity if done na\"ively. Specially, \cite{du2019good} shows that for some settings, even with a linear $\epsilon$-approximation of the $Q$-function, a planning procedure able to produce an $\epsilon$-optimal policy has a complexity at least $2^H$. Thus, different works~\citep{shariff2020efficient, lattimore2020learning, van2019comments} propose to leverage different low-dimensional representations of value functions or transitions to perform efficient planning. Here, we take note from~\citep{ren2021free} that Gaussian transitions induce an explicit linear value function in an RKHS. And generalize this observation with the bilinear exponential. Moreover, using uniformly good features~\cite{rahimi2007random} to approximate transition dynamics from our model enables us to design a tractable planner. We provide a detailed discussion of this approximation in Section~\ref{sec:building_blocks}. More practically, \cite{ren2021free,nachum2021provable} use representations given by random Fourier features~\citep{rahimi2007random} to approximate the transition dynamics and provide experiments validating the benefits of this approach for high-dimensional Atari-games. 
\textit{Thus, we propose the first algorithm with tractable planning for BEF-family.}

\vspace*{-.4em}
%%------------- End of sec IV: Intuition & sketch of proof
\section{Conclusion and future work}
We propose the \algo algorithm for the bilinear exponential family of MDPs in the setting of episodic-RL. \algo explores using a Gaussian perturbation of rewards, and plans tractably (complexity of $\bigO(p H^3 K\log(H K))$) thanks to properties of the RBF kernel. Our proof shows that clipping can be forwent for similar \RLSVI-type algorithms. Moreover, we prove a $\sqrt{d^3 H^3 K}$ frequentist regret bound, which improves over existing work, accommodates unknown rewards, and matches the lower bound in terms of $H$ and $K$. 
Regarding future work, we believe that our proof approach can be extended to rewards with bounded variance. We also believe that the extra $\sqrt{d}$ in our bound is an artefact of the proof, and specifically, the anti-concentration. We will investigate it further. Finally, we plan to study the practical efficiency of \algo through experiments on tasks with continuous state-action spaces in an extended version of this work.

\section*{Acknowledgments and Disclosure of Funding}

The authors acknowledge the funding of the French National Research Agency, the French Ministry of Higher Education and Research, Inria, the MEL and the I-Site ULNE regarding project R-PILOTE-19-004-APPRENF. R. Ouhamma also acknowledges support from Ecole polytechnique.

%\newpage
\bibliography{references}
\newpage

\newpage
\appendix

% Appendix table of contents
\part{Appendix}
\parttoc
\newpage

\section{Notations}\label{app:notations}
We dedicate this section to index all the notations used in this paper. Note that every notation is defined when it is introduced as well.

\renewcommand{\arraystretch}{1.5}
\begin{longtable}{l l l}
\caption{Notations}\label{tab:Notation}\\
\hline
 $H$ &$\eqdef$ & number of steps in a given episode\\
 $K$ &$\eqdef$ & number of episodes\\
 $T$ &$\eqdef$ & $K H$, total number of steps\\
 $s_{h}^k$ &$\eqdef$ & state at time $h$ of episode $k$,  denoted $s_h$ when $k$ is clear from context \\
 $a_{h}^k$ &$\eqdef$ & action at time $h$ of episode $k$, denoted $a_h$ when $k$ is clear from context\\
 $r(s,a)$ &$\eqdef$ & realization of the reward in state $s$ under action $a$\\
 $\thetap$ &$\eqdef$ & parameter of the transition distribution, $\in \bR^d$\\
 $\thetar$ &$\eqdef$ & parameter of the reward distribution, $\in \bR^d$\\
 $\theta$ &$\eqdef$ & $\in \bR^d$ denotes either $\thetar$ or $\thetap$, unless stated otherwise\\
 $\hat{\theta}$ &$\eqdef$ & $\theta$ estimator with Maximum Likelihood unless stated otherwise\\
 $\tilde{\theta}$ &$\eqdef$ & $\hat{\theta}+\xi$ where $\xi$ is a chosen noise. Perturbed estimation of $\theta$.\\
 $[\theta_1, \theta_2]$ &$\eqdef$ & the d-dimensional $\ell_\infty$ hypercube joining $\theta_1$ and $\theta_2$ \\
 $\bP_{\thetap}$ &$\eqdef$ & transition under the exponential family model with parameter $\thetap$\\
 $\psi$ &$\eqdef$ & feature function, $\in (\bR_+^p)^\cS$\\
 $\varphi$ &$\eqdef$ & feature function, $\in (\bR_+^q)^{\cS\times\cA}$\\
 $B$ &$\eqdef$ & $p$-dimensional vector\\
 $M_\theta$ &$\eqdef$ & $\sum_{i=1}^d \theta_i A_i$, where $A_i$ are $p\times q$ matrices.\\
 $\rz$ &$\eqdef$ & the rewards' log partition function\\
 $\pz$ &$\eqdef$ & the transitions' log partition function\\
 $\mathcal{H}$ &$\eqdef$ & Hilbert space where we decompose transitions\\
 $\pmu$ &$\eqdef$ & feature function after decomposition, $\in (\bR_+)^{\cS \times \mathcal{H}}$\\
 $\pphi$ &$\eqdef$ & feature function after decomposition, $\in (\bR_+)^{\cS \times \cA \times \mathcal{H}}$ \\
 $G_{s, a}$ &$\eqdef$ & $\left(\varphi(s, a)^{\top} A_{i}^{\top} A_{j} \varphi(s, a)\right)_{i, j \in [d]}$ \\
 $\bar{G}_k^\texttt{r}$ &$\eqdef$ & $ \bar{G}_{(k-1)h}^\texttt{r} = \frac{\eta}{\alphar}\bA + \sum_{\tau=1}^{k-1} \sum_{h=1}^H G_{s_h^\tau , a_h^\tau}$ \\
 $\bar{G}_k^\texttt{p}$ &$\eqdef$ & $\bar{G}_{(k-1)h}^\texttt{p} = \frac{\eta}{\alphap}\bA + \sum_{\tau=1}^{k-1} \sum_{h=1}^H G_{s_h^\tau , a_h^\tau}$ \\
 $\mathbb{C}_{s, a}^{\theta}\left[\psi\left(s^{\prime}\right)\right]$ &$\eqdef$ & $\mathbb{E}_{s, a}^{\theta}\left[\psi\left(s^{\prime}\right) \psi\left(s^{\prime}\right)^{\top}\right]-\mathbb{E}_{s, a}^{\theta}\left[\psi\left(s^{\prime}\right)\right] \mathbb{E}_{s, a}^{\theta}\left[\psi\left(s^{\prime}\right)^{\top}\right]$ \\
 $\betap$ &$\eqdef$ & $\sup _{\theta, s, a} \lambda_{\max }\left(\mathbb{C}_{s, a}^{\theta}\left[\psi\left(s^{\prime}\right)\right]\right)$ linked to the maximum eigenvalue of $\nabla^2 \pz$\\
 $\alphap$ &$\eqdef$ & $\inf _{\theta, s, a} \lambda_{\max }\left(\mathbb{C}_{s, a}^{\theta}\left[\psi\left(s^{\prime}\right)\right]\right)$ linked to the minimum eigenvalue of $\nabla^2 \pz$ \\
 $\betar$ &$\eqdef$ & $\lambda_{\max }\left(B B^\top \right) \sup _{\theta, s, a}  \Var_{s,a}^\theta(r)$, linked to the maximum eigenvalue of $\nabla^2 \rz$ \\
 $\alphar$ &$\eqdef$ & $\lambda_{\min }\left(B B^\top \right) \inf _{\theta, s, a}  \Var_{s,a}^\theta(r)$, linked to the minimum eigenvalue of $\nabla^2 \rz$ \\
%  &$\eqdef$ & \\
%  &$\eqdef$ & \\
 \bottomrule
\end{longtable}

\newpage
\section{Regret analysis}\label{app:regret_analysis}

We provide a high probability analysis of the regret of \algo under standard regularity assumptions of the representation. First we recall the regret definition then we separate the perturbation error from the statistical estimation:
\begin{align*}
    \cR (K) = \sum_{k=1}^K (\V{1}^\star - \V{1}^{\pi_k})(s_{1}^k) = \sum_{k=1}^K \Big(\underbrace{\V{1}^\star - \tV{1}^{\pi_k}}_{learning} + \underbrace{\tV{1}^{\pi_k} - \V{1}^{\pi_k}}_{Estimation}\Big)(s_{1}^k)
\end{align*}

% \ro{remove below}
% \textbf{Remark (injected noise)} in all the proof and before the final explicit regret bound, we analyse our algorithm where the injected noise at episode $k$ is given by:
% \begin{equation*}
%     \xi_k \sim \cN(0, c_k \: G_{k}),
% \end{equation*}
% where $G_{k}:=\sum_{\tau=1}^{k-1} \sum_{h=1}^{H} G_{s_{s}^{\tau}, a_{h}^{\tau}}$, $(c_k)_{k\le K}$ are constants to be defined later.

% \textbf{TODO:}\begin{itemize}
%     \item add § before every proof: 1) order of bound and 2) proof outline  
%     \item all passages should be clear
%     \item add anti-concentration lemma
%     \item format lemmas in Appendix 
%     \item add proper reference to martingale argument
% \end{itemize}

\subsection{Estimation error}
To show that the estimation error $\big(\sum_{k=1}^K \tV{1} - \V{1}^{\pi_k}\big)$ can be controlled, we decompose it to an error that comes from the estimation of the transition parameter and one that comes from the estimation of the reward parameter:

\begin{equation*}
    % \label{eq:estimation_error_decomposition}
    V_{\thetahp, \thetatr}^\pi( s_{1}^k) -V_{\thetap, \thetar}^\pi(s_{1}^k) = \underbrace{V_{\thetahp, \thetar}^\pi(s_{1}^k) -V_{\thetap, \thetar}^\pi(s_{1}^k)}_{\text{transition estimation}} + \underbrace{V_{\thetahp, \thetatr}^\pi(s_{1}^k) - V_{\thetahp, \thetar}^\pi( s_{1}^k)}_{\text{reward estimation}},
\end{equation*}
we control each term separately in Section~\ref{app:transition_estimation} and Section~\ref{app:reward_estimation}. Therefore, we obtain the following lemma controlling the estimation error.

\begin{lemma}
    \label{lem:estimation_error}
    The estimation error satisfies, with probability at least $1-5\delta$
    \begin{align*}
        \sum_{k=1}^K &\tV{1}(s_{1}^{k}) -\V{1}^\pi(s_{1}^{k}) \le 2 H \sqrt{\frac{2\betap}{\alphap} \betap(N,\delta) N \gamma_{K}^\texttt{p}} + 2H\sqrt{2 N \log(1/\delta)}\\
        +& \left[ \sqrt{K H d \log \left(1+\alphar \eta^{-1} B_{\varphi, \mathbb{A}} n\right)} + C_d \sqrt{H d\log(1+\alpha \eta^{-1} B_{\varphi, \bA}H)} \right] \times \Bigg( \sqrt{ \frac{\betar(n,\delta) }{2\alphar}}\\
        +& c\sqrt{(\max_{k} x_k)d\log(d K /\delta)}\Bigg) \betar + \sqrt{2K H d \log \left(1+\alphar \eta^{-1} B_{\varphi, \mathbb{A}} n\right) \log(1/\delta)}
    %&+\! \left(\!\sqrt{\frac{\betar(n,\delta)}{2\alphar}}\!+\!c\sqrt{\!(\max_{k} x_k)\log(d K /\delta)}\!\right)\!\! \betar\! (1\! +\! \sqrt{2 H})\! \sqrt{\!KH\left(\!\!1\!+\!\frac{\alphar B_{\varphi, \mathbb{A}} H}{\eta}\!\right)\! d \log \left(1\!+\!\alphar \eta^{-1}\! B_{\varphi, \mathbb{A}} n\right)}
    \end{align*}
    where for $\texttt{i}\in [\texttt{p}, \texttt{r}]$, $\beta^\texttt{i}(K,\delta)\triangleq \frac{\eta}{2}B_{\bA}^2+\gamma_K^\texttt{i}+\log(1/\delta)$, and $\gamma_K^\texttt{i}\triangleq d\log(1+\frac{\beta^\texttt{i}}{\eta}B_{\varphi,\bA}H K)$. Also, $C_d \triangleq\frac{3d}{\log(2)}\log\left(1+ \frac{\alphar\|\bA\|_2^2 B_{\varphi,\bA}^2}{\eta \log(2)}\right) $, and $c$ is a universal constant.
\end{lemma}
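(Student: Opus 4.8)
The plan is to prove the bound by following the displayed decomposition into a transition-estimation term and a reward-estimation term, summing each over the $K$ episodes and carrying the required concentration events through a union bound. Throughout, the key structural fact I would exploit is that the \emph{true} reward parameter keeps the relevant value functions bounded in span by $H$; this is precisely what makes the transportation inequalities (Lemma~\ref{lem:transportation}) pay off and saves a $\sqrt{H}$ relative to a naive simulation-lemma argument.

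For the transition-estimation term $\sum_k (V_{\thetahp,\thetar}^{\pi_k}-V_{\thetap,\thetar}^{\pi_k})(s_1^k)$, I would first apply Lemma~\ref{lem:transportation}: since the reward parameter is exact the span is at most $H$, so the per-episode difference is bounded by $H\sum_{h=1}^H\sqrt{2\,\operatorname{KL}_{s_h^k,a_h^k}(\thetap,\thetahp)}$, and in fact by $H\min\{1,\sum_{h=1}^H\sqrt{2\,\operatorname{KL}_{s_h^k,a_h^k}(\thetap,\thetahp)}\}$ because the whole value difference is itself $\le H$. Under Assumption~\ref{ass:bounds_hessians} the KL divergence is comparable to the squared feature norm in the metric $(\pG)^{-1}$, so I would convert the sum into an elliptical potential and apply the improved elliptical Lemma~\ref{lem:elliptical}; exploiting the $\min\{1,\cdot\}$ truncation inside the potential is exactly where one gains the extra $\sqrt{H}$ over~\cite{chowdhury2019online}. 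Combined with the penalized-MLE concentration of $\thetahp$ (controlled by $\betap(N,\delta)$), this produces the $2H\sqrt{\tfrac{2\betap}{\alphap}\betap(N,\delta)N\gamma_K^\texttt{p}}+2H\sqrt{2N\log(1/\delta)}$ contribution.

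The reward-estimation term $\sum_k (V_{\thetahp,\thetatr}^{\pi_k}-V_{\thetahp,\thetar}^{\pi_k})(s_1^k)$ is the crux and the main obstacle, because the perturbed parameter $\thetatr=\thetahr+\xi_k$ no longer keeps the span bounded, which is what forced earlier work to clip. I would use a reward transportation argument (in the spirit of Lemma~\ref{lem:Reward_Transportation}) to reduce the difference to an inner product of $\thetatr-\thetar$ against the summed features $(A_i\varphi(s_h^k,a_h^k))_i$, and then split $\thetatr-\thetar$ into the estimation gap $\thetahr-\thetar$ (concentrated via $\betar(n,\delta)$) and the injected noise $\xi_k$ (concentrated with high probability at radius $\sqrt{x_k d\log(dK/\delta)}$). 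The delicate point is the feature norm $\|(A_i\varphi(s_h^k,a_h^k))_i\|_{(\pG)^{-1}}$: on the ``good'' rounds it is $\le 1$ and a standard elliptical-potential summation applies, while on the ``bad'' rounds it can be large. Instead of clipping, I would invoke Lemma~\ref{lem:worst_case_elliptical} to show that the number of bad rounds is $\bigO(d)$ (captured by the constant $C_d$), independent of $K$ and $H$, and bound their total contribution crudely by $\bigO(d^{3/2}H^2)$; this is what lets the algorithm forgo clipping ``for free'' and is the source of the $C_d\sqrt{Hd\log(1+\alphar\eta^{-1}B_{\varphi,\bA}H)}$ factor.

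Finally I would assemble the two contributions and pass the MLE-concentration events for $\thetahp$ and $\thetahr$ together with the noise-concentration event for $(\xi_k)_k$ through a union bound to obtain the stated $1-5\delta$ probability. The routine part is the repeated Cauchy--Schwarz in the $(\pG)^{-1}$ metric and the log-determinant bookkeeping that turns elliptical potentials into the information-gain quantities $\gamma_K^\texttt{p},\gamma_K^\texttt{r}$; the genuinely new step, and the one I expect to require the most care, is the bad-rounds argument via Lemma~\ref{lem:worst_case_elliptical}, since it replaces the intricate clipping of~\cite{zanette2020frequentist}.
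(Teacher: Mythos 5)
Your overall architecture is the same as the paper's: the paper proves this lemma by combining a transition-estimation bound (Lemma~\ref{lem:transition_estimation_bound}, probability $1-2\delta$) and a reward-estimation bound (Lemma~\ref{lem:reward_estimation_bound}, probability $1-3\delta$) via a union bound, and inside those it uses exactly the ingredients you list: transportation with span $\le H$, the $\min\{1,\cdot\}$ truncation, Assumption~\ref{ass:bounds_hessians}, the elliptical Lemma~\ref{lem:elliptical}, the bad-rounds count $C_d$ from Lemma~\ref{lem:worst_case_elliptical}, and the concentrations of $\thetahp$, $\thetahr$ and $\xi_k$.

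There is, however, a genuine gap in how you pass from the transportation inequalities to the elliptical potentials: transportation produces \emph{expectations}, not quantities evaluated at the observed pairs $(s_h^k,a_h^k)$, and bridging the two requires a martingale argument that your plan omits in both halves. For the transition term, the paper first performs a one-step Bellman decomposition of $V_{\thetahp,\thetar,1}^\pi - V_{\thetap,\thetar,1}^\pi$ into per-step expectation gaps plus martingale residuals $\zeta_{hk}$ controlled by Azuma--Hoeffding; this is both what makes the KL terms appear at the observed $(s_h^k,a_h^k)$ (your claimed per-episode bound $H\sum_h\sqrt{2\operatorname{KL}_{s_h^k,a_h^k}(\thetap,\thetahp)}$ relates a deterministic quantity to a random one and does not follow from Lemma~\ref{lem:transportation} alone) and what generates the $2H\sqrt{2N\log(1/\delta)}$ term, which you listed but attributed to the MLE concentration. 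The same omission is more serious on the reward side: Lemma~\ref{lem:Reward_Transportation} yields feature sums along trajectories drawn from $\thetahp$ in expectation, i.e. $\bE_{(\tilde{s}_h)\sim\pi\mid\thetahp}\bigl[\sum_h\|(A_i\varphi(\tilde{s}_h,\pi(\tilde{s}_h)))_i\|_{(\rG)^{-1}}\bigr]$, not the realized features you wrote, while the elliptical-potential summation only applies to realized features. The paper bridges this with a second martingale sequence $\zeta'_k$ plus Azuma--Hoeffding (the source of the $\sqrt{2KHd\log(1+\alphar\eta^{-1}B_{\varphi,\bA}n)\log(1/\delta)}$ term in the statement), and explicitly remarks that using the reward transportation without this martingale step leads to \emph{linear} regret. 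So your "standard elliptical-potential summation applies" step on the good rounds would fail as written; inserting the two martingale/Azuma arguments (whose failure probabilities are precisely part of the $5\delta$ count) repairs the proof and recovers the paper's argument.
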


\begin{proof}
It follows directly by combining Lemma~\ref{lem:transition_estimation_bound} and Lemma~\ref{lem:reward_estimation_bound} using a union bound.
\end{proof}

\subsubsection{Transition estimation}\label{app:transition_estimation}
The goal of this section is to prove the following lemma which bounds the regret due to transition estimation.

\begin{lemma}
\label{lem:transition_estimation_bound}
We have, with probability at least $1-2\delta$
\begin{equation*}
    % \label{eq:24}
    \sum_{k=1}^K V_{\thetahp, \thetar}(s_{1}^{k}) -V_{\thetap, \thetar}^\pi(s_{1}^{k}) \le 
    2 H \sqrt{\frac{2\betap}{\alphap} \betap(N,\delta) N \gamma_{K}^\texttt{p}} + 2H\sqrt{2 N \log(1/\delta)}
\end{equation*}
where $\gamma_{K}^\texttt{p}:=d \log \left(1+\betap \eta^{-1} B_{\varphi, \mathbb{A}} H K\right)$, and $\betap(K,\delta)\triangleq \frac{\eta}{2}B_{\bA}^2+\gamma_K^\texttt{p}+\log(1/\delta)$.
\end{lemma}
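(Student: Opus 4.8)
The plan is to express the per-episode transition error as a telescoping sum along the realised trajectory, bound each step with a transportation inequality, and aggregate the resulting divergences by an elliptical-potential argument. I would fix an episode $k$, abbreviate $\hat V_h \eqdef V_{\thetahp,\thetar,h}^{\pi_k}$ and $V_h \eqdef V_{\thetap,\thetar,h}^{\pi_k}$, and exploit that both obey the Bellman recursion with the \emph{same} (exact) reward parameter $\thetar$ but the two kernels $\bP_{\thetahp}$ and $\bP_{\thetap}$. Subtracting the recursions produces at each step a transition-mismatch term $(\bP_{\thetahp}-\bP_{\thetap})\hat V_{h+1}$ plus $\bP_{\thetap}(\hat V_{h+1}-V_{h+1})$. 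Unrolling along the trajectory $(s_h^k,a_h^k)_h$ that the agent actually generates under $\bP_{\thetap}$ and $\pi_k$, the second term telescopes and leaves a martingale-difference sequence $(\zeta_h^k)_h$ with $|\zeta_h^k|\le H$ (both values lie in $[0,H]$ since $\bE_{\thetar}[r]\in[0,1]$), giving
\[
\hat V_1(s_1^k)-V_1(s_1^k)=\sum_{h=1}^H (\bP_{\thetahp}-\bP_{\thetap})\hat V_{h+1}(s_h^k,a_h^k)+\sum_{h=1}^H \zeta_h^k .
\]

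For the mismatch terms I would invoke the transportation inequality (Lemma~\ref{lem:transportation}): since $\sp{\hat V_{h+1}}\le H$, each term is at most $H\sqrt{2\,\mathrm{KL}_{s_h^k,a_h^k}(\thetap,\thetahp)}$ and trivially at most $H$, hence at most $H\min\{1,\sqrt{2\,\mathrm{KL}_{s_h^k,a_h^k}(\thetap,\thetahp)}\}$. It is exactly this truncation—available only because the exact reward keeps the value span at $H$—that feeds the improved elliptical lemma and buys the $\sqrt{H}$ over a simulation-lemma analysis. Summing over $h$ and $k$ and applying Cauchy--Schwarz over the $N=KH$ steps converts the sum of square roots into $H\sqrt{N}\,\bigl(\sum_{k,h}\min\{1,2\,\mathrm{KL}_{s_h^k,a_h^k}(\thetap,\thetahp)\}\bigr)^{1/2}$.

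It then remains to turn each KL into a feature-weighted parameter error and aggregate. Because the transition law is a bilinear exponential family, $\mathrm{KL}$ is the Bregman divergence of the log-partition, so by Assumption~\ref{ass:bounds_hessians} one has $\mathrm{KL}_{s,a}(\thetap,\thetahp)\le \tfrac{\betap}{2}\|\thetap-\thetahp\|_{G_{s,a}}^2$. A self-normalised confidence bound for the penalised MLE gives, uniformly in $k$ with probability $1-\delta$, that $\|\thetap-\thetahp(k)\|_{\pG}^2\le \betap(K,\delta)$; substituting this and applying the improved elliptical-potential lemma (Lemma~\ref{lem:elliptical}) to the truncated quadratic forms $\min\{1,\|\cdot\|_{(\pG)^{-1}}^2\}$ bounds $\sum_{k,h}\min\{1,2\,\mathrm{KL}\}$ by $O\!\bigl(\tfrac{\betap}{\alphap}\betap(K,\delta)\gamma_K^{\texttt{p}}\bigr)$, which delivers the first term $2H\sqrt{\tfrac{2\betap}{\alphap}\betap(N,\delta)N\gamma_K^{\texttt{p}}}$.

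Finally, an Azuma--Hoeffding bound on $\sum_{k,h}\zeta_h^k$ (holding with probability $1-\delta$, each increment bounded by $H$ over $N$ steps) yields the second term $2H\sqrt{2N\log(1/\delta)}$, and a union bound over the concentration and the martingale events gives the stated $1-2\delta$. The step I expect to be the main obstacle is the aggregation: correctly coupling the self-normalised MLE confidence radius $\betap(K,\delta)$ with the truncated elliptical potential so that the smoothness ratio $\betap/\alphap$ and the regularised design matrix $\pG$ (which carries the $\eta/\alphap$ term) line up cleanly, all while keeping the $\min\{1,\cdot\}$ truncation intact, since this truncation is precisely what secures the $\sqrt{H}$ improvement and is easy to lose when passing through Cauchy--Schwarz and the elliptical bound.
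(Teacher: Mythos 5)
Your proposal follows essentially the same route as the paper's proof: the same one-step/telescoping decomposition along the realized trajectory with a martingale remainder, the transportation inequality exploiting the span-$H$ bound (valid because the reward parameter is exact) to get the $\min\{1,\sqrt{2\,\mathrm{KL}}\}$ truncation, Cauchy--Schwarz over the $KH$ steps, the Hessian bounds of Assumption~\ref{ass:bounds_hessians} to convert KL into $G_{s,a}$-weighted parameter error, the penalized-MLE confidence region, the elliptical-potential lemma, and Azuma--Hoeffding plus a union bound for the final $1-2\delta$. The only slip is cosmetic: the martingale increments satisfy $|\zeta_h^k|\le 2H$ (a difference of two quantities each lying in $[-H,H]$), not $H$, which is precisely what yields the $2H\sqrt{2N\log(1/\delta)}$ term you already state.
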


\begin{proof}
The proof proceeds in two parts. First, we will reveal a bound in terms of the induced local geometry, \ie a bound in terms of KL-divergence. Second, we explicit the bound by transferring the induced local geometry to the euclidean one.

\textit{\textbf{1)} Bound in terms of local geometry.} \quad We provide a bound on the estimation error of the transition in terms of $\operatorname{KL}$ divergences, for that end we show that the estimation error can be decomposed and well controlled. We start by writing the one-step decomposition:

\begin{align*}
    V_{\thetahp, \thetar,1}^\pi(s_{1}^{k}) -&V_{\thetap, \thetar,1}^\pi (s_{1}^{k})\\
    &=\bE_{s_{1}^{k},a_{1}^{k}}^{\thetahp}\left[V_{\thetahp, \thetar,2}^\pi\right] -  \bE_{s_{1}^{k},a_{1}^{k}}^{\thetap}\left[V_{\thetahp, \thetar,2}^\pi\right] + \bE_{s_{1}^{k},a_{1}^{k}}^{\thetap}[V_{\thetahp, \thetar,2}^\pi - V_{\thetap, \thetar,2}^\pi]\\
    &= \bE_{s_{1}^{k},a_{1}^{k}}^{\thetahp}\left[V_{\thetahp, \thetar,2}^\pi\right] -  \bE_{s_{1}^{k},a_{1}^{k}}^{\thetap}\left[V_{\thetahp, \thetar,2}^\pi\right] + V_{\thetahp, \thetar,2}^\pi(s_{2 k}) -V_{\thetap, \thetar,2}^\pi(s_{2 k}) + \zeta_{1}^{k}\\
    &= \sum_{h=1}^H \bE_{s_{h k},a_{h k}}^{\thetahp}\left[V_{\thetahp, \thetar,h+1}^\pi\right] -  \bE_{s_{h k},a_{h k}}^{\thetap}\left[V_{\thetahp, \thetar,h+1}^\pi\right] + \zeta_{h k}
\end{align*}
where $\zeta_{h k} = \bE_{s_{h k},a_{h k}}^{\thetap}[V_{\thetahp, \thetar,h+1}^\pi - V_{\thetap, \thetar,h+1}^\pi] - \left(V_{\thetahp, \thetar,h+1}^\pi(s_{h+1 k}) -V_{\thetap, \thetar,h+1}^\pi(s_{h+1 k})\right)$ is a martingale sequence, and the last equality comes by induction. Here we consider the true reward parameter which verifies $|\bE_{\thetar}[r(s,a)]|\le 1$ by assumption, therefore $|\zeta_{h k}|\le 2 H$.  Using the Azuma-Hoeffding inequality \cite{boucheron2013concentration}, with probability at least $1-\delta$
\begin{equation*}
    \sum_{k=1}^K \sum_{h=1}^H \zeta_{h k} \le 2H \sqrt{2K H \log(1/\delta)}
\end{equation*}
We finish bounding the first term using Lemma~\ref{lem:transportation}, indeed 
\begin{align*}
    \bE_{s_{h k},a_{h k}}^{\thetahp}\left[V_{\thetahp, \thetar,h+1}^\pi\right] -  \bE_{s_{h k},a_{h k}}^{\thetap}\left[V_{\thetahp, \thetar,h+1}^\pi\right] &\le H \sqrt{2 \operatorname{KL}_{s_{h k}, a_{h k}}(\thetap, \thetahp)}\\
    &\le H \min\left\{1, \sqrt{2 \operatorname{KL}_{s_{h k}, a_{h k}}(\thetap, \thetahp)}\right\},
\end{align*}
the last inequality follows because $\forall h, \: \bS(V_{\thetahp,\thetar,h+1})\le H$. 
\begin{remark}
Traditionally, the expected value difference bound follows from the simulation lemma~\cite{ren2021free}. The simulation lemma incurs an extra $\sqrt{H}$ factor compared to our bound.
\end{remark}
We deduce that with probability at least $1-\delta$:
\begin{align}
    \sum_{k=1}^K V_{\thetahp, \thetar}(s_{1}^{k})& -V_{\thetap, \thetar}^\pi(s_{1}^{k}) \nonumber\\
    &\quad \le H \sum_{k=1}^K \min\left\{1,\sum_{h=1}^H \sqrt{2\operatorname{KL}_{s_{h k}, a_{h k}}(\thetap, \thetahp)}\right\}+2H\sqrt{2 K H \log(1/\delta)}  \label{eq:transition_estimation_kl}
\end{align}

\textit{\textbf{2)} Bounding the sum of KL divergences.} \quad we explicit the bound of inequality~\eqref{eq:transition_estimation_kl} using Assumption~\ref{ass:bounds_hessians} along with properties of the exponential family (\cf Section~\ref{app:properties_exp_fam}). We have for all $(s, a)$,
\begin{equation}
    \label{ineq:transition_kl}
    \forall \thetap, \thetapp, \quad \frac{\alphap}{2}\left\|\thetapp-\thetap\right\|_{G_{s, a}}^{2} \leq \mathrm{KL}_{s, a}\left(\thetap, \thetapp\right) \leq \frac{\betap}{2}\left\|\thetapp-\thetap\right\|_{G_{s, a}}^{2}.
\end{equation}
This implies that
\begin{equation*}
    \mathrm{KL}_{s, a}\left(\thetahp(k), \thetap\right) \leq \frac{\betap}{2}\left\|\thetap-\thetahp(k)\right\|_{G_{s, a}}^{2} \leq \betap \left\|(\bar{G}_{k}^\texttt{p})^{-1 / 2} G_{s, a} (\bar{G}_{k}^\texttt{p})^{-1 / 2}\right\| \frac{1}{2}\left\|\thetap-\thetahp(k)\right\|_{\bar{G}_{k}^\texttt{p}}^2,
\end{equation*}
where $\bar{G}_{k}^\texttt{p} \equiv \bar{G}_{(k-1) H}^\texttt{p}:=G_{k}+(\alphap)^{-1} \eta \bA$ and $G_{k} \equiv \sum_{\tau=1}^{k-1} \sum_{h=1}^{H} G_{s_{s}^{\tau}, a_{h}^{\tau}}$.

From Corollary~\ref{cor:Transition_euclidean_confidence_region}, with probability at least $1-\delta$ and for all $k\in \bN$
\[\left\|\thetap-\thetahp(k)\right\|_{\bar{G}_{k}^\texttt{p}}^2 \le 2\betap(k,\delta) / \alphap.\]

Also, using Lemma~\ref{lem:elliptical}, we have
\begin{equation*}
    \sum_{t=1}^{T} \sum_{h=1}^{H}\min\left\{1, \left\|(\bar{G}_{k}^\texttt{p})^{-1 / 2} G_{s, a} (\bar{G}_{k}^\texttt{p})^{-1 / 2}\right\|\right\} \le 2 d \log \left(1+\alphap \eta^{-1} B_{\varphi, \mathbb{A}} H K\right).
\end{equation*}
Combining these two results we obtain, with probability at least $1-\delta$:

\begin{equation}
    % \label{eq:23}
    \label{eq:sum_kl_bound}
    \sum_{t=1}^{T} \sum_{h=1}^{H} \min\left\{1, \mathrm{KL}_{s_{h}^{t}, a_{h}^{t}}\left(\thetahp(k), \thetap\right)\right\} \leq \frac{2 \betap}{\alphap} \betap(K,\delta) \gamma_{K}^\texttt{p}.
\end{equation}

\begin{remark}
Notice that the minimum with $1$ is crucial, indeed, without it the bound deteriorates by a factor $H$ as was the case in~\cite{chowdhury2021reinforcement}.
\end{remark}

\textit{\textbf{3)} Combining the bounds.} \quad By applying Cauchy-Schwarz in inequality~\eqref{eq:transition_estimation_kl}, we obtain, with probability at least $1-\delta$, and for all $K \in \bN$
\begin{equation*}
    \sum_{k=1}^K V_{\thetahp, \thetar}(s_{1}^{k}) -V_{\thetap, \thetar}^\pi(s_{1}^{k}) \le H \sqrt{2\sum_{k=1}^K \sum_{h=1}^H \operatorname{KL}_{s_{h k}, a_{h k}}(\thetap, \thetahp)}+2H\sqrt{2 K H \log(1/\delta)}.
\end{equation*}
Injecting inequality~\eqref{eq:sum_kl_bound} proves the desired result with probability at least $1-2\delta$.
\end{proof}

\subsubsection{Reward estimation}\label{app:reward_estimation}

Now, we provide the bound over the regret due to estimating the reward parameter.

\begin{lemma}
\label{lem:reward_estimation_bound}
With probability at least $1-3\delta$, the following result holds true.
\begin{align*}
    \sum_{k=1}^K  V_{\thetahp, \thetatr,1}^\pi (s_{1}^{k}) - &V_{\thetahp, \thetar,1}^\pi (s_{1}^{k}) \le \left(\sqrt{\frac{\betar(K,\delta)}{2\alphar}}+c\sqrt{(\max_{k\le K} x_k) d\log(d K /\delta)}\right) \betar \\
    \times & \left( \sqrt{C_d \left(1+\frac{\alphar B_{\varphi, A} H}{\eta}\right)} + \sqrt{K\log(e/\delta^2)} \right) \sqrt{ H d \log \left(1+\alphar \eta^{-1} B_{\varphi, \mathbb{A}} H K\right)},
\end{align*}
where $\beta^\texttt{p}(K,\delta)\triangleq \frac{\eta}{2}B_{\bA}^2+\gamma_K^\texttt{p}+\log(1/\delta)$, and $\gamma_K^\texttt{p}\triangleq d\log(1+\frac{\beta^\texttt{p}}{\eta}B_{\varphi,\bA}H K)$. Also, $C_d \triangleq\frac{3d}{\log(2)}\log\left(1+ \frac{\alphar\|\bA\|_2^2 B_{\varphi,\bA}^2}{\eta \log(2)}\right) $, and $c$ is a universal constant.
\end{lemma}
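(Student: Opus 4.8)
The plan is to follow the same two-stage strategy used for the transition estimation in Lemma~\ref{lem:transition_estimation_bound}, but now tracking the reward parameter error, which carries the injected Gaussian noise. First I would unroll the value difference $V_{\thetahp,\thetatr,1}^\pi(s_1^k) - V_{\thetahp,\thetar,1}^\pi(s_1^k)$ along the trajectory. Since both value functions share the same transition parameter $\thetahp$, the transitions cancel in the telescoping and only the per-step reward gap survives, modulo a martingale difference sequence $\zeta_{hk}^\texttt{r}$. Concretely, each step contributes the difference $\bE_{s_{hk},a_{hk}}^{\thetatr}[r] - \bE_{s_{hk},a_{hk}}^{\thetar}[r]$, so I would expect
\begin{equation*}
    \sum_{k=1}^K V_{\thetahp,\thetatr,1}^\pi(s_1^k) - V_{\thetahp,\thetar,1}^\pi(s_1^k) \le \sum_{k=1}^K\sum_{h=1}^H \Big|\bE_{s_{hk},a_{hk}}^{\thetatr}[r] - \bE_{s_{hk},a_{hk}}^{\thetar}[r]\Big| + \text{(martingale term)},
\end{equation*}
where the martingale term is controlled by Azuma--Hoeffding at the cost of a $\sqrt{K H \log(1/\delta)}$ factor. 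The mean-reward gap should then be bounded, via the reward transportation inequality (Lemma~\ref{lem:Reward_Transportation}) and the smoothness Assumption~\ref{ass:bounds_hessians}, by something of the form $\betar \|\thetatr - \thetar\|_{G_{s_{hk},a_{hk}}}$, transferring the local (KL/Fisher) geometry into the euclidean geometry weighted by $G_{s,a}$.

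Second I would split the perturbed error $\thetatr - \thetar = (\thetahr - \thetar) + \xi_k$ into the estimation part and the noise part. For the estimation part $\thetahr - \thetar$, I would invoke the reward analogue of Corollary~\ref{cor:Transition_euclidean_confidence_region} to get $\|\thetahr - \thetar\|_{\bar G_k^\texttt{r}}^2 \le 2\betar(k,\delta)/\alphar$ with high probability, which yields the $\sqrt{\betar(K,\delta)/(2\alphar)}$ contribution. For the noise part, since $\xi_k \sim \cN(0, x_k (\pG)^{-1})$, a Gaussian norm concentration bound gives $\|\xi_k\|_{\bar G_k^\texttt{r}} \lesssim \sqrt{x_k d \log(dK/\delta)}$ uniformly over $k$ with high probability, producing the $c\sqrt{(\max_k x_k) d\log(dK/\delta)}$ term. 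Combining both under a union bound accounts for the $1-3\delta$ failure probability.

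Third, to close the bound I would handle the sum over $(k,h)$ of the weighted norms $\|(A_i\varphi(s_{hk},a_{hk}))_i\|_{(\bar G_k^\texttt{r})^{-1}}$ by Cauchy--Schwarz and an elliptical potential argument. Here the two factors $\sqrt{C_d(1+\alphar B_{\varphi,A}H/\eta)}$ and $\sqrt{K\log(e/\delta^2)}$ suggest that the sum is split according to whether the per-round feature norm is ``large'' or ``small.'' On the bulk of rounds the standard elliptical lemma (Lemma~\ref{lem:elliptical}) gives the $\sqrt{H d\log(1+\alphar\eta^{-1}B_{\varphi,\bA}HK)}$ factor and the $\sqrt{K}$ scaling, whereas the rare ``bad rounds'' where the norm exceeds one are counted by the worst-case elliptical estimate (Lemma~\ref{lem:worst_case_elliptical}), whose count is $O(d)$ and feeds the $C_d$-dependent factor independent of $K$. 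The main obstacle, as I see it, is precisely this bookkeeping: tracking how the injected noise interacts with the time-varying metric $\bar G_k^\texttt{r}$ while separating the bad rounds cleanly, so that the noise-induced error on the few unstable rounds stays $K$-independent and does not resurrect the clipping that the paper is trying to avoid. Getting the constants and the elliptical-potential splitting to align with the stated product form is where the delicate work lies; the rest is an adaptation of the transition-estimation argument.
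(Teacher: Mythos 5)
Your overall strategy is the one the paper uses: the reward transportation lemma (Lemma~\ref{lem:Reward_Transportation}) to turn the value gap into a parameter difference weighted by the features, the split $\thetatr-\thetar=(\thetahr-\thetar)+\xi_k$ controlled by Corollary~\ref{cor:Reward_euclidean_confidence_region} and Gaussian concentration respectively, and the bad/good-round dichotomy in which Lemma~\ref{lem:worst_case_elliptical} caps the number of bad rounds by $C_d$ while Lemma~\ref{lem:elliptical} handles the bulk; you even attribute the factors $\sqrt{C_d\left(1+\alphar B_{\varphi,A}H/\eta\right)}$ and $\sqrt{K\log(e/\delta^2)}$ to exactly the right pieces.

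The genuine gap is in your first step, where you place the martingale correction at the level of value differences: you telescope $V_{\thetahp,\thetatr,1}-V_{\thetahp,\thetar,1}$ onto the realized trajectory and invoke Azuma--Hoeffding at cost $\sqrt{KH\log(1/\delta)}$. Azuma--Hoeffding needs bounded increments, and here the increments are differences of \emph{perturbed} value functions: their span is not $O(H)$ but scales with $\|\thetatr-\thetar\|$, which contains the unbounded Gaussian noise $\xi_k$, and with feature norms $\|(A_i\varphi(s,a))_{i\in[d]}\|_{(\rG)^{-1}}$ that exceed $1$ precisely on the bad rounds. This step is legitimate in the transition lemma (Lemma~\ref{lem:transition_estimation_bound}) because there both value functions carry the \emph{true} reward parameter, so increments are at most $2H$ and, moreover, the expectation in each correction term is under $\thetap$, matching the law of the realized next state; in your reward adaptation neither property holds (the realized data follows $\thetap$ while both value functions average over $\thetahp$, so the corrections are not even conditionally mean-zero as written). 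Patching the increment bound via the crude global bound on feature norms inflates the bound by roughly $\sqrt{H}$, and doing better forces you to treat the bad rounds \emph{before} the martingale step --- which is exactly the paper's ordering: apply the transportation identity exactly (keeping the expectation over trajectories under $\thetahp$), use Cauchy--Schwarz to pull out the single random scalar $\|\thetatr-\thetar\|_{\rG}$ per episode, dispose of the bad rounds, and only then introduce a martingale on the feature-norm sums themselves, whose increments admit deterministic bounds of order $\sqrt{Hd\log(1+\alphar\eta^{-1}B_{\varphi,\bA}HK)}$.
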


\begin{proof}
The reward estimation error in Equation~\eqref{eq:estimation_error_decomposition} can be written explicitly. Indeed, using Lemma~\ref{lem:Reward_Transportation}

\begin{align*}
    \tV{1}^\pi(s_{1}^{k}) - V_{\thetahp, \thetar,1}^\pi &(s_{1}^{k}) = \bE_{(\tilde{s}_h)_{1\le h\le H}\sim \pi \mid \thetahp, s_{1}^{k}}\left[\sum_{h=1}^H \frac{\Var_{\tilde{s}_h,\pi(\tilde{s}_h)}(r)}{2} B^\top M_{\thetatr-\thetar}\varphi(\tilde{s}_h,\pi(\tilde{s}_h))\right] \\
    \le& \bE\left[\sum_{h=1}^H \frac{\Var_{\tilde{s}_h,\pi(\tilde{s}_h)}(r)}{2} \|\thetatr-\thetar\|_{\rG} \|(B^\top A_i\varphi(\tilde{s}_h,\pi(\tilde{s}_h)))_{1\le i\le d}\|_{(\rG)^{-1}}\right]\\
    \le& \|\thetatr-\thetar\|_{\rG}  \bE\left[\sum_{h=1}^H \frac{\Var_{\tilde{s}_h,\pi(\tilde{s}_h)}(r)}{2} \|(B^\top A_i\varphi(\tilde{s}_h,\pi(\tilde{s}_h)))_{1\le i\le d}\|_{(\rG)^{-1}}\right]\\
    \le& \|\thetatr-\thetar\|_{\rG} \frac{\betar}{2}  \bE \Bigg[ \underbrace{\sum_{h=1}^H \|(A_i\varphi(\tilde{s}_h,\pi(\tilde{s}_h)))_{1\le i\le d}\|_{(\rG)^{-1}}}_{\eqdef \widetilde{\operatorname{traj}}_{k}} \Bigg],
\end{align*}
where $\operatorname{traj}_{k} \eqdef \sum_{h=1}^H \|(A_i\varphi(s_h,\pi(s_h)))_{1\le i \le d} \|_{ (G_{k }^\texttt{r} )^{-1}}$.

\textbf{Bad rounds.} We separate the analysis of this estimation error into bad and good rounds. Here we analyze the bad rounds, which are define by the following set:
\begin{equation*}
    \mathcal{T} = \{k\in \mathbb{N}^*, \exists h \in [H], \|(A_i\varphi(\tilde{s}_h,\pi(\tilde{s}_h)))_{1\le i\le d}\|_{(\rG)^{-1}} \ge 1\}
\end{equation*}

\textit{1)} We know that $\|(A_i\varphi(\tilde{s}_h,\pi(\tilde{s}_h)))_{1\le i\le d} (A_i\varphi(\tilde{s}_h,\pi(\tilde{s}_h)))_{1\le i\le d}^\top\|_2^2 \le \|\bA\|_2^2 B_{\varphi,\bA}^2$. Consequently, according to Lemma~\ref{lem:worst_case_elliptical}
\begin{equation*}
    \left|\mathcal{T}\right| \le \frac{3d}{\log(2)}\log\left(1+ \frac{\alpha\|\bA\|_2^2 B_{\varphi,\bA}^2}{\eta \log(2)}\right).
\end{equation*}

\textit{2)} Since $G_{k}$ is positive semi-definite, we have $\rG \succeq (\alphar)^{-1} \eta \bA$, and in turn, for all state-action couples $(s,a)$,
$\left\|(\rG)^{-1} G_{s, a}\right\| \leq \frac{\alphar}{\eta}\left\|\bA^{-1} G_{s, a}\right\| \leq \frac{\alphar B_{\varphi, \bA}}{\eta}$.

This further yields 
\[\left\|I+(\rG)^{-1} \sum_{h=1}^{H} G_{s_{h}^{t}, a_{h}^{t}}\right\| \leq 1+\sum_{h=1}^{H}\left\|(\rG)^{-1} G_{s_{h}^{t}, a_{h}^{t}}\right\| \leq 1+\frac{\alphar B_{\varphi, \bA} H}{\eta} .\]

Let us define $\bar{G}_{k+H}^\texttt{r}:=\rG+\sum_{h=1}^{H} G_{s_{h}^{k}, a_{h}^{k}}$. Then,
\begin{equation*}
    \bar{G}_{k+H}^{-1} G_{s, a}=\left(I+(\rG)^{-1} \sum_{h=1}^{H} G_{s_{h}^{t}, a_{h}^{t}}\right)^{-1} (\rG)^{-1} G_{s, a}.
\end{equation*}
Therefore, for all pairs $(s,a)$,
\begin{align*}
    \|(A_i\varphi(\tilde{s}_h,\pi(\tilde{s}_h)))_{1\le i\le d}\|_{(\rG)^{-1}} & = \sqrt{ \tr((A_i\varphi( \tilde{s}_h,\pi(\tilde{s}_h)))_{1\le i\le d}^\top (\rG)^{-1} (A_i\varphi(\tilde{s}_h,\pi(\tilde{s}_h)))_{1\le i\le d})}\\
    &= \sqrt{\tr(\left(1+\frac{\alphar B_{\varphi, A} H}{\eta}\right) (\bar{G}_{k+H}^\texttt{r})^{-1} G_{s, a} )} \\
    &\le \sqrt{\left(1+\frac{\alphar B_{\varphi, A} H}{\eta}\right)} \|(A_i\varphi(\tilde{s}_h,\pi(\tilde{s}_h)))_{1\le i\le d}\|_{(\bar{G}_{k+H}^\texttt{r})^{-1}}
\end{align*}
Since $\|(A_i\varphi(\tilde{s}_h,\pi(\tilde{s}_h)))_{1\le i\le d}\|_{(\bar{G}_{k+H}^\texttt{r})^{-1}} \le 1$, we have $\|(A_i\varphi(\tilde{s}_h,\pi(\tilde{s}_h)))_{1\le i\le d}\|_{(\bar{G}_{k+H}^\texttt{r})^{-1}} \le \min\left\{1,\|(A_i\varphi(\tilde{s}_h,\pi(\tilde{s}_h)))_{1\le i\le d}\|_{(\rG)^{-1}}\right\}$. Consequently 
\begin{equation*}
    \sum_{h=1}^H \|(A_i\varphi(\tilde{s}_h,\pi(\tilde{s}_h)))_{1\le i\le d}\|_{(\bar{G}_{k+H}^\texttt{r})^{-1}} \le \sqrt{H d\log(1+\alphar \eta^{-1} B_{\varphi, \bA}H)}.
\end{equation*}

\textit{3)} From \textit{1)} and \textit{2)}, we deduce that the total regret induced by rounds from $\mathcal{T}$ is bounded.
\begin{align}
    \sum_{k\in \mathcal{T}}&\sum_{h\in [H]} V_{\thetahp, \thetatr,1}^\pi(s_{1}^{k}) - V_{\thetahp, \thetar,1}^\pi(s_{1}^{k}) \nonumber \le \|\thetatr-\thetar\|_{\rG} \frac{\betar}{2}\\
    &\sqrt{\frac{3d}{\log(2)}\log\left(1+ \frac{\alphar\|\bA\|_2^2 B_{\varphi,\bA}^2}{\eta \log(2)}\right) \left(1+\frac{\alphar B_{\varphi, A} H}{\eta}\right) H d\log(1+\alphar \eta^{-1} B_{\varphi, \bA}H)} \label{eq:reward_estimation_bad_rounds}
\end{align}

\begin{remark}
The bad rounds analysis is one of our most important contributions as it enables us to forgo clipping without consequences. Consequently, this is a novel method to control the reward estimation error that improves on existing work for whom clipping was essential.
\end{remark}

\textbf{Good rounds.} Going forward we consider rounds from $\bar{\mathcal{T}}$. Let us define 
\begin{equation*}
    \zeta'_{k} \eqdef \operatorname{ traj}_{k} - \bE_{(\tilde{s}_h)_{1\le h\le H}\sim \pi \mid \thetahp, s_{1}^{k}} \left[\widetilde{\operatorname{traj}}_{k}\right].
\end{equation*}
where $\widetilde{\operatorname{traj}}_{k}$ is the same quantity as $\operatorname{traj}$ but with a random realization of state transitions.\\
Since all feature norms are smaller than one, $(\zeta'_{k})_{k}$ is a martingale sequence with $|\zeta'_{k}|\le \sqrt{H d \log \left(1+\alphar \eta^{-1} B_{\varphi, \mathbb{A}} H K\right)}$. We deduce that with probability at least $1-\delta$:
\begin{equation*}
    \sum_{k=1}^K \zeta'_{k} \le \sqrt{2K H d \log \left(1+\alphar \eta^{-1} B_{\varphi, \mathbb{A}} H K\right) \log(1/\delta)}
\end{equation*}

Therefore, we have with probability at least $1-3\delta$:
\begin{align*}
    \sum_{k\in \mathcal{T}^\texttt{c}} V_{\thetahp, \thetatr,1}^\pi (s_{1}^{k}) - V_{\thetahp, \thetar,1}^\pi(s_{1}^{k}) &\le \left(\sqrt{\frac{\betar(K,\delta)}{2\alphar}}+c\sqrt{(\max_{k} x_k)d\log(d K /\delta)}\right)\\
    &\times \betar \sqrt{K H d \log \left(1+\alphar \eta^{-1} B_{\varphi, \mathbb{A}} K H\right)\log(e/\delta^2)}.
\end{align*}

The last inequality follows from controlling the concentration of the reward parameter. First we observe that (Corollary~\ref{cor:Reward_euclidean_confidence_region}) with probability at least $1-\delta$, uniformly over $k \in \bN, \quad \left\|\thetar-\thetahr(k)\right\|_{\bar{G}_{k}^\texttt{r}}^{2} \le \frac{2}{\alphar} \betar(k,\delta)$. Second, we also have that for all $k\ge 1$, with probability at least $1-\delta, \|\xi_{ k}\|_{G_{k}^\texttt{r}} \leq c \sqrt{x_k d \log(d/\delta)}$, we then use a union bound. Combining with Equation~\eqref{eq:reward_estimation_bad_rounds} we find

\begin{align*}
    \sum_{k=1}^K V_{\thetahp, \thetatr,1}^\pi (s_{1}^{k}) - V_{\thetahp, \thetar,1}^\pi(s_{1}^{k}) &\le \left(\sqrt{\frac{\betar(K,\delta)}{2\alphar}}+c\sqrt{(\max_{k} x_k)d\log(d K /\delta)}\right)\\
    & \times \betar  \sqrt{K H d \log \left(1+\alphar \eta^{-1} B_{\varphi, \mathbb{A}} H K\right)\log(e/\delta^2)}.
\end{align*}
This concludes the proof.
\end{proof}

\begin{remark}
If we use Lemma~\ref{lem:Reward_Transportation} without the martingale difference sequence, it will lead to a linear regret. Indeed, the span of the sum of norms over an episode is of order $\sqrt{H}$. Using the martingale technique instead allows us to retrieve a telescopic sum controlled using the elliptical lemma, this is essential to obtaining a sub-linear regret bound. 
\end{remark}

%%%%%%%%%%%%%%%%%%%%%%%%%%%  END of Section Estimation loss %%%%%%%%%%%%%%%%%%%%%%%%%%%%%%%%%
\subsection{Learning error}\label{app:learning_error}

We now start the control of an important regret term, due to the distance between the estimated value function and the optimal value function.

\begin{lemma}\label{lem:learning_error}
    If the variance parameter of the injected noise $(\xi_k)_k$ satisfies
    \begin{equation*}
        x_k \ge \left(H\sqrt{\frac{\betap \betap(k,\delta)}{\alphap\alphar}}+\frac{\sqrt{\betar\betar(k,\delta)\min\{1,\frac{\alphap}{\alphar}\}}}{2 \alphar}\right),
    \end{equation*}
    then the learning error is controlled with probability at least $1-2\delta$ as
    \begin{align*}
        \sum_{k=1}^K V_1^\star (s_1^k) - V_{\thetahp,\thetahr+\bar{\xi}_k,1}^\pi (s_1^k) &\le \frac{d \betar \sqrt{x_k} \left(1 +\sqrt{ \log(d/\delta) }\right) }{\Phi(-1)} \sqrt{ H \log \left(1+\alphar \eta^{-1} B_{\varphi, \mathbb{A}} H K\right)}\\
        &\times \left( \sqrt{C_d \left(1+\frac{\alphar B_{\varphi, A} H}{\eta}\right)} + \sqrt{K \log(e/\delta^2)} \right),
    \end{align*}
    where for $\texttt{i}\in [\texttt{p}, \texttt{r}]$, $\beta^\texttt{i}(K,\delta)\triangleq \frac{\eta}{2}B_{\bA}^2+\gamma_K^\texttt{i}+\log(1/\delta)$, and $\gamma_K^\texttt{i}\triangleq d\log(1+\frac{\beta^\texttt{i}}{\eta}B_{\varphi,\bA}H K)$. Also $C_d \eqdef \frac{3d}{\log(2)}\log\left(1+ \frac{\alphar\|\bA\|_2^2 B_{\varphi,\bA}^2}{\eta \log(2)}\right)$, and $\Phi$ is the normal CDF.
\end{lemma}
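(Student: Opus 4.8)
The plan is to execute the two-stage \emph{stochastic optimism} argument sketched in Section~\ref{sec:proof_outline}: first establish that the reward perturbation makes $\tV{1}$ optimistic with a constant probability, then convert this into a learning-error bound by a decoupling argument with independent noise copies, and finally sum over episodes exactly as in the reward-estimation analysis.

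\textbf{Step 1 (Stochastic optimism).} Fix an episode $k$ and use $(\tV{1}-\V{1}^\star)(s_1^k) \ge (\tQ{1}^\star - Q_1^\star)(s_1^k,\pi^\star(s_1^k))$, then split the right-hand side into the three terms of Section~\ref{sec:proof_outline}: two \emph{perturbation-free} terms depending only on $\thetahp$ and $\thetahr$, controlled by the transition and reward concentration inequalities (Corollaries~\ref{cor:Transition_euclidean_confidence_region} and~\ref{cor:Reward_euclidean_confidence_region}) as in the estimation-error proof, and one perturbation term linear in $\xi_k$. I would apply the reward transportation inequality (Lemma~\ref{lem:Reward_Transportation}) to write the perturbation term as $\xi_k^\top v_k$ minus a deterministic bias of magnitude $H\,c(n,\delta)\,\|v_k\|_{(\pG)^{-1}}$. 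Since $\xi_k \sim \cN(0,x_k(\pG)^{-1})$ and the hypothesis $x_k \ge H^2 c(n,\delta)^2$ forces the Gaussian fluctuation to dominate the bias, the anti-concentration bound (Lemma~\ref{lem:gaussian_anti_concentration}) yields $\bP(\bar O_k) \ge \Phi(-1)$ for $\bar O_k \eqdef \{V_{\thetahp,\thetatr_k,1}(s_1^k)-V_1^\star(s_1^k)\ge 0\}$.

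\textbf{Step 2 (Sandwich and decoupling).} Draw i.i.d.\ copies $(\bar\xi_k)$ of $(\xi_k)$. By definition of $\bar O_k$, optimism gives the upper bound $V_1^\star(s_1^k) \le \bE_{\bar\xi_k\mid \bar O_k}[V_{\thetahp,\thetahr+\bar\xi_k,1}(s_1^k)]$. For the lower side, define $\underbar V_1(s_1^k)$ as the minimum of $V_{\thetahp,\thetahr+\xi,1}(s_1^k)$ over the ellipsoid $\|\xi\|_{\rG}\le\sqrt{x_k d\log(d/\delta)}$, with minimizer $\xi_k^{\min}$; the Gaussian norm concentration $\|\bar\xi_k\|_{\rG}\le c\sqrt{x_k d\log(d/\delta)}$ used in Lemma~\ref{lem:reward_estimation_bound} places $\bar\xi_k$ inside this set w.h.p., so $\underbar V_1(s_1^k)\le V_{\thetahp,\thetahr+\bar\xi_k,1}(s_1^k)$. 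Writing $\Delta_k \eqdef V_{\thetahp,\thetahr+\bar\xi_k,1}(s_1^k)-\underbar V_1(s_1^k)$ and applying the tower rule over $\bar O_k$ gives, w.h.p.,
\[
V_1^\star(s_1^k) - V_{\thetahp,\thetahr+\bar\xi_k,1}(s_1^k) \le \frac{\bE_{\bar\xi_k}[\Delta_k] - \bE_{\bar\xi_k\mid \bar O_k^{\texttt c}}[\Delta_k]\,\bP(\bar O_k^{\texttt c})}{\bP(\bar O_k)},
\]
where the denominator $\bP(\bar O_k)\ge\Phi(-1)$ produces the $1/\Phi(-1)$ factor of the statement. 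The key observation is that $\Delta_k$ is a difference of two \emph{reward-only} perturbations sharing the transition parameter $\thetahp$, so Lemma~\ref{lem:Reward_Transportation} bounds it by $\tfrac{\betar}{2}\|\bar\xi_k-\xi_k^{\min}\|_{\rG}$ times a trajectory sum of feature norms $\|(A_i\varphi)_{i}\|_{(\rG)^{-1}}$; with $\|\bar\xi_k-\xi_k^{\min}\|_{\rG}$ of order $d\sqrt{x_k}\,(1+\sqrt{\log(d/\delta)})$ this recovers the corresponding factors in the bound.

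\textbf{Step 3 (Summation over episodes).} Summing the trajectory feature-norm sums over $k$ proceeds exactly as in Lemma~\ref{lem:reward_estimation_bound}: separate the good rounds from the $O(C_d)$ bad rounds of Lemma~\ref{lem:worst_case_elliptical}, apply the improved elliptical Lemma~\ref{lem:elliptical} on the good rounds, and add an Azuma term to pass from the random trajectory to its conditional expectation. This reproduces the two summands $\sqrt{C_d(1+\alphar B_{\varphi,A}H/\eta)}$ and $\sqrt{K\log(e/\delta^2)}$ and the factor $\sqrt{Hd\log(1+\alphar\eta^{-1}B_{\varphi,\bA}HK)}$, and a union bound over the concentration and anti-concentration events yields the claimed $1-2\delta$ guarantee.

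\textbf{Main obstacle.} The delicate point is the decoupling in Step~2: the integrand $\Delta_k$ is nonnegative only with high probability, not almost surely, so the complementary term $\bE_{\bar\xi_k\mid \bar O_k^{\texttt c}}[\Delta_k]$ cannot simply be dropped. Controlling it requires showing that even on $\bar O_k^{\texttt c}$ the perturbed value cannot fall far below $\underbar V_1$, which again hinges on the noise concentration and on the bad-rounds bound keeping the feature norms entering $\Delta_k$ of order one. In other words, guaranteeing that the constant anti-concentration probability of Step~1 survives the \emph{removal of clipping} — precisely the difficulty that \cite{zanette2020frequentist} circumvented with an intricate clipping scheme — is what makes this the crux of the argument.
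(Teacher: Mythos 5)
Your proposal follows essentially the same route as the paper's proof: the same three-term optimism decomposition closed by Gaussian anti-concentration (Lemma~\ref{lem:gaussian_anti_concentration}), the same sandwich argument with i.i.d.\ noise copies and the constrained minimizer $\underbar{V}_1$, the tower rule producing the $1/\Phi(-1)$ factor, the reward-transportation bound (Lemma~\ref{lem:Reward_Transportation}) on the value gap in terms of $\|\bar{\xi}_k-\underline{\xi}_k\|$ times trajectory feature norms, and the same good/bad-round summation via Lemmas~\ref{lem:elliptical} and~\ref{lem:worst_case_elliptical}. The subtlety you flag about the complementary term $\bE_{\bar{\xi}_k \mid \bar{O}_k^\texttt{c}}[\Delta_k]$ is precisely the one the paper resolves through noise-norm concentration, so your plan is sound and matches the paper's argument (up to the inessential detail that the noise-norm difference is of order $\sqrt{d x_k}$, the remaining $\sqrt{d}$ entering through the feature-norm sum).
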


This result basically means that we are no longer obliged to follow optimistic value functions, the perturbed estimation is enough to have a tight bound on the learning error.

\subsubsection{Stochastic optimism}\label{app:stochastic_optimism}

The goal here is to show that by injecting our carefully designed noise in the rewards we can ensure optimism with a constant probability. Consider the optimal policy $\pi^\star$, we have:
\begin{align*}
    (\tV{1}-\V{1}^\star &)(s_1) \ge (\tQ{1}^\star-Q_{1}^\star)(s_1,\pi^\star (s_1))\\
    &\ge \underbrace{V_{\thetahp,\thetar}^{\pi^\star} (s_1) - V_{\thetap,\thetar}^{\pi^\star} (s_1)}_{\text{first term}} + \underbrace{V_{\thetahp,\thetahr}^{\pi^\star} (s_1)  - V_{\thetahp,\thetar}^{\pi^\star} (s_1)}_{\text{second term}} + \underbrace{V_{\thetahp,\thetatr}^{\pi^\star} (s_1) - V_{\thetahp,\thetahr}^{\pi^\star} (s_1)}_{\text{third term}}
\end{align*}

\textbf{First term.} By assumption, the expected reward under the true parameter satisfies $\bE_{\thetar}[r(s,a)] \in [0,1]$, then $\bS\left(\sum_{t=1}^H \bE_{\thetar}[r(s_t,\pi(s_t))]\right)\le H$. Consequently, the first term can be controlled using Lemma~\ref{lem:transportation}
\begin{align*}
    &V_{\thetap,\thetar}^{\pi^\star} (s_1) - V_{\thetahp,\thetar}^{\pi^\star} (s_1) \le H \sqrt{\mathrm{KL}(P_{\thetahp}(s_2,\ldots, s_H),P_{\thetap}(s_2,\ldots, s_H)) }\\
    &\le H \sqrt{\bE_{(\tilde{s}_t)_{t \in [H]}\sim \thetahp \mid s_1^k}\left[\sum_{t=1}^H \psi(\tilde{s}_{t+1})^\top M_{\thetahp-\thetap} \varphi(\tilde{s}_t,\pi^\star (\tilde{s}_t)) + \pz_{\thetap}(\tilde{s}_t, \pi^\star (\tilde{s}_t)) - \pz_{\thetahp}(\tilde{s}_t, \pi^\star (\tilde{s}_t)) \right]}
\end{align*}

Using Taylor's expansion, for all $h\in [H], \exists \theta_h \in [\thetap,\thetahp]$ such that:
\begin{align*}
    \bE_{(\tilde{s}_t)_{t \in [H]}\sim \thetahp \mid s_1^k}&\left[ \psi(\tilde{s}_{t+1})^\top M_{\thetahp-\thetap} \varphi(\tilde{s}_t,\pi^\star (\tilde{s}_t)) + \pz_{\thetap}(\tilde{s}_t, \pi^\star (\tilde{s}_t)) - \pz_{\thetahp}(\tilde{s}_t, \pi^\star (\tilde{s}_t)) \right]\\
    &= \frac{1}{2} (\thetahp-\thetap)^\top \bE_{(\tilde{s}_t)_{t \in [H]}\sim \thetahp \mid s_1^k}\left[\nabla_{s_h,\pi^\star(s_h)}^2 \pz (\theta_h)\right] (\thetahp-\thetap)\\
    &\le \frac{\betap}{2} \bE_{(\tilde{s}_t)_{t \in [H]}\sim \thetahp \mid s_1^k}\left[\|\thetahp-\thetap\|_{G_{\tilde{s}_h, \pi^\star (\tilde{s}_h)}}^2\right].
\end{align*}
Define $u_k \eqdef \sum_{h=1}^H \bE_{(\tilde{s}_t)_{t \in [H]}\sim \thetahp \mid s_1^k} \left[ (A_i \varphi(\tilde{s}_h, \pi^\star (\tilde{s}_h)))_{i\in [d]}\right]$, then
\begin{align*}
    V_{\thetap,\thetar}^{\pi^\star} (s_1) - V_{ \thetahp, \thetar}^{\pi^\star} (s_1) &\le H \sqrt{ \frac{\betap}{2} \sum_{h=1}^H \bE_{(\tilde{s}_t)_{t \in [H]}\sim \thetahp \mid s_1^k}\left[\|\thetahp-\thetap\|_{G_{\tilde{s}_h, \pi^\star (\tilde{s}_h)}}^2\right]}\\
    &\le H \sqrt{\frac{\betap}{2}} \left\|\thetahp-\thetap\right\|_{\sum_{h=1}^H \bE_{(\tilde{s}_t)_{t \in [H]}\sim \thetahp \mid s_1^k} \left[ G_{\tilde{s}_h, \pi^\star (\tilde{s}_h)}\right]}  \\
    &\le H \sqrt{\frac{\betap}{2}} \left\|\thetahp-\thetap\right\|_{u_k u_k^\top} \\
    &\le H \sqrt{\frac{\betap}{2} \left\|(\pG)^{-1/2} u_k u_k^\top (\pG)^{-1/2}\right\|} \|\thetahp-\thetap\|_{\pG}\\
    &\le H \sqrt{\frac{\betap}{2}} \| u_k \|_{ (\pG)^{-1}} \|\thetahp-\thetap\|_{\pG}
\end{align*}
The third line follows because $\forall x\in \bR^d,\quad \|x\|_{\sum_{i=1}a_i a_i^\top} \le \|x\|_{(\sum_{i=1}a_i)(\sum_{i=1}a_i)^\top}$, and the last one follows because $\tr(AB)\le \tr(A)\tr(B)$ for any two real positive semi-definite matrices $A$ and $B$.\\
We deduce, with probability at least $1-\delta$:
\begin{equation*}
    V_{\thetap,\thetar}^{\pi^\star} (s_1) - V_{\thetahp,\thetar}^{\pi^\star} (s_1) \le H \sqrt{\frac{\betap \betap(k,\delta)}{\alphap}} \left\|\sum_{h=1}^H \bE_{(\tilde{s}_t)_{t \in [H]}\sim \thetahp \mid s_1^k} \left[ (A_i \varphi(\tilde{s}_h, \pi^\star (\tilde{s}_h)))_{i\in [d]}\right] \right\|_{(\pG)^{-1}}
\end{equation*}

\textbf{Second term.} We have
\begin{align*}
    V_{\thetahp,\thetahr}^{\pi^\star} (s_1)  - V_{\thetahp,\thetar}^{\pi^\star} (s_1) &= \bE_{(\tilde{s}_t)_{t \in [H]}\sim \thetahp \mid s_1^k}\left[\sum_{t=1}^H \frac{\Var^{\thetar_t} (r)}{2} B^\top M_{\thetahr -    \thetar}\varphi(\tilde{s}_t,\pi^\star (\tilde{s}_t))\right]\\
    &= (\thetahr - \thetar)^\top \bE_{(\tilde{s}_t)_{t \in [H]}\sim \thetahp \mid s_1^k}\left[\sum_{t=1}^H \frac{\Var^{\thetar_t} (r)}{2} (A_i\varphi(\tilde{s}_t,\pi^\star (\tilde{s}_t)))_{i\in [d]}\right] B\\
    &\le \frac{\sqrt{\betar}}{2} \|\thetahr - \thetar\|_{\rG} \left\| \bE_{(\tilde{s}_t)_{t \in [H]}\sim \thetahp \mid s_1^k}\left[\sum_{t=1}^H (A_i\varphi(\tilde{s}_t,\pi^\star (\tilde{s}_t)))_{i\in [d]}\right] \right\|_{(\rG)^{-1}}
\end{align*}
The last inequality comes from Cauchy-Schwarz. Applying that the norm (sum) makes appear only symmetric matrices times the variances so that we can bound the latter by $\betar$.\\
We conclude that with probability at least $1-\delta$,
\begin{equation*}
    V_{\thetahp,\thetahr}^{\pi^\star} (s_1)  - V_{\thetahp,\thetatr}^{\pi^\star} (s_1) \le \frac{\betar \sqrt{\betar(k,\delta)}}{\sqrt{2 \alphar}} \left\| \bE_{(\tilde{s}_t)_{t \in [H]}\sim \thetahp \mid s_1^k}\left[\sum_{t=1}^H (A_i\varphi(\tilde{s}_t,\pi^\star (\tilde{s}_t)))_{i\in [d]}\right] \right\|_{(\rG)^{-1}}
\end{equation*}
We want to write all the norms in the same matrix. Therefore, with probability at least $1-\delta$,
\begin{align*}
    V_{\thetahp,\thetahr}^{\pi^\star} (s_1)  - V_{\thetahp,\thetatr}^{\pi^\star} (s_1) \le& \sqrt{\frac{\betar \betar(k,\delta) \min\{1,\frac{\alphap}{\alphar}\}}{2 \alphar}}\\
    & \times \left\| \bE_{(\tilde{s}_t)_{t \in [H]}\sim \thetahp \mid s_1^k}\left[\sum_{t=1}^H (A_i\varphi(\tilde{s}_t,\pi^\star (\tilde{s}_t)))_{i\in [d]}\right] \right\|_{(\pG)^{-1}}
\end{align*}

\textbf{Third term.} We have
\begin{align*}
    V_{\thetahp,\thetahr,1}^{\pi^\star} (s_1)  - \tV{1}^{\pi^\star} (s_1) &= \bE_{(\tilde{s}_t)_{t \in [H]}\sim \thetahp \mid s_1^k} \left[\sum_{t=1}^H \frac{\Var^{\thetar_j} (r)}{2} B^\top M_{\thetahr -    \thetatr}\varphi(\tilde{s}_t,\pi^\star (\tilde{s}_t))\right]\\
    &= \xi_k ^\top \: \bE_{(\tilde{s}_t)_{t \in [H]}\sim \thetahp \mid s_1^k} \left[\sum_{t=1}^H \frac{\Var^{\thetar_j} (r)}{2} (A_i \varphi(\tilde{s}_t,\pi^\star (\tilde{s}_t)))_{i\in[d]}\right] B
\end{align*}

Given the normal CDF $\Phi$, we obtain that with probability at least $\Phi(-1)$
\begin{equation*}
    V_{\thetahp,\thetahr}^{\pi^\star} (s_1)  - V_{\thetahp,\thetatr}^{\pi^\star} (s_1) \ge \sqrt{x_k \alphar} \left\|\left[\sum_{t=1}^H \frac{\Var^{\thetar_j} (r)}{2} (A_i \varphi(\tilde{s}_t,\pi^\star (\tilde{s}_t)))_{i\in[d]}\right]\right\|_{(\pG)^{-1}}
\end{equation*}

Choosing $x_k \ge \left(H\sqrt{\frac{\betap \betap(k,\delta)}{\alphap\alphar}}+\frac{\sqrt{\betar\betar(k,\delta)\min\{1,\frac{\alphap}{\alphar}\}}}{2 \alphar}\right)$ and using Lemma~\ref{lem:gaussian_anti_concentration}, we find that the perturbed value function is optimistic with probability at least $\Phi(-1)$.

\subsubsection{Controlling the learning error}

In this section we see the core difference with optimistic algorithms. On the one hand, optimistic approaches require the value function generating the agent's policy to be larger than the optimal one with large probability, and can therefore ensure that the learning error is negative. On the other hand, \algo only ensures that the value function is optimistic with a constant probability: intuitively when this event holds the learning happens, and if it does not then the policy is still close to a good one thanks to the decreasing estimation error.

\paragraph{Upper bound on $V_1^\star$.} Let us draw $(\bar{\xi}_{k})_{k\in [K]}$ i.i.d copies of $(\xi_{k})_{k\in [K]}$. Define the optimism event at episode $k$:
\begin{equation}
    \label{def:optimism_k}
    \bar{O}_k = \{V_{\thetahp,\thetahr+\bar{\xi}_k,1}(s_1^k) - V_1^\star (s_1^k) \ge 0\}
\end{equation}
we know that $\bP(\bar{O}_k)\ge \Phi(-1)$. This event provides the upper bound:
\begin{equation}
    \label{eq:value_upper_bound}
    V_1^\star (s_1^k) \le \bE_{\bar{\xi}_k \mid \bar{O}_k }[V_{\thetahp,\thetahr+\bar{\xi}_k,1}(s_1^k)]
\end{equation}

\paragraph{Lower bound on $V_{\thetahp,\thetatr}$.} We define this bound with an optimization problem under concentration of the noise. Consider $\underbar{V}_{1}(s_1^k)$ is the solution of
\begin{align}
    &\min _{\xi_{ k}} V_{\thetahp, \thetahr+\xi_k,1}(s_{1^k}) \label{eq:value_lower_bound_opt}\\
    &\|\xi_k\|_{\pG} \le \sqrt{x_k d\log(d/\delta)}, \quad \forall t \in[H] \nonumber
\end{align}
Under the concentration of our injected noise, we obtain
\begin{equation}
    \label{eq:value_lower_bound}
    \underbar{V}_{1}(s_1^k) \le V_{\thetahp,\thetatr}(s_1^k)
\end{equation}

\paragraph{Combining the error bounds.} Combining the upper bound of Equation~\eqref{eq:value_upper_bound} with the lower bound of Equation~\eqref{eq:value_lower_bound}, we get, with probability at least $1-\delta$:
\begin{equation*}
    V_1^\star (s_1^k) - V_{\thetahp,\thetahr+\bar{\xi}_k,1}(s_1^k) \le \bE_{\bar{\xi}_k \mid \bar{O}_k }[V_{\thetahp,\thetahr+\bar{\xi}_k,1}(s_1^k)-\underbar{V}_{1}(s_1^k)]
\end{equation*}
Also, using the tower rule,
\begin{align*}
    \bE_{\bar{\xi}_k }[&V_{\thetahp,\thetahr+\bar{\xi}_k,1}(s_1^k)-\underbar{V}_{1}(s_1^k)]\\
    &= \bE_{\bar{\xi}_k \mid \bar{O}_k }[V_{\thetahp,\thetahr+\bar{\xi}_k,1}(s_1^k)-\underbar{V}_{1}(s_1^k)]\bP(\bar{O}_k) + \bE_{\bar{\xi}_k \mid \bar{O}_k^\texttt{c} }[V_{\thetahp,\thetahr+\bar{\xi}_k,1}(s_1^k)-\underbar{V}_{1}(s_1^k)]\bP(\bar{O}_k^\texttt{c})
\end{align*}
Therefore,
\begin{align*}
    V_1^\star (s_1^k) - &V_{\thetahp,\thetahr+\bar{\xi}_k,1}(s_1^k)\\
    &\le \left(\bE_{\bar{\xi}_k }[V_{\thetahp,\thetahr+\bar{\xi}_k,1}(s_1^k)-\underbar{V}_{1}(s_1^k)] - \bE_{\bar{\xi}_k \mid \bar{O}_k^\texttt{c} }[V_{\thetahp,\thetahr+\bar{\xi}_k,1}(s_1^k)-\underbar{V}_{1}(s_1^k)]\bP(\bar{O}_k^\texttt{c})\right)/ \bP(\bar{O}_k)\\
    &=\left(\bE_{\xi_k }[V_{\thetahp,\thetahr+{\xi}_k,1}^\pi (s_1^k)-\underbar{V}_{1}^\pi (s_1^k)] - \bE_{\xi_k \mid \bar{O}_k^\texttt{c} }[V_{\thetahp,\thetahr+\xi_k,1}(s_1^k)-\underbar{V}_{1}(s_1^k)]\bP(\bar{O}_k^\texttt{c})\right)/ \bP(\bar{O}_k).
\end{align*}
The last line follows since $\xi_k$ and $\bar{\xi}_k$ are i.i.d.

The rest of the analysis proceeds similarly to the proof of the reward estimation. 

Let us call the argument of the minimum in Equation~\eqref{eq:value_lower_bound_opt} as $\underline{\xi}_k$. Using Lemma~\ref{lem:Reward_Transportation}, we find
\begin{align*}
    \tV{1}^\pi(s_{1}^{k}) - &V_{\thetahp, \thetahr+\underline{\xi}_k,1}^\pi(s_{1}^{k})\\
    &= \bE_{(\tilde{s}_h)_{1\le h\le H}\sim \pi \mid \thetahp, s_{1}^{k}}\left[\sum_{h=1}^H \frac{\Var_{\tilde{s}_h,\pi(\tilde{s}_h)}(r)}{2} B^\top M_{\thetatr-\thetahr-\underline{\xi}_k}\varphi(\tilde{s}_h,\pi(\tilde{s}_h))\right] \\
    &\le \bE\left[\sum_{h=1}^H \frac{\Var_{\tilde{s}_h,\pi(\tilde{s}_h)}(r)}{2} \|\thetatr-\thetahr-\underline{\xi}_k\|_{\pG} \|(B^\top A_i\varphi(\tilde{s}_h,\pi(\tilde{s}_h)))_{1\le i\le d}\|_{(\pG)^{-1}}\right]\\
    &\le \|\thetatr-\thetahr-\underline{\xi}_k\|_{\pG}  \bE\left[\sum_{h=1}^H \frac{\Var_{\tilde{s}_h,\pi(\tilde{s}_h)}(r)}{2} \|(B^\top A_i\varphi(\tilde{s}_h,\pi(\tilde{s}_h)))_{1\le i\le d}\|_{(\pG)^{-1}}\right]\\
    &\le \|\tilde{\xi}_k-\underline{\xi}_k\|_{\pG} \frac{\betar}{2}  \bE \Bigg[ \sum_{h=1}^H \|(A_i\varphi(\tilde{s}_h,\pi(\tilde{s}_h)))_{1\le i\le d}\|_{(\pG)^{-1}} \Bigg]
\end{align*}
Then,
\begin{align*}
    \bE_{\tilde{\xi}_k} \Big[\tV{1}^\pi(s_{1}^{k}) - &V_{\thetahp, \thetahr+\underline{\xi}_k,1}^\pi (s_{1}^{k}) \Big]\\
    &\le \frac{\betar}{2} \bE_{\tilde{\xi}_k}[\|\tilde{\xi}_k-\underline{\xi}_k\|_{\pG}] \bE_{(\tilde{s}_h) \sim \pi \mid \thetahp}\left[\sum_{h=1}^H \|(A_i\varphi(\tilde{s}_h,\pi(\tilde{s}_h)))_{1\le i\le d}\|_{(\pG)^{-1}}\right].
\end{align*}
Also,
\begin{align*}
    \Big|\bE_{\xi_k \mid \bar{O}_k^\texttt{c} }[V_{\thetahp,\thetahr+\xi_k,1}(s_1^k)-&\underline{V}_{1}(s_1^k)]\Big|\\
    \le \frac{\betar}{2}& \bE_{\tilde{\xi}_k \mid \bar{O}_k^\texttt{c}}[\|\tilde{\xi}_k-\underline{\xi}_k\|_{\pG}] \bE_{(\tilde{s}_h) \sim \pi \mid \thetahp}\left[\sum_{h=1}^H \|(A_i\varphi(\tilde{s}_h,\pi(\tilde{s}_h)))_{1\le i\le d}\|_{(\pG)^{-1}}\right]\\
    \le \frac{\betar}{2}& \bE_{\tilde{\xi}_k}[\|\tilde{\xi}_k-\underline{\xi}_k\|_{\pG}] \bE_{(\tilde{s}_h) \sim \pi \mid \thetahp}\left[\sum_{h=1}^H \|(A_i\varphi(\tilde{s}_h,\pi(\tilde{s}_h)))_{1\le i\le d}\|_{(\pG)^{-1}}\right].
\end{align*}
We have a bound on the expected value of the sum of feature norms in the proof of Lemma~\ref{lem:reward_estimation_bound}. Also,
\begin{align*}
    \bE_{\tilde{\xi}_k}[\|\tilde{\xi}_k-\underline{\xi}_k\|_{\pG}] &\le \bE_{\tilde{\xi}_k}[\|\tilde{\xi}_k\|_{\pG}] + \bE_{\tilde{\xi}_k}[\|\underline{\xi}_k\|_{\pG}]\\
    &\le \sqrt{\bE_{\tilde{\xi}_k}[\|\tilde{\xi}_k\|_{\pG}^2]} + \sqrt{x_k d \log(d/\delta)}\\
    &\le \sqrt{x_k d}+\sqrt{x_k d \log(d/\delta)}
\end{align*}
The second line follows from Cauchy-Schwarz and by definition of $\underline{\xi}_k$. The last line is due to the fact that $x_k(\pG)^{-1} \sim \cN(0,x_k I_d)$, which implies $\|\tilde{\xi}_k\|_{\pG}^2 \sim \cN(0, d x_k)$. We conclude the proof by taking the sum of feature norms from the proof of Lemma~\ref{lem:reward_estimation_bound}.

We conclude that with probability at least $1-2\delta$:
\begin{align*}
    \sum_{k=1}^K V_1^\star &(s_1^k) - V_{\thetahp,\thetahr+\bar{\xi}_k,1}(s_1^k)\le \frac{\betar}{\Phi(-1)}(\sqrt{x_k d}+\sqrt{x_k d \log(d/\delta)})\\
    &\Bigg[ \sqrt{\frac{3d}{\log(2)}\log\left(1+ \frac{\alphar\|\bA\|_2^2 B_{\varphi,\bA}^2}{\eta \log(2)}\right) \left(1+\frac{\alphar B_{\varphi, A} H}{\eta}\right) H d\log(1+\alphar \eta^{-1} B_{\varphi, \bA}H)} \\
    &+ \sqrt{K H d \log \left(1+\alphar \eta^{-1} B_{\varphi, \mathbb{A}} H K\right)\log(e/\delta^2)} \Bigg]
\end{align*}

\section{Concentrations}\label{app:concentrations}

\subsection{Concentration of the transition parameter} \label{app:transition_concentration}

We recall the important concentration of the maximum likelihood estimator for general bilinear exponential families (\cf Theorem 1 of \cite{chowdhury2021reinforcement}).
\begin{theorem} 
    \label{thm:Transition_confidence_region}
    Suppose $\left\{\mathcal{F}_{t}\right\}_{t=0}^{\infty}$ is a filtration such that for each $t$, (i) $s_{t+1}$ is $\mathcal{F}_{t}$-measurable, (ii) $\left(s_{t}, a_{t}\right)$ is $\mathcal{F}_{t-1}$ measurable, and (iii) given $\left(s_{t}, a_{t}\right), s_{t+1} \sim P_{\thetap}^\texttt{p} \left(\cdot \mid s_{t}, a_{t}\right)$ according to the exponential family defined by Equation~\eqref{def:transition_model}. Let $\thetahp(k)$ be the penalized MLE defined by Equation~\eqref{eq:ML_transition}, and let $\pz_{s, a}(\theta)$ be strictly convex in $\theta$ for all $(s, a).$ Then, for any $\delta \in(0,1]$, with probability at least $1-\delta$, the following holds uniformly over all $n \in \mathbb{N}$ :
    \begin{equation*}
        \sum_{t=1}^{k} \mathrm{KL}_{s_{t}, a_{t}}\left(\thetahp(k), \thetap\right)+\frac{\eta}{2}\left\|\thetap-\thetahp(k)\right\|_{\mathbb{A}}^{2}-\frac{\eta}{2}\left\|\thetap\right\|_{\mathbb{A}}^{2} \leq \log \left(\frac{C_{\mathrm{A}, k}^\texttt{p}}{\delta}\right),
    \end{equation*}
    where $C_{\mathrm{A}, k}^\texttt{p} = \left(\!\int_{\bR^{d}} \exp\! \left(\!-\!\frac{\eta}{2}\left\|\theta^{\prime}\right\|_{\mathbb{A}}^{2}\right)\! d \theta^{\prime}\!\right)/\left(\!\int_{\bR^d} \exp \left(-\sum_{t=1}^{k} \mathrm{KL}_{s_{t}, a_{t}}\left(\theta_{k}, \theta^{\prime}\right)\!-\!\frac{\eta}{2}\left\|\theta^{\prime}-\theta_{k}\right\|_{\mathbb{A}}^{2}\right)\! d \theta^{\prime}\!\right)$. Define $G_{s, a}\eqdef \left(\varphi(s, a)^{\top} A_{i}^{\top} A_{j} \varphi(s, a)\right)_{i, j \in [d]}$, we have
    \begin{equation*}
        C_{\mathbb{A}, k}^\texttt{p} \leq \det\left(I+\betap \eta^{-1} \mathbb{A}^{-1} \sum_{t=1}^{k} G_{s_{t}, a_{t}}\right),
    \end{equation*}
    where $\betap=\sup_{\theta, s, a} \lambda_{ \max } \left(\mathbb{C}_{s, a}^{ \theta }\left[ \psi \left( s^{\prime} \right) \right]\right)$.
\end{theorem}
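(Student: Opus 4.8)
The plan is to prove this self-normalized bound by the \emph{method of mixtures} (pseudo-maximization), adapting the Abbasi-Yadkori-type argument to the exponential-family likelihood. For each fixed comparison parameter $\theta'\in\bR^d$, I would first form the likelihood-ratio process
\[
L_k(\theta') = \prod_{t=1}^{k}\frac{\bP_{\theta'}(s_{t+1}\mid s_t,a_t)}{\bP_{\thetap}(s_{t+1}\mid s_t,a_t)}=\exp\Big(\sum_{t=1}^{k}\big[\psi(s_{t+1})^\top M_{\theta'-\thetap}\varphi(s_t,a_t)-\pz_{s_t,a_t}(\theta')+\pz_{s_t,a_t}(\thetap)\big]\Big).
\]
The key exponential-family identity is $\bE_{\thetap}[\exp(\psi(s')^\top M_{\theta'-\thetap}\varphi(s,a))\mid s,a]=\exp(\pz_{s,a}(\theta')-\pz_{s,a}(\thetap))$, which is immediate from the definition of the log-partition $\pz$ in \eqref{def:transition_model}. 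Conditioning on $\cF_{t-1}$ and invoking the measurability hypotheses (i)--(iii) then gives $\bE[L_k(\theta')\mid\cF_{k-1}]=L_{k-1}(\theta')$, so each $L_k(\theta')$ is a nonnegative martingale with $\bE[L_k(\theta')]=1$; strict convexity of $\pz$ ensures all integrals are finite.

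Next I would integrate over $\theta'$ against the Gaussian prior $dp(\theta')\propto\exp(-\tfrac{\eta}{2}\|\theta'\|_{\mathbb{A}}^2)\,d\theta'$ to build the mixture $\bar M_k=\int L_k(\theta')\,dp(\theta')$; by Tonelli it is again a nonnegative martingale with $\bar M_0=1$, and Ville's maximal inequality yields $\bP(\exists k:\bar M_k\ge 1/\delta)\le\delta$, which is exactly what produces the uniform-in-$k$ guarantee. The core of the proof is then a purely algebraic identity. Writing the penalized log-likelihood $\Psi_k(\theta)=\sum_t\log\bP_\theta(s_{t+1}\mid s_t,a_t)-\tfrac{\eta}{2}\|\theta\|_{\mathbb{A}}^2$ and inserting the score equation \eqref{eq:ML_transition} (first-order optimality of $\thetahp(k)$), I would establish the Bregman decomposition
\[
\Psi_k(\thetahp(k))-\Psi_k(\theta')=\sum_{t=1}^{k}\mathrm{KL}_{s_t,a_t}(\thetahp(k),\theta')+\tfrac{\eta}{2}\|\theta'-\thetahp(k)\|_{\mathbb{A}}^2 .
\]

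Substituting this identity into $\bar M_k$ and isolating the factor evaluated at $\theta'=\thetap$ gives $\log\bar M_k=\big[\sum_t\mathrm{KL}_{s_t,a_t}(\thetahp(k),\thetap)+\tfrac{\eta}{2}\|\thetap-\thetahp(k)\|_{\mathbb{A}}^2-\tfrac{\eta}{2}\|\thetap\|_{\mathbb{A}}^2\big]-\log C_{\mathbb{A},k}^{\texttt{p}}$, where the ratio of integrals defining $C_{\mathbb{A},k}^{\texttt{p}}$ emerges automatically; combined with the Ville bound $\log\bar M_k\le\log(1/\delta)$ this is precisely the first display. For the determinant bound I would use the Hessian control $\mathrm{KL}_{s,a}(\thetahp(k),\theta')\le\tfrac{\betap}{2}\|\theta'-\thetahp(k)\|_{G_{s,a}}^2$ of inequality~\eqref{ineq:transition_kl} to lower-bound the denominator of $C_{\mathbb{A},k}^{\texttt{p}}$ by a Gaussian integral with precision $\betap\sum_t G_{s_t,a_t}+\eta\mathbb{A}$; dividing by the numerator Gaussian integral with precision $\eta\mathbb{A}$ evaluates to $\det(I+\betap\eta^{-1}\mathbb{A}^{-1}\sum_t G_{s_t,a_t})^{1/2}\le\det(I+\betap\eta^{-1}\mathbb{A}^{-1}\sum_t G_{s_t,a_t})$, giving the stated bound.

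The part I expect to be delicate is making the (super)martingale machinery fully rigorous rather than the algebra: verifying integrability of $L_k(\theta')$ and of the mixture, justifying the Fubini/Tonelli interchange inside the conditional expectation, and passing cleanly from Ville's inequality to the uniform-in-$n$ conclusion. By contrast, the KL decomposition and the Gaussian-integral evaluation are mechanical once the score equation \eqref{eq:ML_transition} and the Hessian bound \eqref{ineq:transition_kl} are in hand.
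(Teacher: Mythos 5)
Your proposal is correct and follows essentially the same route as the paper's proof (given there for the analogous reward-parameter theorem, to which the paper defers): a likelihood-ratio martingale mixed against the Gaussian prior $\exp(-\tfrac{\eta}{2}\|\theta'\|_{\mathbb{A}}^2)$, a maximal inequality for uniformity over $k$, score-equation algebra to extract the $\mathrm{KL}$ plus quadratic terms and the ratio $C_{\mathbb{A},k}^{\texttt{p}}$, and the Hessian bound to get the determinant. Your likelihood-ratio process $L_k(\theta')$ is exactly the paper's $M_n^\lambda$ written with $\lambda=\theta'-\thetap$, your Bregman decomposition of the penalized log-likelihood condenses the paper's duality computations, and invoking Ville's inequality replaces the paper's explicit stopping-time-plus-Fatou derivation of the same fact, so the differences are purely presentational.
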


A proof of this result can be found in the work \cite{chowdhury2021reinforcement}. We provide an almost similar proof for the concentration of rewards in the next section.

\begin{corollary}
\label{cor:Transition_euclidean_confidence_region}
The previous theorem implies a simple euclidean confidence region. Indeed, with probability at least $1-\delta$, for all $k\in \bN$
\begin{equation*}
    \left\|\thetap-\thetahp(k)\right\|_{\bar{G}_{n}^\texttt{p}}^{2} \le \frac{2}{\alphap} \betap(k,\delta),
\end{equation*}
where $\betap(k,\delta) \eqdef \betap_{(k-1) H}(\delta)=\frac{2}{2} B_{A}^{2}+\log \left(2 C_{A,k}^\texttt{p} / \delta\right)$.
\end{corollary}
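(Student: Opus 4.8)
The plan is to turn the KL-type confidence bound of Theorem~\ref{thm:Transition_confidence_region} into a purely quadratic (Euclidean) statement by exploiting the local geometry controlled by Assumption~\ref{ass:bounds_hessians}. Rearranging the theorem gives, with probability at least $1-\delta$ uniformly over $k$,
\[
\sum_{t=1}^{k}\mathrm{KL}_{s_t,a_t}\!\big(\thetahp(k),\thetap\big)+\tfrac{\eta}{2}\big\|\thetap-\thetahp(k)\big\|_{\bA}^2 \;\le\; \log\!\Big(\tfrac{C_{\bA,k}^\texttt{p}}{\delta}\Big)+\tfrac{\eta}{2}\big\|\thetap\big\|_{\bA}^2 .
\]
The first move is to bound the penalty term on the right using $\|\thetap\|_{\bA}\le B_{\bA}$ (Assumption~1 of Theorem~\ref{thm:regret_bound}), giving $\tfrac{\eta}{2}\|\thetap\|_{\bA}^2\le \tfrac{\eta}{2}B_{\bA}^2$. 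The second move is to lower-bound the KL sum via inequality~\eqref{ineq:transition_kl} — itself a Taylor expansion of the log-partition function combined with the Hessian bounds of Assumption~\ref{ass:bounds_hessians} — to obtain $\sum_{t=1}^{k}\mathrm{KL}_{s_t,a_t}(\thetahp(k),\thetap)\ge \tfrac{\alphap}{2}\|\thetap-\thetahp(k)\|^2_{\sum_t G_{s_t,a_t}}$, where after relabelling the episodic indices the empirical Gram term matches the sum appearing inside $\pG$.

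The crux — and the one step that must be handled carefully — is the recombination of the two quadratic forms. The factor $\eta/\alphap$ in the definition $\pG=\tfrac{\eta}{\alphap}\bA+\sum_t G_{s_t,a_t}$ is chosen precisely so that
\[
\tfrac{\alphap}{2}\big\|\thetap-\thetahp(k)\big\|^2_{\sum_t G_{s_t,a_t}}+\tfrac{\eta}{2}\big\|\thetap-\thetahp(k)\big\|^2_{\bA}=\tfrac{\alphap}{2}\big\|\thetap-\thetahp(k)\big\|^2_{\pG},
\]
using $\tfrac{\alphap}{2}\cdot\tfrac{\eta}{\alphap}=\tfrac{\eta}{2}$. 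Substituting the two bounds above then yields $\tfrac{\alphap}{2}\|\thetap-\thetahp(k)\|^2_{\pG}\le \tfrac{\eta}{2}B_{\bA}^2+\log(C_{\bA,k}^\texttt{p}/\delta)$, and dividing by $\alphap/2$ gives exactly the claimed confidence region with $\betap(k,\delta)=\tfrac{\eta}{2}B_{\bA}^2+\log(C_{\bA,k}^\texttt{p}/\delta)$; the explicit determinant bound $C_{\bA,k}^\texttt{p}\le\det(I+\betap\eta^{-1}\bA^{-1}\sum_t G_{s_t,a_t})$ from the theorem can be plugged in to make $\betap(k,\delta)$ fully explicit in terms of $\gamma_k^\texttt{p}$.

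Two remaining points are routine but worth flagging. First, since Theorem~\ref{thm:Transition_confidence_region} already holds uniformly over $k$ with probability at least $1-\delta$, the Euclidean region inherits the same uniformity and no extra union bound is needed. Second, the only genuine subtlety is the ordering of the KL arguments: the theorem features $\mathrm{KL}_{s_t,a_t}(\thetahp(k),\thetap)$ whereas~\eqref{ineq:transition_kl} is stated for $\mathrm{KL}_{s,a}(\thetap,\thetapp)$, but both orderings obey the same quadratic lower bound because the Hessian eigenvalue bounds of Assumption~\ref{ass:bounds_hessians} hold along the entire segment $[\thetap,\thetahp(k)]$, so the direction of the divergence is immaterial for this estimate.
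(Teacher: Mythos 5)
Your proof is correct and follows essentially the same route as the paper: both arguments rest on the decomposition $\pG = \frac{\eta}{\alphap}\bA + \sum G_{s,a}$, the quadratic lower bound on the KL terms from inequality~\eqref{ineq:transition_kl} (Assumption~\ref{ass:bounds_hessians}), the bound $\|\thetap\|_{\bA}\le B_{\bA}$, and Theorem~\ref{thm:Transition_confidence_region}; the paper merely presents the same chain of inequalities in the opposite direction, expanding $\frac{1}{2}\|\thetap-\thetahp(k)\|_{\pG}^2$ and upper-bounding it by $(\alphap)^{-1}$ times the penalized KL sum. Your remarks on the uniformity over $k$ and on the irrelevance of the KL argument ordering (since the Hessian bounds hold along the whole segment) are also accurate.
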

\begin{proof}
The result follows from the following simple calculations:
\begin{align*}
    \frac{1}{2}\left\|\thetap-\thetahp(k)\right\|_{\bar{G}_{k}}^{2} &= \frac{(\alphap)^{-1} \eta}{2}\left\|\thetap-\thetahp(k)\right\|_{\mathbb{A}}^{2}+\sum_{\tau=1}^{k-1} \sum_{h=1}^{H} \frac{1}{2}\left\|\thetap-\thetahp(k)\right\|_{G_{s_h^\tau, a_{h}^{\tau}}}^{2}\\
    &\leq (\alphap)^{-1}\left(\frac{\eta}{2}\left\|\thetap-\thetahp(k)\right\|_{\mathbb{A}}^{2}+\sum_{\tau=1}^{k-1} \sum_{h=1}^{H} \mathrm{KL}_{s_{h}^{\tau}, a_{h}^{\tau}}\left(\theta_{k}, \theta \right) \right).
\end{align*}
\end{proof}

%% ---------- End of subsection Concentration of Transition --------------
\subsection{Concentration of the reward parameter (contribution)}\label{app:reward_concentration}

\begin{theorem}\label{thm:Reward_confidence_region}
    Suppose $\left\{\mathcal{F}_{t}\right\}_{t=0}^{\infty}$ is a filtration such that for each $t$, (i) $r(s_t,a_t)$ is $\mathcal{F}_{t}$-measurable, (ii) $\left(s_{t}, a_{t}\right)$ is $\mathcal{F}_{t-1}$ measurable, and (iii) given $\left(s_{t}, a_{t}\right), r(s_t,a_t) \sim P_{\thetar}^\texttt{r} \left(\cdot \mid s_{t}, a_{t}\right)$ according to the exponential family defined by \eqref{def:reward_model}. Let $\thetahr(k)$ be the penalized MLE defined by Equation~\eqref{eq:ML_reward}, and let $\rz_{s, a}(\theta)$ be strictly convex in $\theta$ for all $(s, a).$ Then, for any $\delta \in(0,1]$, with probability at least $1-\delta$, the following holds uniformly over all $k \in \mathbb{N}$ :
    \begin{equation*}
        \sum_{t=1}^{k} \mathrm{KL}_{s_{t}, a_{t}}\left(\thetahr(k), \thetar\right)+\frac{\eta}{2}\left\|\thetar-\thetahr(k)\right\|_{\mathbb{A}}^{2}-\frac{\eta}{2}\left\|\thetar\right\|_{\mathbb{A}}^{2} \leq \log \left(\frac{C_{\mathrm{A}, k}^\texttt{r}}{\delta}\right),
    \end{equation*}
    where $C_{\mathrm{A}, k}^\texttt{r}=\left(\!\int_{\bR^{d}} \exp\! \left(\!-\!\frac{\eta}{2}\left\|\theta^{\prime}\right\|_{\mathbb{A}}^{2}\right)\! d \theta^{\prime}\!\right)/\left(\!\int_{\bR^d} \exp \left(-\sum_{t=1}^{k} \mathrm{KL}_{s_{t}, a_{t}}\left(\theta_{k}, \theta^{\prime}\right)\!-\!\frac{\eta}{2}\left\|\theta^{\prime}-\theta_{k}\right\|_{\mathbb{A}}^{2}\right)\! d \theta^{\prime}\!\right)$. Define $G_{s, a}\eqdef \left(\varphi(s, a)^{\top} A_{i}^{\top} A_{j} \varphi(s, a)\right)_{i, j \in [d]}$, we have
    \begin{equation*}
        C_{\mathbb{A}, k} \leq \det\left(I+\betar \eta^{-1} \mathbb{A}^{-1} \sum_{t=1}^{k} G_{s_{t}, a_{t}}\right),
    \end{equation*}
    where $\betar:=\|B\|_2^2 \: \sup _{\theta, s, a}  \Var_{s,a}^\theta(r)$.
\end{theorem}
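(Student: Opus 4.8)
The argument mirrors the transition case (Theorem~\ref{thm:Transition_confidence_region}), specialised to the one-dimensional reward family, which is why I expect an ``almost similar'' proof. First I would record that \eqref{def:reward_model} is a scalar exponential family whose natural parameter $\langle\theta,\phi_{s,a}\rangle \eqdef B^\top M_\theta\varphi(s,a)=\sum_{i\le d}\theta_i\,B^\top A_i\varphi(s,a)$ is \emph{linear} in $\theta$, where $\phi_{s,a}\eqdef(B^\top A_i\varphi(s,a))_{i\le d}$. Its log-normaliser then satisfies $\nabla_\theta\rz_{s,a}(\theta)=\bE_{s,a}^\theta[r]\,\phi_{s,a}$ and $\nabla^2_\theta\rz_{s,a}(\theta)=\Var_{s,a}^\theta(r)\,\phi_{s,a}\phi_{s,a}^\top$, so that Assumption~\ref{ass:bounds_hessians} gives $\nabla^2_\theta\rz_{s,a}(\theta)\preceq\betar\,G_{s,a}$. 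This curvature control is what I will use at the end.

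The engine is the method of mixtures. For each fixed $\theta'\in\bR^d$ I would set
\[
M_t^{\theta'}\eqdef\prod_{\tau=1}^t\exp\!\Big(r_\tau\,\langle\theta'-\thetar,\phi_{s_\tau,a_\tau}\rangle-\big(\rz_{s_\tau,a_\tau}(\theta')-\rz_{s_\tau,a_\tau}(\thetar)\big)\Big),
\]
which equals $\prod_\tau \bP_{\theta'}(r_\tau\mid s_\tau,a_\tau)/\bP_{\thetar}(r_\tau\mid s_\tau,a_\tau)$. Under the filtration hypotheses (i)--(iii), $(s_\tau,a_\tau)$ is $\cF_{\tau-1}$-measurable and $r_\tau\sim\bP_{\thetar}(\cdot\mid s_\tau,a_\tau)$, so $\bE[M_t^{\theta'}\mid\cF_{t-1}]=M_{t-1}^{\theta'}$, i.e. $(M_t^{\theta'})_t$ is a nonnegative martingale with $\bE[M_t^{\theta'}]=1$. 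Mixing against the Gaussian prior $q(\theta')\propto\exp(-\tfrac{\eta}{2}\|\theta'\|_{\bA}^2)$ and invoking Tonelli, $\bar M_t\eqdef\int_{\bR^d}M_t^{\theta'}q(\theta')\dd\theta'$ is again a nonnegative martingale with $\bE[\bar M_t]=1$, whence Ville's maximal inequality gives $\bP(\exists k:\bar M_k\ge1/\delta)\le\delta$; equivalently $\log\bar M_k\le\log(1/\delta)$ uniformly over $k\in\bN$ with probability at least $1-\delta$.

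It then remains to identify $\bar M_k$. Writing the penalised negative log-likelihood $\mathcal{L}_k(\theta)\eqdef\sum_{\tau\le k}\big(\rz_{s_\tau,a_\tau}(\theta)-r_\tau\langle\theta,\phi_{s_\tau,a_\tau}\rangle\big)+\tfrac{\eta}{2}\|\theta\|_{\bA}^2$, collecting the exponents shows $\bar M_k=\exp\!\big(\mathcal{L}_k(\thetar)-\tfrac{\eta}{2}\|\thetar\|_{\bA}^2\big)\big(\!\int e^{-\mathcal{L}_k(\theta')}\dd\theta'\big)\big/\big(\!\int e^{-\frac{\eta}{2}\|\theta''\|_{\bA}^2}\dd\theta''\big)$. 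The crux is the exact identity $\mathcal{L}_k(\theta')-\mathcal{L}_k(\thetahr(k))=\sum_{\tau\le k}\mathrm{KL}_{s_\tau,a_\tau}(\thetahr(k),\theta')+\tfrac{\eta}{2}\|\theta'-\thetahr(k)\|_{\bA}^2$, valid for \emph{every} $\theta'$, which I would obtain by expanding both sides and cancelling the first-order term via the MLE stationarity condition \eqref{eq:ML_reward} (i.e. $\nabla\mathcal{L}_k(\thetahr(k))=0$). Factoring $e^{-\mathcal{L}_k(\thetahr(k))}$ out of the numerator integral turns the integral ratio into $e^{-\mathcal{L}_k(\thetahr(k))}/C_{\bA,k}^\texttt{r}$, and substituting yields $\log\bar M_k=\big[\sum_\tau\mathrm{KL}_{s_\tau,a_\tau}(\thetahr(k),\thetar)+\tfrac{\eta}{2}\|\thetar-\thetahr(k)\|_{\bA}^2-\tfrac{\eta}{2}\|\thetar\|_{\bA}^2\big]-\log C_{\bA,k}^\texttt{r}$. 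Combining this with $\log\bar M_k\le\log(1/\delta)$ gives exactly the stated inequality.

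Finally, for the determinant bound I would lower-bound the denominator integral defining $C_{\bA,k}^\texttt{r}$: a second-order expansion of $\mathrm{KL}_{s_\tau,a_\tau}(\thetahr(k),\theta')$ in $\theta'$ together with $\nabla^2\rz\preceq\betar G$ gives $\sum_\tau\mathrm{KL}_{s_\tau,a_\tau}(\thetahr(k),\theta')+\tfrac{\eta}{2}\|\theta'-\thetahr(k)\|_{\bA}^2\le\tfrac12\|\theta'-\thetahr(k)\|^2_{\eta\bA+\betar\sum_\tau G_{s_\tau,a_\tau}}$, so the denominator dominates the corresponding Gaussian integral; the ratio of normalisers is then $\det(I+\betar\eta^{-1}\bA^{-1}\sum_\tau G_{s_\tau,a_\tau})^{1/2}$, and since the argument dominates the identity its determinant exceeds one, so this is at most the $\det(\cdot)$ claimed in the statement. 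I expect the main obstacle to be the collapse identity of the third paragraph and the verification of the martingale/integrability conditions (Tonelli plus the mean-one property) for the mixture, rather than any sharp inequality; relative to the transition case the only genuine subtlety is that the matrix $B$ enters $\phi_{s,a}$ and hence $\betar$.
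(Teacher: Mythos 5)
Your proposal is correct and follows essentially the same route as the paper's proof: a likelihood-ratio martingale mixed against the Gaussian prior $\mathcal{N}(0,(\eta \mathbb{A})^{-1})$, identification of the mixture using the MLE stationarity condition \eqref{eq:ML_reward}, and the Hessian upper bound $\nabla^2 \rz_{s,a} \preceq \betar G_{s,a}$ to obtain the determinant bound. The only differences are presentational — you invoke Ville's maximal inequality where the paper re-derives it via a stopped martingale and Fatou's lemma, you organize the collapse identity through the penalized loss $\mathcal{L}_k$ rather than through Bregman-duality manipulations, and you explicitly justify the $\det(\cdot)^{1/2} \le \det(\cdot)$ step that the paper's final display glosses over.
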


\begin{proof}
We proceed similar to the proof of Theorem 1 in \cite{chowdhury2019online}.

\paragraph{Step 1: Martingale construction.} First, observe that by assuming strict convexity, the log-partition function $\rz_{s, a}$ becomes a Legendre function. Now for the conditional exponential family model, the KL divergence between $\probr{\cdot \mid s, a}$ and $\probrp{\cdot \mid s, a}$ can be expressed as a Bregman divergence associated to $\rz_{s, a}$ with the parameters reversed, i.e.
\begin{equation*}
    \mathrm{KL}_{s, a}\left(\thetar, \thetarp \right):=\mathrm{KL}\left(P_{\thetar}(\cdot \mid s, a), P_{\thetarp}(\cdot \mid s, a)\right)=B_{Z_{s, a}}\left(\thetarp, \thetar\right).
\end{equation*}
Now, for any $\lambda \in \mathbb{R}^{d}$, we introduce the function $B_{Z_{n, \alpha}, \thetar}(\lambda)=B_{Z_{n, \alpha}}\left(\thetar+\lambda, \lambda\right)$ and define
\begin{equation*}
    M_{n}^{\lambda}=\exp \left(\lambda^{\top} S_{n}-\sum_{t=1}^{n} B_{Z_{n_{t}, a_{t}}, \thetar}(\lambda)\right)
\end{equation*}
where $\forall i \leq d$, we denote $\left(S_{n}\right)_{i}=\sum_{t=1}^{n}\left(r\left(s_{t},a_t \right)-\mathbb{E}_{s_{t}, a_{t}}^{\thetar}\left[r\right]\right) B^{\top} A_{i} \varphi\left(s_{t}, a_{t}\right) .$ Note that $M_{n}^{\lambda}>0$ and it is $\mathcal{F}_{n^{-}}$ measurable. Furthermore, we have for all $(s, a)$,
\begin{align*}
    \mathbb{E}_{s, a}^{\thetar}&\left[\exp \left(\sum_{i=1}^{d} \lambda_{i}\left(r\left(s_{t},a_t \right)-\mathbb{E}_{s_{t}, a_{t}}^{\thetar}\left[r\right]\right) B^{\top} A_{i} \varphi\left(s_{t}, a_{t}\right)\right)\right] \\
    &=\exp \left(-\lambda^{\top} \nabla \rz_{s, a}\left(\thetar\right)\right) \int_{\mathcal{S}} \exp \left(\sum_{i=1}^{d}\left(\thetar_{i}+\lambda_{i}\right) B^\top A_{i} \varphi(s, a)-\rz_{s, a}(\thetar)\right) d r \\
    &=\exp \left(\rz_{s, a}(\thetar+\lambda)-\rz_{s, a}(\thetar)-\lambda^{\top} \nabla \rz_{s, a}(\thetar)\right)=\exp \left(B_{\rz_{s, a}}(\thetar)\right)
\end{align*}
This implies $\mathbb{E}\left[\exp \left(\lambda^{\top} S_{n}\right) \mid \mathcal{F}_{n-1}\right]=\exp \left(\lambda^{\top} S_{n-1}+B_{Z_{n_{n}, a_{n}, \thetar}}(\lambda)\right)$ thus $\mathbb{E}\left[M_{n}^{\lambda} \mid \mathcal{F}_{n-1}\right]=M_{n-1}^{\lambda}$. Therefore $\left\{M_{n}^{\lambda}\right\}_{n=0}^{\infty}$ is a non-negative martingale adapted to the filtration $\left\{\mathcal{F}_{n}\right\}_{n=0}^{\infty}$ and actually satisfies $\mathbb{E}\left[M_{n}^{\lambda}\right]=1$. For any prior density $q(\theta)$ for $\theta$, we now define a mixture of martingales
\begin{equation}
    \label{def:mixture_martingales}
    M_{n}=\int_{\mathbb{R}^{d}} M_{n}^{\lambda} q\left(\thetar+\lambda\right) d \lambda
\end{equation}
Then $\left\{M_{n}\right\}_{n=0}^{\infty}$ is also a non-negative martingale adapted to $\left\{\mathcal{F}_{n}\right\}_{n=0}^{\infty}$ and in fact, $\mathbb{E}\left[M_{n}\right]=1$.

\paragraph{Step 2: Method of mixtures.} Considering the prior density $\cN(0,(\eta \bA)^{-1})$, we obtain from \eqref{def:mixture_martingales} that
\begin{equation}
    \label{eq:mixture_martingales}
    M_{n}=c_{0} \int_{\mathbb{R}^{d}} \exp \left(\lambda^{\top} S_{n}-\sum_{t=1}^{n} B_{\rz_{x_{t}, a_{t}}, \thetar}(\lambda)-\frac{\eta}{2}\left\|\thetar+\lambda\right\|_{\bA}^{2}\right) d \lambda,
\end{equation}
where $c_{0}=\frac{1}{\int_{\bR^{d}} \exp \left(-\frac{\eta}{2}\left\|\theta^{\prime}\right\|_{\Lambda}^{2}\right) d \theta^{\prime}} .$ We now introduce the function $\rz_{n}(\theta)=\sum_{t=1}^{n} \rz_{s_{t}, a_{t}}(\theta) .$ Note that $\rz_{n}$ is a also Legendre function and its associated Bregman divergence satisfies
\begin{equation*}
    B_{\rz_{n}}\left(\theta^{\prime}, \theta\right)=\sum_{t=1}^{n}\left(\rz_{s_{t}, a_{t}}\left(\theta^{\prime}\right)-\rz_{s_{t}, a_{t}}(\theta)-\left(\theta^{\prime}-\theta\right)^{\top} \nabla \rz_{S_{t}, a_{t}}(\theta)\right)=\sum_{t=1}^{n} B_{\rz_{s_{t}, \alpha_{t}}}\left(\theta^{\prime}, \theta\right)
\end{equation*}
Furthermore, we have $\sum_{t=1}^{n} B_{\rz_{s_{t}, \alpha_{t}}, \thetar}(\lambda)=B_{\rz_{n}, \thetar}(\lambda)$. From the penalized likelihood formula \eqref{eq:ML_reward}, recall that
\begin{equation*}
    \forall i \leq d, \quad \sum_{t=1}^{n} \nabla_{i} \rz_{s_{t}, a_{t}}\left(\thetahr(k)\right)+\frac{\eta}{2} \nabla_{i}\|\thetahr(k)\|_{\bA}^{2}=\sum_{t=1}^{k} r_t B^{\top} A_{i} \varphi\left(s_{t}, a_{t}\right).
\end{equation*}
This yields
\begin{equation}
    \label{eq:S_n}
    S_{k}=\sum_{t=1}^{k}\left(\nabla \rz_{s_{t}, a_{t}}\left(\thetahr(k)\right)-\nabla \rz_{s_{t}, a_{t}}\left(\thetar\right)\right)+\eta \bA \thetahr(k)=\nabla \rz_{k}\left(\thetahr(k)\right)-\nabla \rz_{k}\left(\thetar\right)+\eta \bA \thetahr(k)
\end{equation}
We now obtain from \eqref{eq:mixture_martingales} and \eqref{eq:S_n} that
\begin{equation}
    \label{eq:9}
    M_{k}=c_{0} \cdot \exp \left(-\frac{\eta}{2}\left\|\thetar\right\|_{A}^{2}\right) \int_{\mathbb{R}^{d}} \exp \left(\lambda^{\top} x_{k}-B_{Z_{k}, \theta^{*}}(\lambda)+g_{k}(\lambda)\right) d \lambda,
\end{equation}
where we introduced $g_{k}(\lambda)=\frac{\eta}{2}\left(2 \lambda^{\top} \bA \thetahr(k)+\left\|\thetar\right\|_{\bA}^{2}-\left\|\thetar+\lambda\right\|_{\bA}^{2}\right)$ and $x_{k}=\nabla \rz_{k}\left(\thetahr(k)\right)-\nabla \rz_{k}\left(\thetar\right)$.\\
Now, note that $\sup _{\lambda \in \mathbb{R}^{d}} g_{k}(\lambda)=\frac{\eta}{2}\left\|\thetar-\thetahr(k)\right\|_{\bA}^{2}$, where the supremum is attained at $\lambda^{\star}=\thetahr(k)-\thetar$. We then have
\begin{align}
    g_{k}(\lambda) &=g_{n}(\lambda)+\sup _{\lambda \in \mathbb{R}^{\star}} g_{k}(\lambda)-g_{k}\left(\lambda^{\star}\right) \nonumber\\
    &=\frac{\eta}{2}\left\|\thetahr(k)-\thetar\right\|_{\bA}^{2}+\eta\left(\lambda-\lambda^{\star}\right)^{\top} \bA\left(\thetar+\lambda^{\star}\right)+\frac{\eta}{2}\left\|\thetar+\lambda^{\star}\right\|_{A}^{2}-\frac{\eta}{2}\left\|\thetar+\lambda\right\|_{\bA}^{2} \nonumber\\
    &=B_{\rz_{0}}\left(\thetar, \thetahr(k)\right)+\left(\lambda-\lambda^{\star}\right)^{\top} \nabla \rz_{0}\left(\thetar+\lambda^{\star}\right)+\rz_{0}\left(\thetar+\lambda^{\star}\right)-\rz_{0}\left(\thetar+\lambda\right) \label{eq:10}
\end{align}
where we have introduced the Legendre function $\rz_{0}(\theta)=\frac{\eta}{2}\|\theta\|_{\bA}^{2}$. We now have from \eqref{eq:Bregman_duality} that
\begin{align*}
    \sup _{\lambda \in \mathbb{R}^{d}}&\left(\lambda^{\top} x_{n}-B_{\rz_{n}, \thetar}(\lambda)\right) \\
    &=B_{\rz_{n}, \thetar}^{\star}\left(x_{n}\right)=B_{\rz_{n}, \thetar}^{\star}\left(\nabla \rz_{n}\left(\thetahr(n)\right)-\nabla \rz_{n}\left(\thetar\right)\right)=B_{\rz{n}}\left(\thetar, \thetahr(n)\right).
\end{align*}
Further, any optimal $\lambda$ must satisfy
\begin{equation*}
    \nabla \rz_{n}\left(\thetar+\lambda\right)-\nabla \rz_{n}\left(\thetar\right)=x_{n} \Longrightarrow \nabla \rz_{n}\left(\thetar+\lambda\right)=\nabla \rz_{n}\left(\thetahr(n)\right).
\end{equation*}
One possible solution is $\lambda=\lambda^{\star}$. Now, since $\rz_{n}$ is strictly convex, the supremum is indeed attained at $\lambda=\lambda^{\star}$. We then have
\begin{align}
    \lambda^{\top} x_{n} &- B_{\rz_{n}, \thetar}(\lambda) \nonumber\\
    &=\lambda^{\top} x_{n}-B_{\rz_{n}, \thetar}(\lambda)+B_{\rz_{n}}\left(\thetar, \thetahr(n)\right)-\left(\lambda^{\star} x_{n}-B_{\rz_{n}, \thetar}\left(\lambda^{\star}\right)\right) \nonumber\\
    &=B_{\rz_{n}}\left(\thetar, \thetahr(n)\right)+\left(\lambda-\lambda^{\star}\right)^{\top} \nabla \rz_{n}\left(\thetar+\lambda^{\star}\right)+B_{\rz_{n}, \theta^{*}}\left(\lambda^{\star}\right)-B_{\rz_{n}, \theta^{*}}(\lambda) \nonumber\\
    & \quad-\left(\lambda-\lambda^{\star}\right)^{\top} \nabla \rz_{n}\left(\thetar\right) \nonumber\\
    &=B_{\rz_{n}}\left(\thetar, \thetahr(n)\right)+\left(\lambda-\lambda^{\star}\right)^{\top} \nabla \rz_{n}\left(\thetar+\lambda^{\star}\right)+\rz_{n}\left(\thetar+\lambda^{\star}\right)-\rz_{n}\left(\thetar+\lambda\right) \label{eq:11}
\end{align}

Plugging Equation~\eqref{eq:10} and Equation~\eqref{eq:11} in Equation~\eqref{eq:9}, we obtain
\begin{align*}
    M_{n} &= c_{0} \cdot \exp \left(\sum_{j \in\{0, n\}} B_{\rz_{j}}\left(\thetar, \theta_{j}\right)-\frac{\eta}{2}\left\|\thetar\right\|_{A}^{2}\right) \\
    & \qquad \times \int_{\mathbb{R}^{d}} \exp \left(\sum_{j \in\{0, n\}}\left(\left(\lambda-\lambda^{\star}\right)^{\top} \nabla \rz_{j}\left(\thetar+\lambda^{\star}\right)+\rz_{j}\left(\thetar+\lambda^{\star}\right)-\rz_{j}\left(\thetar+\lambda\right)\right)\right) d \lambda \\
    &= c_{0} \cdot \exp \left(\sum_{j \in\{0, n\}} B_{\rz_{j}}\left(\thetar, \thetahr(n)\right)-\frac{\eta}{2}\left\|\thetar\right\|_{\AA}^{2}\right)\\
    & \qquad \times \exp \left(-\sum_{j \in\{0, n\}}\left(\left(\thetar+\lambda^{\star}\right)^{\top} \nabla \rz_{j}\left(\thetar+\lambda^{\star}\right)-\rz_{j}\left(\thetar+\lambda^{\star}\right)\right)\right) \\
    & \qquad \times \int_{\mathbb{R}^{d}} \exp \left(\sum_{j \in\{0, n\}}\left(\left(\thetar+\lambda\right)^{\top} \nabla \rz_{j}\left(\thetar+\lambda^{\star}\right)-\rz_{j}\left(\thetar+\lambda\right)\right)\right) d \lambda \\
    &= \frac{c_{0}}{c_{\mathrm{n}}} \exp \left(\sum_{j \in\{0, n\}} B_{\rz_{j}}\left(\thetar, \thetahr(n)\right)-\frac{\eta}{2}\left\|\thetar\right\|_{\mathbb{A}}^{2}\right)\\
    &\qquad\times \frac{\int_{\mathbb{R}^{d}} \exp \left(\sum_{j \in\{0, n\}}\left(\left(\thetar+\lambda\right)^{\top} \nabla \rz_{j}\left(\thetar+\lambda^{\star}\right)-\rz_{j}\left(\thetar+\lambda\right)\right)\right) d \lambda}{\int_{\mathbb{R}^{d}} \exp \left(\sum_{j \in\{0, n\}}\left(\left(\theta^{\prime}\right)^{\top} \nabla \rz_{j}\left(\thetar+\lambda^{\star}\right)-\rz_{j}\left(\theta^{\prime}\right)\right)\right) d \theta^{\prime}} \\
    &= \frac{c_{0}}{c_{n}} \cdot \exp \left(B_{Z_{n}}\left(\thetar, \thetahr(n)\right)+B_{Z_{0}}\left(\thetar, \thetahr(n)\right)-\frac{\eta}{2}\left\| \thetar \right\|_{\mathbb{A}}^{2}\right),
\end{align*}

where we introduced $c_{n}=\frac{\exp \left(\sum_{j \in\{0, n\}}\left(\left(\thetar+\lambda^{*}\right)^{\top} \nabla \rz_{j}\left(\thetar+\lambda^{*}\right)-\rz_{j}\left(\thetar+\lambda^{*}\right)\right)\right)}{\int_{\bR^{d}} \exp \left(\sum_{j \in\{0, n\}}\left(\left(\theta^{\prime}\right)^{\top} \nabla \rz_{j}\left(\thetar+\lambda^{*}\right)-\rz_{j}\left(\theta^{\prime}\right)\right)\right) d \theta^{\prime}} .$ Since $\lambda^{\star}=\thetahr(n)-\thetar$, we have $$ c_{n}=\frac{1}{\int_{\mathbb{R}^{d}} \exp \left(-\sum_{j \in\{0, n\}} B_{\rz_{j}}\left(\theta^{\prime}, \thetar+\lambda^{\star}\right)\right) d \theta^{\prime}}=\frac{1}{\int_{\mathbb{R}^{d}} \exp \left(-\sum_{t=1}^{n} B_{Z_{s_{t}, a_{t}}}\left(\theta^{\prime}, \thetahr(n)\right)-\frac{\eta}{2}\left\|\theta^{\prime}-\thetahr(n)\right\|_{\mathbb{A}^{\prime}}^{2}\right) d \theta^{\prime}} $$ Therefore, we have from $(5)$ that $$ C_{A, n}:=\frac{c_{n}}{c_{0}}=\frac{\int_{\mathbb{R}^{d}} \exp \left(-\frac{\eta}{2}\left\|\theta^{\prime}\right\|_{\mathbb{A}}^{2}\right) d \theta^{\prime}}{\int_{\mathbb{R}^{d}} \exp \left(-\sum_{t=1}^{n} \mathrm{KL}_{s_{t}, a_{t}}\left(\thetahr(n), \theta^{\prime}\right)-\frac{\eta}{2}\left\|\theta^{\prime}-\thetahr(n)\right\|_{\mathbb{A}}^{2}\right) d \theta^{\prime}} $$ An application of Markov's inequality now yields
\begin{equation*}
    \mathbb{P}\left[\sum_{t=1}^{n} \mathrm{KL}_{s_{t}, a_{t}}\left(\thetahr(n), \thetar\right) \! + \! \frac{\eta}{2}\left\|\thetar-\thetahr(n)\right\|_{\mathbb{A}}^{2} \! - \! \frac{\eta}{2}\left\|\thetar\right\|_{\mathbb{A}}^{2} \geq \! \log \! \left( \! \frac{C_{A, n}}{\delta} \! \right) \! \right] \! \!= \! \mathbb{P}\left[M_{n} \geq \frac{1}{\delta}\right] \leq \delta \mathbb{E}\left[M_{n}\right] \!= \! \delta 
\end{equation*}
\paragraph{Step 3: A stopped martingale and its control.} Let $N$ be a stopping time with respect to the filtration $\left\{\mathcal{F}_{n}\right\}_{n=0}^{\infty}$. Now, by the martingale convergence theorem, $M_{\infty}=\lim _{n \rightarrow \infty} M_{n}$ is almost surely well-defined, and thus $M_{N}$ is well-defined as well irrespective of whether $N<\infty$ or not. Let $Q_{n}=M_{\min \{N, n\}}$ be a stopped version of $\left\{M_{n}\right\}_{n}$. Then an application of Fatou's lemma yields
$$
\mathbb{E}\left[M_{N}\right]=\mathbb{E}\left[\liminf _{n \rightarrow \infty} Q_{n}\right] \leq \liminf _{n \rightarrow \infty} \mathbb{E}\left[Q_{n}\right]=\liminf _{n \rightarrow \infty} \mathbb{E}\left[M_{\min \{N, n\}}\right] \leq 1,
$$
since the stopped martingale $\left\{M_{\min \{N, n\}}\right\}_{n \geq 1}$ is also a martingale. Therefore, by the properties of $M_{n},(12)$ also holds for any random stopping time $N<\infty$.
To complete the proof, we now employ a random stopping time construction as in Abbasi-Yadkori et al. (2011)

We define a random stopping time $N$ by
$$
N=\min \left\{n \geq 1: \sum_{t=1}^{n} \mathrm{KL}_{s_{t}, a_{t}}\left(\thetahr(n), \thetar\right)+\frac{\eta}{2}\left\|\thetar-\thetahr(n)\right\|_{A}^{2}-\frac{\eta}{2}\left\|\thetar\right\|_{A}^{2} \geq \log \left(\frac{C_{A}, n}{\delta}\right)\right\}
$$
with $\min \{\emptyset\}:=\infty$ by convention. We then have
$$
\mathbb{P}\left[\exists n \geq 1, \sum_{t=1}^{n} \mathrm{KL}_{s_{t}, a_{t}}\left(\thetahr(n), \thetar\right)+\frac{\eta}{2}\left\|\thetar-\thetahr(n)\right\|_{\mathrm{A}}^{2}-\frac{\eta}{2}\left\|\thetar\right\|_{\mathrm{A}}^{2} \geq \log \left(\frac{C_{A, n}}{\delta}\right)\right]=\mathbb{P}[N<\infty] \leq \delta,
$$
which concludes the proof of the first part.\\

\textbf{Proof of second part: upper bound on $C_{A, n}$.} First, we have for some $\tilde{\theta} \in\left[\thetahr(n), \theta^{\prime}\right]_{\infty}$ that
\begin{equation}
    \label{eq:KL_reward}
    \mathrm{KL}_{s, a}\left(\thetahr(n), \theta^{\prime}\right)=\frac{1}{2} \sum_{i, j=1}^{d}\left(\theta^{\prime}-\thetahr(n)\right)_{i} \Var_{s,a}^\theta (r) \times \varphi(s,a)^\top A_i^\top B B^\top A_j \varphi(s,a) \left(\theta^{\prime}-\thetahr(n)\right)_{j}
\end{equation}
Now \eqref{eq:KL_reward} implies that
\begin{align*}
    \sum_{t=1}^{n} \mathrm{KL}_{s_{t}, a_{t}}\left(\thetahr(n), \theta^{\prime}\right) &\leq \frac{\beta}{2} \sum_{t=1}^{n} \sum_{i, j=1}^{d}\left(\theta^{\prime}-\thetahr(n)\right)_{i} \varphi\left(s_{t}, a_{t}\right)^{\top} A_{i}^{\top} A_{j} \varphi\left(s_{t}, a_{t}\right)\left(\theta^{\prime}-\thetahr(n)\right)_{j}\\
    &=\frac{\betar}{2}\left\|\theta^{\prime}-\thetahr(n)\right\|_{\sum_{t=1}^{n}}^{2} G_{s_{t}, a_{t}},
\end{align*}
where $\betar:=\lambda_{\max }\left(B B^\top \right) \times \sup _{\theta, s, a}  \Var_{s,a}^\theta(r)$ and $\forall i, j \leq d, \:\left(G_{s, a}\right)_{i, j}:=\varphi(s, a)^{\top} A_{i}^{\top} A_{j} \varphi(s, a) .$ Therefore, we obtain
$$
\begin{aligned}
C_{\mathrm{A}, n} & \leq \frac{\int_{\mathbb{R}^{d}} \exp \left(-\frac{\eta}{2}\left\|\theta^{\prime}\right\|_{\mathrm{A}}^{2}\right) d \theta^{\prime}}{\int_{\mathbb{R}^{d}} \exp \left(-\frac{1}{2}\left\|\theta^{\prime}-\thetahr(n)\right\|_{\left(\betar \sum_{t=1}^{n} G_{s_{t}, a_{t}}+\eta \mathrm{A}\right)}^{2}\right) d \theta^{\prime}} \\
&=\frac{(2 \pi)^{d / 2}}{\det(\eta \mathbb{A})^{1 / 2}} \times \frac{\det\left(\betar \sum_{t=1}^{n} G_{s_{t}, a_{t}}+\eta \mathbb{A}\right)^{1 / 2}}{(2 \pi)^{d / 2}}=\det\left(I+\betar \eta^{-1} \mathbb{A}^{-1} \sum_{t=1}^{n} G_{s_{t}, a_{t}}\right),
\end{aligned}
$$
which completes the proof of the second part.

\end{proof}

\begin{corollary}
\label{cor:Reward_euclidean_confidence_region}
Here also, the theorem implies a euclidean control. With probability at least $1-\delta$ uniformly over $k\in \bN$
\begin{equation*}
    \left\|\thetar-\thetahr(k)\right\|_{\bar{G}_{k}^\texttt{r}}^{2} \le \frac{2}{\alphar} \betar(k,\delta),
\end{equation*}
where $\betar(k,\delta) \eqdef \betar_{(k-1) H}(\delta)=\frac{2}{2} B_{A}^{2}+\log \left(2 C_{A,k}^\texttt{r} / \delta\right)$.
\end{corollary}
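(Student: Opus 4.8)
The plan is to mirror, almost verbatim, the short computation proving Corollary~\ref{cor:Transition_euclidean_confidence_region}: the reward concentration of Theorem~\ref{thm:Reward_confidence_region} plays exactly the role that Theorem~\ref{thm:Transition_confidence_region} played there, and the only additional ingredient is the reward analog of the local-geometry comparison~\eqref{ineq:transition_kl}. So I would first record, from Assumption~\ref{ass:bounds_hessians} together with the Hessian computation of Appendix~\ref{app:properties_exp_fam} (see Equation~\eqref{eq:KL_reward}), the two-sided bound
\[
\frac{\alphar}{2}\left\|\thetarp-\thetar\right\|_{G_{s,a}}^{2} \le \mathrm{KL}_{s,a}\!\left(\thetar,\thetarp\right) \le \frac{\betar}{2}\left\|\thetarp-\thetar\right\|_{G_{s,a}}^{2},
\]
valid for all $(s,a)$ and all $\thetar,\thetarp$. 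Only the left inequality is needed downstream.

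Next I would expand the target Mahalanobis norm using $\bar{G}_{k}^\texttt{r} = (\alphar)^{-1}\eta\,\bA + \sum_{\tau=1}^{k-1}\sum_{h=1}^{H} G_{s_{h}^{\tau},a_{h}^{\tau}}$ and apply the KL lower bound termwise:
\begin{align*}
    \frac{1}{2}\left\|\thetar-\thetahr(k)\right\|_{\bar{G}_{k}^\texttt{r}}^{2}
    &= \frac{(\alphar)^{-1}\eta}{2}\left\|\thetar-\thetahr(k)\right\|_{\mathbb{A}}^{2}
    + \sum_{\tau=1}^{k-1}\sum_{h=1}^{H}\frac{1}{2}\left\|\thetar-\thetahr(k)\right\|_{G_{s_{h}^{\tau},a_{h}^{\tau}}}^{2}\\
    &\le (\alphar)^{-1}\!\left(\frac{\eta}{2}\left\|\thetar-\thetahr(k)\right\|_{\mathbb{A}}^{2}
    + \sum_{\tau=1}^{k-1}\sum_{h=1}^{H}\mathrm{KL}_{s_{h}^{\tau},a_{h}^{\tau}}\!\left(\thetahr(k),\thetar\right)\right).
\end{align*}
Then I would invoke Theorem~\ref{thm:Reward_confidence_region} at horizon $n=(k-1)H$: on an event of probability at least $1-\delta$, uniformly in $k$, the bracketed quantity is at most $\tfrac{\eta}{2}\|\thetar\|_{\mathbb{A}}^2 + \log(C_{\bA,k}^\texttt{r}/\delta)$. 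Bounding $\|\thetar\|_{\mathbb{A}}\le B_{\bA}$ by the standing assumption gives $\tfrac12\|\thetar-\thetahr(k)\|_{\bar{G}_{k}^\texttt{r}}^{2}\le (\alphar)^{-1}\big(\tfrac{\eta}{2}B_{\bA}^2+\log(C_{\bA,k}^\texttt{r}/\delta)\big)=(\alphar)^{-1}\betar(k,\delta)$, which is the claim after multiplying by $2$.

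The one genuinely non-routine point — which I would isolate as a preliminary lemma before the main computation — is the reward KL lower bound $\tfrac{\alphar}{2}\|\cdot\|_{G_{s,a}}^{2}\le \mathrm{KL}_{s,a}$. Unlike the transition case, the log-partition $\rz$ has a \emph{rank-one} Hessian in its (scalar) natural parameter, its second derivative being $\Var_{s,a}^{\theta}(r)$; consequently the comparison with the full-rank form $G_{s,a}$ must route through the fixed vector $B$ rather than through an eigenvalue bound on a covariance matrix as in the transition setting. This is precisely the content of the second line of Assumption~\ref{ass:bounds_hessians}, and checking that it yields a \emph{positive} constant $\alphar$ against $\|\cdot\|_{G_{s,a}}^{2}$ (and reconciling it with the definitions of $\alphar,\betar$ in the notation table) is where all the care lies; once that lemma is in hand, every subsequent step is identical to Corollary~\ref{cor:Transition_euclidean_confidence_region}.
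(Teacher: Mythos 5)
Your proposal is correct and matches the paper's intended argument: the paper gives no separate proof for this corollary, expecting exactly the verbatim analog of Corollary~\ref{cor:Transition_euclidean_confidence_region} — expand $\frac{1}{2}\left\|\thetar-\thetahr(k)\right\|_{\bar{G}_{k}^{\texttt{r}}}^{2}$ into the penalty part plus per-sample $G_{s,a}$-norms, lower-bound each per-sample term by $(\alphar)^{-1}\mathrm{KL}_{s,a}$, and invoke Theorem~\ref{thm:Reward_confidence_region} together with $\|\thetar\|_{\bA}\le B_{\bA}$ — which is precisely what you do. Your closing caveat about the rank-one reward Hessian is a fair observation (the comparison $\frac{\alphar}{2}\|\cdot\|_{G_{s,a}}^{2}\le \mathrm{KL}_{s,a}^{\texttt{r}}$ only holds under the reading of the second line of Assumption~\ref{ass:bounds_hessians} that the paper intends), but the paper itself relies on exactly that reading, so this is a shared premise rather than a divergence from its proof.
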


\subsection{Gaussian concentration and anti-concentration}

\begin{lemma}[Gaussian concentration, ref. Appendix~A in \cite{abeille2017linear}]
\label{lem:gaussian_concentration} 
Let $ \overline{\xi}_{tk} \sim \mathcal{N}(0, H\nu_k(\delta) \Sigma_{tk}^{-1}) $. For any $ \delta > 0$, with probability $ 1- \delta$
\begin{align}
    \|\overline\xi_{tk}\|_{\Sigma_{tk}} \leq c \sqrt{H d \nu_k(\delta) \log(d/\delta)} 
\end{align}
for some absolute constant $ c$.
\end{lemma}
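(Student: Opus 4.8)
The plan is to reduce the statement to a concentration bound for the Euclidean norm of a standard Gaussian vector via a whitening change of variables. Writing $\sigma^2 \eqdef H\nu_k(\delta)$ for brevity, note that $\Sigma_{tk}$ is symmetric positive definite, so its square root $\Sigma_{tk}^{1/2}$ and the inverse $\Sigma_{tk}^{-1/2}$ are well defined. First I would set $z \eqdef \sigma^{-1}\Sigma_{tk}^{1/2}\overline\xi_{tk}$; since $\overline\xi_{tk}\sim\cN(0,\sigma^2\Sigma_{tk}^{-1})$, a direct covariance computation gives $\bE[z z^\top]=\sigma^{-2}\Sigma_{tk}^{1/2}(\sigma^2\Sigma_{tk}^{-1})\Sigma_{tk}^{1/2}=I_d$, so $z\sim\cN(0,I_d)$. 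By construction $\|\overline\xi_{tk}\|_{\Sigma_{tk}}^2=\overline\xi_{tk}^\top\Sigma_{tk}\overline\xi_{tk}=\sigma^2\|z\|^2$, hence it suffices to control $\|z\|$ for a standard Gaussian $z$ in $\bR^d$.

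Next I would invoke Gaussian concentration of measure. The map $z\mapsto\|z\|$ is $1$-Lipschitz, and $\expec{\|z\|}\le\sqrt{\expec{\|z\|^2}}=\sqrt{d}$ by Jensen's inequality. The Borell--TIS inequality then yields, for every $t>0$, that $\prob{\|z\|\ge\sqrt{d}+t}\le e^{-t^2/2}$. Choosing $t=\sqrt{2\log(1/\delta)}$ gives, with probability at least $1-\delta$, the bound $\|z\|\le\sqrt{d}+\sqrt{2\log(1/\delta)}$.

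Finally I would translate this into the stated form. Using $\sqrt{d}\le\sqrt{d\log(d/\delta)}$ and $\sqrt{2\log(1/\delta)}\le\sqrt{2d\log(d/\delta)}$ (both valid once $\log(d/\delta)\ge1$), the sum is at most $(1+\sqrt{2})\sqrt{d\log(d/\delta)}$, so with $c\eqdef 1+\sqrt{2}$ we obtain $\|\overline\xi_{tk}\|_{\Sigma_{tk}}=\sigma\|z\|\le c\sqrt{H\,d\,\nu_k(\delta)\log(d/\delta)}$, which is the claim. An equivalent route replaces the Borell--TIS step by the Laurent--Massart tail bound for $\|z\|^2\sim\chi^2_d$, namely $\prob{\|z\|^2\ge d+2\sqrt{d\log(1/\delta)}+2\log(1/\delta)}\le\delta$, after which the same simplification applies.

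I do not anticipate a genuine obstacle here: the result is a standard Gaussian tail estimate, and the only points requiring care are the positive-definiteness of $\Sigma_{tk}$ (so that the whitening transform is legitimate) and the deliberately loose final step that absorbs the dimension factor inside the logarithm, which is what accounts for the $\log(d/\delta)$ rather than $\log(1/\delta)$ appearing in the bound.
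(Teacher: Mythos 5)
Your proof is correct. Note first that the paper itself contains no proof of this lemma: it is imported by citation to Appendix~A of \cite{abeille2017linear}, with $\nu_k(\delta)$ and $\Sigma_{tk}$ left abstract, so your argument is compared against the reference's rather than anything in the paper. Your whitening step is exactly right ($z \eqdef \sigma^{-1}\Sigma_{tk}^{1/2}\overline\xi_{tk}$ is standard normal and $\|\overline\xi_{tk}\|_{\Sigma_{tk}}=\sigma\|z\|$), and the Borell--TIS (equivalently Laurent--Massart $\chi^2_d$) step actually establishes the sharper bound $\sigma\bigl(\sqrt{d}+\sqrt{2\log(1/\delta)}\bigr)$, which you then deliberately loosen to the stated form. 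That is a genuinely different route from the one the stated bound betrays: a $\log(d/\delta)$ inside the square root is the signature of the elementary coordinate-wise argument (union bound over the $d$ coordinates of $z$, each controlled by the scalar Gaussian tail $\prob{|z_i|\ge\sqrt{2\log(2d/\delta)}}\le \delta/d$, giving $\|z\|\le\sqrt{2d\log(2d/\delta)}$ directly), which is the style of argument in \cite{abeille2017linear}. Your approach buys a tighter intermediate inequality via dimension-free concentration machinery; the coordinate-wise route buys elementarity and lands on the stated form without any loosening step. One caveat you flag but should state as a hypothesis rather than a remark: the absorption $\sqrt{d}+\sqrt{2\log(1/\delta)}\le c\sqrt{d\log(d/\delta)}$ genuinely fails when $\log(d/\delta)$ is near $0$ (e.g.\ $d=1$, $\delta\to 1$, where the right-hand side vanishes but the left-hand side does not), so no absolute constant $c$ makes the claim true for literally all $\delta>0$; the lemma must be read with $\delta$ bounded away from $1$, or with a constant inside the logarithm as in the original reference --- which is harmless for the regret analysis, where $\delta$ is small.
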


\begin{lemma}[Gaussian anti-concentration, ref. Appendix~A in \cite{abeille2017linear}] \label{lem:gaussian_anti_concentration} 
    Let $\xi \sim \cN(0,I_d)$, for any $u\in \bR^d$ with $\|u\|=1$, we have:
    \begin{equation*}
        \bP(u^\top\xi\ge 1)\ge \Phi(-1),
    \end{equation*}
    where $\Phi$ is the normal CDF.
\end{lemma}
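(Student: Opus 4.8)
The plan is to reduce this $d$-dimensional claim to an elementary one-dimensional fact about the standard normal law. The key observation is that for a fixed unit vector $u \in \bR^d$, the scalar $u^\top \xi$ is a linear combination of the independent standard Gaussian coordinates of $\xi$, and is therefore itself a (one-dimensional) Gaussian random variable. So the only work is to identify which Gaussian.

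First I would compute its first two moments. By linearity of expectation, $\bE[u^\top \xi] = u^\top \bE[\xi] = 0$. For the second moment, $\Var(u^\top \xi) = u^\top \operatorname{Cov}(\xi)\, u = u^\top I_d\, u = \|u\|^2 = 1$, where the final equality is exactly where the normalization $\|u\| = 1$ is used. Since a Gaussian distribution is completely determined by its mean and variance, this establishes $u^\top \xi \sim \cN(0,1)$.

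It then remains to translate the event into a standard-normal tail. We have $\bP(u^\top \xi \ge 1) = 1 - \Phi(1)$ directly by definition of the CDF $\Phi$, and by symmetry of the standard normal density about the origin, $1 - \Phi(1) = \Phi(-1)$. This gives the claimed bound, in fact with equality rather than merely an inequality. There is no genuine obstacle in this argument; the only point that warrants a moment of care is verifying that the projection has \emph{unit} variance, so that the threshold $1$ in the event corresponds to exactly one standard deviation and produces the clean constant $\Phi(-1)$.
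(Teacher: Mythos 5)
Your proof is correct and complete: since $\xi \sim \cN(0,I_d)$, the projection $u^\top\xi$ is Gaussian with mean $0$ and variance $\|u\|^2 = 1$, so $\bP(u^\top\xi \ge 1) = 1-\Phi(1) = \Phi(-1)$, which indeed gives the claim with equality. The paper itself provides no proof of this lemma --- it simply cites Appendix~A of \cite{abeille2017linear} --- and your elementary argument is exactly the standard one underlying that reference, so there is nothing to reconcile.
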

Thanks to lower bounds on the error function, we have the following bound on the probability of anti-concentration $ \Phi(-1) \ge 1/(4 \sqrt{e \pi})$.

%%-------------- End of Appendix Concentrations --------------%%
\section{Technical results}\label{app:technical_results}

\subsection{A transportation lemma}\label{app:Transportation}

For any function $f: \mathcal{X} \rightarrow \mathbb{R}$, we define its span as $\mathbb{S}(f):=\max _{x \in \mathcal{X}} f(x)-\min _{x \in \mathcal{X}} f(x) .$ For a probability distribution $P$ supported on the set $\mathcal{X}$, let $\mathbb{E}_{P}[f]:=\mathbb{E}_{P}[f(X)]$ and $\mathbb{V}_{P}[f]:=\mathbb{V}_{P}[f(X)]=\mathbb{E}_{P}\left[f(X)^{2}\right]-$ $\mathbb{E}_{P}[f(X)]^{2}$ denote the mean and variance of the random variable $f(X)$, respectively. We now state the following transportation inequalities, which can be adapted from \cite{boucheron2013concentration} (Lemma 4.18).
\begin{lemma}{(Transportation inequalities)}
\label{lem:transportation}
Assume $f$ is such that $S(f)$ and $\mathbb{V}_{P}[f]$ are finite. Then it holds
\begin{align*}
    \forall Q \ll P, \quad \mathbb{E}_{Q}[f]-\mathbb{E}_{P}[f] &\leq \sqrt{2 \mathbb{V}_{P}[f] \mathrm{KL}(Q, P)}+\frac{2 S(f)}{3} \mathrm{KL}(Q, P) \\
\forall Q \ll P, \quad \mathbb{E}_{P}[f]-\mathbb{E}_{Q}[f] &\leq \sqrt{2 \mathbb{V}_{P}[f] \mathrm{KL}(Q, P)}
\end{align*}
\end{lemma}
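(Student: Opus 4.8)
The plan is to derive both inequalities from the Gibbs (Donsker--Varadhan) variational characterization of relative entropy: for any measurable $g$ with $\mathbb{E}_P[e^{g}]<\infty$ and any $Q \ll P$,
\[
\mathbb{E}_Q[g] \le \mathrm{KL}(Q,P) + \log \mathbb{E}_P\!\left[e^{g}\right].
\]
Applying this with $g = \lambda\,(f - \mathbb{E}_P[f])$ for a free parameter $\lambda>0$ and rearranging gives, for every such $\lambda$,
\[
\mathbb{E}_Q[f]-\mathbb{E}_P[f] \le \frac{1}{\lambda}\Big(\mathrm{KL}(Q,P) + \psi(\lambda)\Big), \qquad \psi(\lambda) := \log \mathbb{E}_P\!\left[e^{\lambda(f-\mathbb{E}_P[f])}\right].
\]
Thus each direction reduces to a good bound on the centered log-moment generating function $\psi$ under $P$, followed by an optimization over $\lambda$ (equivalently, inverting the Cram\'er transform with $u=\mathrm{KL}(Q,P)$).

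For the first (upper) inequality I would bound $\psi(\lambda)$ for $\lambda>0$ by a sub-gamma expression. The key elementary fact is that $u \mapsto (e^{u}-1-u)/u^2$ is nondecreasing; combined with $f - \mathbb{E}_P[f] \le S(f)$ this yields a Bennett-type estimate, which relaxes to the sub-gamma form
\[
\psi(\lambda) \le \frac{\lambda^2\,\mathbb{V}_P[f]/2}{1-\lambda S(f)/3},\qquad 0<\lambda<3/S(f).
\]
Plugging this into the previous display and minimizing over $\lambda$ is exactly the inversion of the Cram\'er transform of a sub-gamma variable with variance factor $v=\mathbb{V}_P[f]$ and scale $c\propto S(f)$, whose standard value is $\sqrt{2vu}+cu$. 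This produces the bound $\sqrt{2\,\mathbb{V}_P[f]\,\mathrm{KL}(Q,P)} + \tfrac{2S(f)}{3}\mathrm{KL}(Q,P)$.

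For the second (lower) inequality I would run the identical argument with $f$ replaced by $-f$, so that the relevant quantity is $\psi(-\lambda)$ for $\lambda>0$. The improvement here is that the lower tail should admit a purely sub-Gaussian control, $\psi(-\lambda)\le \lambda^2\,\mathbb{V}_P[f]/2$, with no span correction; minimizing $\tfrac{1}{\lambda}\big(\mathrm{KL}(Q,P)+\lambda^2\mathbb{V}_P[f]/2\big)$ over $\lambda$ at $\lambda=\sqrt{2\,\mathrm{KL}(Q,P)/\mathbb{V}_P[f]}$ then yields precisely $\sqrt{2\,\mathbb{V}_P[f]\,\mathrm{KL}(Q,P)}$.

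I expect the main obstacle to be exactly this sharp, span-free control of the lower tail. Establishing $\psi(-\lambda)\le \lambda^2\mathbb{V}_P[f]/2$ is more delicate than the upper bound, because one must extract the true variance $\mathbb{V}_P[f]$ in the sub-Gaussian proxy (rather than the weaker $S(f)^2/4$ coming from a crude Hoeffding-type range bound) while carefully tracking the sign of $\lambda(f-\mathbb{E}_P[f])$ through the monotone ratio $(e^{u}-1-u)/u^2$, which stays below $\tfrac12$ only on the negative axis. Handling the portion of the distribution on which this argument changes sign is the technical crux, and is what ultimately makes the lower inequality cleaner than the upper one.
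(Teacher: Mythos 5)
Your handling of the first inequality is correct and is essentially the intended route: the paper offers no proof beyond a pointer to Lemma 4.18 of Boucheron--Lugosi--Massart, which is precisely the Donsker--Varadhan duality you start from, and your Bennett--Bernstein bound $\psi(\lambda)\le \frac{\lambda^{2}\,\mathbb{V}_P[f]/2}{1-\lambda S(f)/3}$ together with the sub-gamma inversion $\inf_{\lambda}\frac{u+\psi(\lambda)}{\lambda}\le\sqrt{2vu}+cu$ is the standard instantiation (it even gives the constant $S(f)/3$, slightly better than the stated $2S(f)/3$).

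The second inequality is where the proposal has a genuine gap, and it is not a gap that more care can close: the bound you defer, $\psi(-\lambda)\le \lambda^{2}\,\mathbb{V}_P[f]/2$ for all $\lambda>0$, is false. Take the two-point space $\{a,b\}$ with $f(a)=0$, $f(b)=1$ and $P(b)=1-\epsilon$, so $\mathbb{V}_P[f]=\epsilon(1-\epsilon)$; with $\lambda=1/\sqrt{\epsilon}$ one has $\log\mathbb{E}_P\bigl[e^{-\lambda(f-\mathbb{E}_P[f])}\bigr]\ge \log\epsilon+\lambda(1-\epsilon)\approx 1/\sqrt{\epsilon}$, which dwarfs $\lambda^{2}\mathbb{V}_P[f]/2\approx 1/2$. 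Moreover, since the Donsker--Varadhan relation is an equivalence (the supremum over $Q$ is attained at the Gibbs tilt, which is exactly the content of the cited BLM lemma), the failure of the MGF bound forces the failure of the transportation inequality itself: with the same $P$, $f$ and $Q=\delta_a$, $\mathbb{E}_P[f]-\mathbb{E}_Q[f]=1-\epsilon$ while $\sqrt{2\,\mathbb{V}_P[f]\,\mathrm{KL}(Q,P)}=\sqrt{2\epsilon(1-\epsilon)\log(1/\epsilon)}\to 0$. So the second inequality as printed in the paper is itself incorrect, and no proof strategy can establish it; the obstruction is visible in your own closing paragraph: on the event $f<\mathbb{E}_P[f]$ the exponent $-\lambda(f-\mathbb{E}_P[f])$ is positive, where $(e^{u}-1-u)/u^{2}$ is unbounded, and the only available control there is through the lower range of $f$, which necessarily reintroduces the span. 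The statements that are true and provable by your scheme are: (i) $\mathbb{E}_P[f]-\mathbb{E}_Q[f]\le\sqrt{2\,\mathbb{V}_P[f]\,\mathrm{KL}(Q,P)}+\frac{S(f)}{3}\mathrm{KL}(Q,P)$, obtained by applying the first inequality to $-f$; and (ii) the span-free variant $\mathbb{E}_P[f]-\mathbb{E}_Q[f]\le\sqrt{2\,\mathbb{E}_P\bigl[(f-\inf f)^{2}\bigr]\,\mathrm{KL}(Q,P)}$, which follows from $e^{u}\le 1+u+u^{2}/2$ for $u\le 0$ applied to the nonnegative variable $f-\inf f$. Either repaired version suffices for the paper's downstream use (there $f$ is a value function taking values in $[0,H]$, and only the crude bound of the factor under the square root by $H^{2}$ is used).
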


%%--------- (End of subsec) Transportation inequality ----------------------
\subsection{Bregman divergence}\label{app:Bregman_divergence}

For a Legendre function $F: \mathbb{R}^{d} \rightarrow \mathbb{R}$, the Bregman divergence between $\theta^{\prime}, \theta \in \mathbb{R}^{d}$ associated with $F$ is defined as $B_{F}\left(\theta^{\prime}, \theta\right):=F\left(\theta^{\prime}\right)-F(\theta)-\left(\theta^{\prime}-\theta\right)^{\top} \nabla F(\theta) .$
Now, for any fixed $\theta \in \mathbb{R}^{d}$, we introduce the function
$$
B_{F, \theta}(\lambda):=B_{F}(\theta+\lambda, \lambda)=F(\theta+\lambda)-F(\theta)-\lambda^{\top} \nabla F(\theta) .
$$
It then follows that $B_{F, \theta}$ is a convex function, and we define its dual as
$$
B_{F, \theta}^{\star}(x)=\sup _{\lambda \in \mathbb{R}^{d}}\left(\lambda^{\top} x-B_{F, \theta}(\lambda)\right)
$$
We have for any $\theta, \theta^{\prime} \in \mathbb{R}^{d}$:
\begin{equation}
    \label{eq:Bregman_duality}
    B_{F}\left(\theta^{\prime}, \theta\right)=B_{F, \theta^{\prime}}^{\star}\left(\nabla F(\theta)-\nabla F\left(\theta^{\prime}\right)\right)
\end{equation}
To see this, we observe that
\begin{align*}
    B_{F, \theta^{\prime}}^{\star}&\left(\nabla F(\theta)-\nabla F\left(\theta^{\prime}\right)\right) \\
    &= \sup _{\lambda \in \mathbb{R}^{d}} \lambda^{\top}\left(\nabla F(\theta)-\nabla F\left(\theta^{\prime}\right)\right)-\left[F\left(\theta^{\prime}+\lambda\right)-F\left(\theta^{\prime}\right)-\lambda^{\top} \nabla F\left(\theta^{\prime}\right)\right] \\
    &= \sup _{\lambda \in \mathbb{R}^{d}} \lambda^{\top} \nabla F(\theta)-F\left(\theta^{\prime}+\lambda\right)+F\left(\theta^{\prime}\right) .
\end{align*}

Now an optimal $\lambda$ must satisfy $\nabla F(\theta)=\nabla F\left(\theta^{\prime}+\lambda\right)$. One possible choice is $\lambda=\theta-\theta^{\prime}$. Since, by definition, $F$ is strictly convex, the supremum will indeed be attained at $\lambda=\theta-\theta^{\prime}.$ Plugin-in this value, we obtain
$$
B_{F, \theta^{\prime}}^{\star}\left(\nabla F(\theta)-\nabla F\left(\theta^{\prime}\right)\right)=\left(\theta-\theta^{\prime}\right)^{\top} \nabla F(\theta)-F(\theta)+F\left(\theta^{\prime}\right)=B_{F}\left(\theta^{\prime}, \theta\right) .
$$
Note that \eqref{eq:Bregman_duality} holds for any convex function $F$. Only difference is that, in this case, $B_{F}(\cdot, \cdot)$ will not correspond to the Bregman divergence.

%% End of subsection Bregman %%%%
\subsection{Properties of the bilinear exponential family} \label{app:properties_exp_fam}

In this section, we detail some useful results related to exponential families in our model.

\subsubsection{Derivatives}\label{app:derivatives}

\begin{lemma}{(Gradients)} \label{lem:Exp_Gradients}
    We provide the derivatives of the log-partitions in closed form. As usual with exponential families, these are intimately linked to moments of the random variable. We have:
        \begin{equation*}
            \left(\nabla_{i} \pz_{s, a}\right)(\theta) = \mathbb{E}_{s, a}^{\theta}\left[\psi\left(s^{\prime}\right)\right]^{\top} A_{i} \varphi(s, a).
        \end{equation*}
    And
        \begin{equation*}
            \left(\nabla_{i} \rz_{s, a}\right)(\theta) =\mathbb{E}_{s, a}^{\theta}\left[r\right]\: B^{\top} A_{i} \varphi(s, a).
        \end{equation*}
\end{lemma}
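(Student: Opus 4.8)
The statement is the standard exponential-family identity that the gradient of the log-partition equals the expected sufficient statistic, specialized to our bilinear parametrization $M_\theta = \sum_{i=1}^d \theta_i A_i$. The plan is to differentiate the log-partition functions directly under the integral sign and then recognize the resulting integrals as expectations against the model densities in \eqref{def:transition_model} and \eqref{def:reward_model}. I would treat the transition case first, and the reward case follows by an identical computation with $\psi(s')$ replaced by the scalar statistic $r$.

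For the transition gradient, I would start from the definition $\pz_{s,a}(\theta) = \log \int_{\cS} \exp(\psi(\tilde s)^\top M_\theta \varphi(s,a))\, d\tilde s$ and apply the chain rule for $\log$. The key observation is that because $M_\theta$ is linear in $\theta$, we have $\partial M_\theta / \partial \theta_i = A_i$, so the integrand's exponent has partial derivative $\psi(\tilde s)^\top A_i \varphi(s,a)$. Differentiating under the integral then gives
\begin{align*}
    \nabla_i \pz_{s,a}(\theta) = \frac{\int_{\cS} \psi(\tilde s)^\top A_i \varphi(s,a)\, \exp(\psi(\tilde s)^\top M_\theta \varphi(s,a))\, d\tilde s}{\int_{\cS} \exp(\psi(s')^\top M_\theta \varphi(s,a))\, ds'}.
\end{align*}
The denominator equals $\exp(\pz_{s,a}(\theta))$, so the ratio inside the integral is precisely $\exp(\psi(\tilde s)^\top M_\theta \varphi(s,a) - \pz_{s,a}(\theta)) = \bP_\theta(\tilde s \mid s,a)$ by \eqref{def:transition_model}. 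Pulling the deterministic factor $A_i \varphi(s,a)$ out of the $\tilde s$-integral then yields $\nabla_i \pz_{s,a}(\theta) = \bE^\theta_{s,a}[\psi(s')]^\top A_i \varphi(s,a)$, as claimed. The reward case is identical: differentiating $\rz_{s,a}(\theta) = \log \int \exp(r\, B^\top M_\theta \varphi(s,a))\, dr$ produces the factor $r\, B^\top A_i \varphi(s,a)$ in the exponent's derivative, and normalizing by $\exp(\rz_{s,a}(\theta))$ recovers the density \eqref{def:reward_model}, giving $\nabla_i \rz_{s,a}(\theta) = \bE^\theta_{s,a}[r]\, B^\top A_i \varphi(s,a)$.

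The only non-routine step is justifying the interchange of differentiation and integration. The main obstacle is therefore purely analytic: one needs a dominated-convergence argument showing that the integral defining $\pz_{s,a}(\theta)$ is finite and differentiable on the interior of the natural-parameter domain. Since the log-partition of an exponential family is smooth (indeed real-analytic) wherever it is finite, and the strict convexity of $\pz_{s,a}$ and $\rz_{s,a}$ in $\theta$ is already assumed elsewhere (see Theorem~\ref{thm:Transition_confidence_region} and Theorem~\ref{thm:Reward_confidence_region}), this regularity is available and the interchange is legitimate; I would invoke it as a standard fact rather than reprove it. With that in hand, the computation above completes the proof.
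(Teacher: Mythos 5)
Your proposal is correct and follows essentially the same route as the paper's proof: differentiate the log-partition under the integral sign using $\partial M_\theta/\partial\theta_i = A_i$, recognize the normalized integrand as the model density, and pull out the deterministic factor $A_i\varphi(s,a)$ (resp. $B^\top A_i\varphi(s,a)$). If anything, you are slightly more careful than the paper, which performs the interchange of differentiation and integration without comment, whereas you explicitly flag and justify it via the standard smoothness of log-partition functions.
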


\begin{proof}
    We prove the lemma as follows
    \begin{align*}
    \left(\nabla_{i} \pz_{s, a}\right)(\theta) &=\int_{\mathcal{S}} \psi\left(s^{\prime}\right)^{\top} A_{i} \varphi(s, a) \frac{ \exp \left(\sum_{i=1}^{d} \theta_{i} \psi\left(s^{\prime}\right)^{\top} A_{i} \varphi(s, a)\right)}{\int_{\mathcal{S}} \exp \left(\sum_{i=1}^{d} \theta_{t} \psi\left(s^{\prime}\right)^{\top} A_{i} \varphi(s, a)\right) d s^{\prime}} d s^{\prime} \\
    &=\mathbb{E}_{s, a}^{\theta}\left[\psi\left(s^{\prime}\right)\right]^{\top} A_{i} \varphi(s, a)\\
    \left(\nabla_{i} \rz_{s, a}\right)(\theta) &=\int_{\mathcal{S}} r B^{\top} A_{i} \varphi(s, a) \frac{ \exp \left(r \sum_{i=1}^{d} \theta_i B^{\top} A_{i} \varphi(s, a)\right)}{\int_{\mathcal{S}} \exp \left(r \sum_{i=1}^{d} \theta_i B^{\top} A_{i} \varphi(s, a)\right) d r} d r \\
    &=\mathbb{E}_{s, a}^{\theta}\left[r\right]\: B^{\top} A_{i} \varphi(s, a)
    \end{align*}
\end{proof}

\begin{lemma}{(Hessians)}\label{lem:Exp_Hessians}
    The entries of the Hessians of the log partition functions are given by
        \begin{equation*}
            \left(\nabla_{i, j}^{2} \pz_{s, a}\right)(\theta) = \varphi(s, a)^{\top} A_{i}^{\top} \mathbb{C}_{s, a}^{\theta}\left[\psi\left(s^{\prime}\right)\right] A_{j} \varphi(s, a),
        \end{equation*}
    where $\mathbb{C}_{s, a}^{\theta}\left[\psi\left(s^{\prime}\right)\right] \eqdef \mathbb{E}_{s, a}^{\theta}\left[\psi\left(s^{\prime}\right) \psi\left(s^{\prime}\right)^{\top}\right]-\mathbb{E}_{s, a}^{\theta}\left[\psi\left(s^{\prime}\right)\right] \mathbb{E}_{s, a}^{\theta}\left[\psi\left(s^{\prime}\right)^{\top}\right]$. 
    
    Similarly,
    \begin{equation*}
        \left(\nabla_{i, j}^{2} \rz_{s, a}\right)(\theta) = \Var_{s,a}^\theta (r) \times \varphi(s,a)^\top A_i^\top B B^\top A_j \varphi(s,a),
    \end{equation*}
    where $\Var_{s,a}^\theta (r) \eqdef \left(\mathbb{E}_{s, a}^{\theta}\left[r^2\right]-\mathbb{E}_{s, a}^{\theta}\left[r\right]^2\right)$ is the variance of the reward under $\theta$.
\end{lemma}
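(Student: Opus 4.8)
The plan is to obtain both Hessians by differentiating once more the gradient expressions of Lemma~\ref{lem:Exp_Gradients}, exploiting the basic exponential-family identity that differentiating the density in a natural parameter reproduces the centred sufficient statistic. For the transition model, writing $p_\theta(s'\mid s,a) = \exp(\psi(s')^\top M_{\theta}\varphi(s,a) - \pz_{s,a}(\theta))$ and using $M_{\theta} = \sum_i \theta_i A_i$, differentiation of $\log p_\theta$ in $\theta_j$ gives
\begin{equation*}
    \partial_{\theta_j} p_\theta(s'\mid s,a) = p_\theta(s'\mid s,a)\left(\psi(s')^\top A_j \varphi(s,a) - \nabla_j \pz_{s,a}(\theta)\right).
\end{equation*}

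First I would differentiate the scalar $\nabla_i \pz_{s,a}(\theta) = \mathbb{E}_{s,a}^{\theta}[\psi(s')^\top A_i \varphi(s,a)]$ under the integral sign with respect to $\theta_j$. Plugging in the identity above, the $-\nabla_j\pz_{s,a}(\theta)$ term produces the product of first moments, so that
\begin{align*}
    \nabla_{ij}^2\pz_{s,a}(\theta) &= \mathbb{E}_{s,a}^{\theta}\!\left[(\psi(s')^\top A_i \varphi(s,a))(\psi(s')^\top A_j \varphi(s,a))\right] - \nabla_i \pz_{s,a}(\theta)\,\nabla_j \pz_{s,a}(\theta),
\end{align*}
which is exactly the covariance of the two scalars $\psi(s')^\top A_i \varphi(s,a)$ and $\psi(s')^\top A_j \varphi(s,a)$. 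Rewriting each as a bilinear form $\varphi(s,a)^\top A_i^\top \psi(s')$ and pulling the deterministic vectors $A_i\varphi(s,a)$, $A_j\varphi(s,a)$ out of the covariance of $\psi(s')$ yields $\varphi(s,a)^\top A_i^\top \mathbb{C}_{s,a}^{\theta}[\psi(s')] A_j \varphi(s,a)$, as claimed.

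The reward case follows the same recipe with the one-dimensional sufficient statistic $r$. Since $B^\top A_i \varphi(s,a)$ is a scalar independent of $\theta$, differentiating $\nabla_i \rz_{s,a}(\theta) = \mathbb{E}_{s,a}^{\theta}[r]\,B^\top A_i \varphi(s,a)$ in $\theta_j$ reduces to computing $\partial_{\theta_j}\mathbb{E}_{s,a}^{\theta}[r]$. Using the analogous density identity $\partial_{\theta_j}\log p_\theta(r\mid s,a) = r\,B^\top A_j \varphi(s,a) - \nabla_j \rz_{s,a}(\theta)$ gives $\partial_{\theta_j}\mathbb{E}_{s,a}^{\theta}[r] = \Var_{s,a}^{\theta}(r)\,B^\top A_j \varphi(s,a)$; multiplying by $B^\top A_i\varphi(s,a)$ and writing $B^\top A_i\varphi(s,a) = \varphi(s,a)^\top A_i^\top B$ produces $\Var_{s,a}^{\theta}(r)\,\varphi(s,a)^\top A_i^\top B B^\top A_j \varphi(s,a)$.

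The only delicate point is the interchange of differentiation and integration, which I would justify by standard exponential-family regularity: the natural parameter lies in the interior of the domain where $\pz_{s,a}$ and $\rz_{s,a}$ are smooth (indeed assumed strictly convex), so all relevant moments are finite and locally dominated. Everything else is bookkeeping; the structural content is simply that the Hessian of a log-partition function equals the covariance matrix of its sufficient statistics, and that the linear parametrisation $M_{\theta} = \sum_i \theta_i A_i$ makes this covariance factor cleanly through the fixed matrices $A_i$ and the feature vectors $\varphi(s,a)$.
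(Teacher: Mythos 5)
Your proposal is correct and follows essentially the same route as the paper: differentiating the gradient expressions of Lemma~\ref{lem:Exp_Gradients} under the integral sign, recognizing the resulting second-moment-minus-product-of-means structure as the covariance of the sufficient statistics, and factoring the deterministic vectors $A_i\varphi(s,a)$, $A_j\varphi(s,a)$ (and $B$) out of that covariance. The only cosmetic differences are that you route the computation through the log-density derivative identity and explicitly flag the differentiation--integration interchange, whereas the paper writes out the quotient of integrals directly; the substance is the same.
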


\begin{proof}
    We prove these formulas by differentiating under the integral sign.
    \begin{align*}
    \left(\nabla_{i, j}^{2} \pz_{s, a}\right)(\theta) &=  \int_{\mathcal{S}} \psi\left(s^{\prime}\right)^{\top} A_{i} \varphi(s, a) \psi\left(s^{\prime}\right)^{\top} A_{j} \varphi(s, a) \frac{\exp \left(\sum_{i=1}^{d} \theta_{i} \psi\left(s^{\prime}\right)^{\top} A_{i} \varphi(s, a)\right)}{\int_{\mathcal{S}}  \exp \left(\sum_{i=1}^{d} \theta_{i} \psi\left(s^{\prime}\right)^{\top} A_{i} \varphi(s, a)\right) d s^{\prime}} d s^{\prime} \\
    &- \int_{\mathcal{S}} \psi\left(s^{\prime}\right)^{\top} A_{i} \varphi(s, a) \frac{ \exp \left(\sum_{i=1}^{d} \theta_{i} \psi\left(s^{\prime}\right)^{\top} A_{i} \varphi(s, a)\right)}{\int_{\mathcal{S}} \exp \left(\sum_{i=1}^{d} \theta_{i} \psi\left(s^{\prime}\right)^{\top} A_{i} \varphi(s, a)\right) d s^{\prime}} d s^{\prime}\left(\nabla_{j} Z_{s, a}\right)(\theta) \\
    &= \mathbb{E}_{s, a}^{\theta}\left[\psi\left(s^{\prime}\right)^{\top} A_{i} \varphi(s, a) \psi\left(s^{\prime}\right)^{\top} A_{j} \varphi(s, a)\right] \\
    &-\mathbb{E}_{s, a}^{\theta}\left[\psi\left(s^{\prime}\right)^{\top} A_{i} \varphi(s, a)\right] \mathbb{E}_{s, a}^{\theta}\left[\psi\left(s^{\prime}\right)^{\top} A_{j} \varphi(s, a)\right] \\
    =& \varphi(s, a)^{\top} A_{i}^{\top}\left(\mathbb{E}_{s, a}^{\theta}\left[\psi\left(s^{\prime}\right) \psi\left(s^{\prime}\right)^{\top}\right]-\mathbb{E}_{s, a}^{\theta}\left[\psi\left(s^{\prime}\right)\right] \mathbb{E}_{s, a}^{\theta}\left[\psi\left(s^{\prime}\right)^{\top}\right]\right) A_{j} \varphi(s, a) \\
    =& \varphi(s, a)^{\top} A_{i}^{\top} \mathbb{C}_{s, a}^{\theta}\left[\psi\left(s^{\prime}\right)\right] A_{j} \varphi(s, a),
    \end{align*}
    
    where we introduce in the last line the $p \times p$ covariance matrix given by
    $$ \mathbb{C}_{s, a}^{\theta}\left[\psi\left(s^{\prime}\right)\right]=\mathbb{E}_{s, a}^{\theta}\left[\psi\left(s^{\prime}\right) \psi\left(s^{\prime}\right)^{\top}\right]-\mathbb{E}_{s, a}^{\theta}\left[\psi\left(s^{\prime}\right)\right] \mathbb{E}_{s, a}^{\theta}\left[\psi\left(s^{\prime}\right)^{\top}\right]$$
    
    The proof of the form of the Hessian for the reward partition function follows the same steps as above.    
    % \begin{equation*}
    %     \left(\nabla_{i, j}^{2} \rz_{s, a}\right)(\theta) = \Var_{s,a}^\theta (r) \times \varphi(s,a)^\top A_i^\top B B^\top A_j \varphi(s,a).
    % \end{equation*}
\end{proof}

\begin{lemma}{(KL Divergences)}\label{lem:Exp_KL_div}
    For any two $\theta, \theta^{\prime}$ and for some pair $(s, a)$,
    \begin{equation*}
        \exists \tilde{\theta} \in\left[\theta, \theta^{\prime}\right]_{\infty}, \quad \mathrm{KL}\left(P_{\theta}^\texttt{p} (\cdot \mid s, a), P_{\theta^{\prime}}^\texttt{p} (\cdot \mid s, a)\right) = \frac{1}{2}\left(\theta-\theta^{\prime}\right)^{\top}\left(\nabla^{2} \pz_{s, a}\right)(\tilde{\theta})\left(\theta-\theta^{\prime}\right),
    \end{equation*}
    where $\left[\theta, \theta^{\prime}\right]_{\infty}$ denotes the $d$-dimensional hypercube joining $\theta$ to $\theta^{\prime}$.
     
    Similarly
    \begin{equation*}
        \exists \tilde{\theta} \in\left[\theta, \theta^{\prime}\right]_{\infty}, \quad \mathrm{KL}\left(P_{\theta}^\texttt{r}(\cdot \mid s, a), P_{\theta^{\prime}}^\texttt{r} (\cdot \mid s, a)\right) = \frac{1}{2}\left(\theta-\theta^{\prime}\right)^{\top}\left(\nabla^{2} \rz_{s, a}\right)(\tilde{\theta})\left(\theta-\theta^{\prime}\right).
    \end{equation*}
\end{lemma}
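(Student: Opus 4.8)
The plan is to recognize the KL divergence as the Bregman divergence of the log-partition function and then apply a second-order Taylor expansion with Lagrange remainder. First I would compute the divergence explicitly. For the transition model of Equation~\eqref{def:transition_model},
\[
\log \frac{P_\theta^{\texttt{p}}(\tilde{s}\mid s,a)}{P_{\theta'}^{\texttt{p}}(\tilde{s}\mid s,a)} = \psi(\tilde{s})^\top M_{\theta-\theta'}\varphi(s,a) - \pz_{s,a}(\theta) + \pz_{s,a}(\theta').
\]
Taking the expectation under $P_\theta^{\texttt{p}}(\cdot\mid s,a)$ and using $M_{\theta-\theta'}=\sum_i(\theta-\theta')_i A_i$ together with the gradient formula of Lemma~\ref{lem:Exp_Gradients}, namely $(\nabla_i \pz_{s,a})(\theta)=\bE_{s,a}^{\theta}[\psi(s')]^\top A_i\varphi(s,a)$, the linear term collapses to $(\theta-\theta')^\top\nabla \pz_{s,a}(\theta)$. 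Hence
\[
\mathrm{KL}\!\left(P_\theta^{\texttt{p}}(\cdot\mid s,a),P_{\theta'}^{\texttt{p}}(\cdot\mid s,a)\right) = \pz_{s,a}(\theta') - \pz_{s,a}(\theta) - (\theta'-\theta)^\top\nabla \pz_{s,a}(\theta),
\]
which is exactly the Bregman divergence $B_{\pz_{s,a}}(\theta',\theta)$, consistent with the identity used in Appendix~\ref{app:reward_concentration}.

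Next I would expand this Bregman divergence by Taylor's theorem with the mean-value form of the remainder. Define $g(t)\eqdef \pz_{s,a}(\theta+t(\theta'-\theta))$ for $t\in[0,1]$. Since $\pz_{s,a}$ is twice differentiable (by the regularity granting the Hessian formula of Lemma~\ref{lem:Exp_Hessians}), $g$ is $C^2$ with $g''(t)=(\theta'-\theta)^\top(\nabla^2 \pz_{s,a})(\theta+t(\theta'-\theta))(\theta'-\theta)$. The one-dimensional Taylor theorem yields some $t^\star\in(0,1)$ with $g(1)=g(0)+g'(0)+\frac{1}{2}g''(t^\star)$, i.e.
\[
B_{\pz_{s,a}}(\theta',\theta) = \frac{1}{2}(\theta'-\theta)^\top(\nabla^2 \pz_{s,a})(\tilde\theta)(\theta'-\theta), \qquad \tilde\theta\eqdef\theta+t^\star(\theta'-\theta).
\]
Because $\tilde\theta$ lies on the segment joining $\theta$ and $\theta'$, it belongs to the hypercube $[\theta,\theta']_\infty$, and since $(\theta'-\theta)^\top H(\theta'-\theta)=(\theta-\theta')^\top H(\theta-\theta')$ for any symmetric $H$, this is precisely the claimed identity.

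The reward statement follows identically: from Equation~\eqref{def:reward_model} the log-likelihood ratio is $r\,B^\top M_{\theta-\theta'}\varphi(s,a)-\rz_{s,a}(\theta)+\rz_{s,a}(\theta')$, and the reward gradient formula of Lemma~\ref{lem:Exp_Gradients} again collapses the linear term, identifying the KL with $B_{\rz_{s,a}}(\theta',\theta)$; the same Taylor argument then applies with $\nabla^2 \rz_{s,a}$. I expect no serious obstacle here. The only points needing care are (i) keeping the argument order of the Bregman divergence straight so the remainder attaches to the Hessian evaluated at the correct interior point, and (ii) justifying the $C^2$ regularity that licenses the mean-value remainder, which is exactly what differentiating under the integral sign in the proof of Lemma~\ref{lem:Exp_Hessians} provides.
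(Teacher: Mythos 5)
Your proof is correct and follows essentially the same route as the paper's: write the log-likelihood ratio, take the expectation under $P^{\texttt{p}}_{\theta}(\cdot\mid s,a)$ so that the gradient formula of Lemma~\ref{lem:Exp_Gradients} collapses the linear term, and conclude with a second-order Taylor expansion whose Lagrange-remainder point $\tilde\theta$ lies on the segment joining $\theta$ and $\theta'$, hence in the hypercube $[\theta,\theta']_\infty$. Your explicit one-dimensional reduction via $g(t)$ and the Bregman-divergence framing simply make precise the Taylor step that the paper invokes directly, so there is no substantive difference.
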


\begin{proof}
    We start by writing:
    \begin{equation*}
        \log \left(\frac{P_{\theta}^\texttt{p} \left(s^{\prime} \mid s, a\right)}{P_{\theta^{\prime}}^\texttt{p} \left(s^{\prime} \mid s, a\right)}\right) =\sum_{i=1}^{d}\left(\theta_{i}-\theta_{i}^{\prime}\right) \psi\left(s^{\prime}\right)^{\top} A_{i} \varphi(s, a)-\pz_{s, a}(\theta)+\pz_{s, a}\left(\theta^{\prime}\right),
    \end{equation*}
    then
    \begin{align*}
        \mathrm{KL}\left(P_{\theta}^\texttt{p} (\cdot \mid s, a), P_{\theta^{\prime}}^\texttt{p} (\cdot \mid s, a)\right) &=\sum_{i=1}^{d}\left(\theta_{i}-\theta_{i}^{\prime}\right) \mathbb{E}_{s, a}^{\theta}\left[\psi\left(s^{\prime}\right)\right]^{\top} A_{i} \varphi(s, a)-\pz_{s, a}(\theta)+\pz_{s, a}\left(\theta^{\prime}\right) \\
        &=\frac{1}{2}\left(\theta-\theta^{\prime}\right)^{\top}\left(\nabla^{2} \pz_{s, a}\right)(\tilde{\theta})\left(\theta-\theta^{\prime}\right),
    \end{align*}
    where in the last line, we used, by a Taylor expansion, that $Z_{s,a} \left( \theta^{\prime} \right) = Z_{s, a} (\theta)+\left(\nabla Z_{s, a}(\theta)\right)^{\top}\left(\theta^{\prime}-\theta\right)+\frac{1}{2}(\theta-$ $\left.\theta^{\prime}\right)^{\top}\left(\nabla^{2} Z_{s, a}(\tilde{\theta})\right)\left(\theta-\theta^{\prime}\right)$ for some $\tilde{\theta} \in\left[\theta, \theta^{\prime}\right]_{\infty}$.
    
    The proof of the form of the KL divergence for the reward follows the same steps as above.
\end{proof}

%%%%%%%%%%%%%%%%%%%%%%%%%%%%%%%%%%%%%%%%%
\subsubsection{A transportation lemma for rewards} \label{subsubsection:Transportation_reward}

\begin{lemma}\label{lem:Reward_Transportation}
    We provide a closed-form formula for the difference of expected rewards under two distinct parameters:
    \begin{equation*}
        \exists \theta_3 \in [\theta_1, \theta_2], \qquad \mathbb{E}_{s, a}^{\theta_1}\left[r\right] = \mathbb{E}_{s, a}^{\theta_2}\left[r\right] + \frac{ \Var_{s,a}^{\theta_3}  (r)}{ 2 } B^\top M_{\theta_1 - \theta_2}\varphi(s,a)
    \end{equation*}
\end{lemma}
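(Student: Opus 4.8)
The plan is to collapse the $d$-dimensional parameter into a one-dimensional exponential family and then run a mean value argument on its log-partition. The crucial structural fact is that the reward law~\eqref{def:reward_model} sees $\theta$ \emph{only} through the scalar natural parameter $\eta(\theta)\eqdef B^\top M_\theta\varphi(s,a)=\sum_{i=1}^d\theta_i\,B^\top A_i\varphi(s,a)$, which is \emph{affine} in $\theta$. Consequently $\rz_{s,a}(\theta)=\tilde Z(\eta(\theta))$ where $\tilde Z(\eta)\eqdef\log\int\exp(r\eta)\,\mathrm{d}r$ is the one-dimensional log-partition, and Lemma~\ref{lem:Exp_Gradients} together with Lemma~\ref{lem:Exp_Hessians} identify $\mathbb{E}_{s,a}^{\theta}[r]=\tilde Z'(\eta(\theta))$ and $\Var_{s,a}^{\theta}(r)=\tilde Z''(\eta(\theta))$. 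Thus the mean reward is the first derivative and its variance the second derivative of a single convex scalar function, evaluated at the affine image of $\theta$.

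First I would parametrize the segment by $\theta(t)\eqdef\theta_2+t(\theta_1-\theta_2)$, $t\in[0,1]$, and differentiate the scalar map $t\mapsto\mathbb{E}_{s,a}^{\theta(t)}[r]=\tilde Z'(\eta(\theta(t)))$. By the chain rule and the affineness of $\eta$, its derivative is $\tilde Z''(\eta(\theta(t)))\cdot\big(\eta(\theta_1)-\eta(\theta_2)\big)$, where the increment $\eta(\theta_1)-\eta(\theta_2)=B^\top M_{\theta_1-\theta_2}\varphi(s,a)$ is constant in $t$. Integrating over $t\in[0,1]$ and invoking the mean value theorem for integrals (legitimate since $\tilde Z''=\Var_{s,a}^{\theta(\cdot)}(r)\ge 0$ is continuous) produces a $t^\star\in[0,1]$ such that $\mathbb{E}_{s,a}^{\theta_1}[r]-\mathbb{E}_{s,a}^{\theta_2}[r]$ equals $\Var_{s,a}^{\theta_3}(r)$ times the scalar increment $B^\top M_{\theta_1-\theta_2}\varphi(s,a)$, with $\theta_3\eqdef\theta(t^\star)$ a convex combination of $\theta_1,\theta_2$; this is exactly the variance-times-increment structure of the claimed identity.

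The one delicate point is securing $\theta_3$ \emph{on} the segment $[\theta_1,\theta_2]$ rather than just extracting some intermediate variance: this is what the affineness of $\eta(\cdot)$ buys, as it lets me pull the intermediate value of the scalar parameter back to a point $\theta(t^\star)$ of the segment. An equivalent, slightly shorter route is the mean value form of Taylor's theorem applied directly to the scalar-valued map $\theta\mapsto\mathbb{E}_{s,a}^{\theta}[r]$, whose gradient is $\Var_{s,a}^{\theta}(r)\,\big(B^\top A_i\varphi(s,a)\big)_{i\in[d]}$ by Lemma~\ref{lem:Exp_Hessians}; I favour the integral route only because it makes the membership $\theta_3\in[\theta_1,\theta_2]$ transparent. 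The remaining care is purely bookkeeping of the multiplicative constant, which should be cross-checked against the way the identity feeds into Lemma~\ref{lem:reward_estimation_bound}.
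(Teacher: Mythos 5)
Your proof is correct, and in spirit it is the same argument as the paper's: exponential-family calculus (the derivative identities of Lemmas~\ref{lem:Exp_Gradients} and~\ref{lem:Exp_Hessians}) followed by a mean-value step along the segment. The execution differs --- the paper first rewrites $\mathbb{E}_{s,a}^{\theta}[r]=\nabla \rz_{s,a}(\theta)^\top\theta'/\bigl(B^\top M_{\theta'}\varphi(s,a)\bigr)$ for an arbitrary direction $\theta'$ and then invokes a ``Taylor--Cauchy'' expansion, whereas you collapse everything onto the scalar natural parameter $\eta(\theta)=B^\top M_\theta\varphi(s,a)$ and apply the integral mean-value theorem --- and your route is the cleaner of the two, since the affine pullback makes the membership $\theta_3\in[\theta_1,\theta_2]$ immediate.

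More importantly, your derivation surfaces a genuine discrepancy in the constant. The mean-value step on the \emph{first} derivative of the log-partition gives
\begin{equation*}
    \mathbb{E}_{s,a}^{\theta_1}[r]-\mathbb{E}_{s,a}^{\theta_2}[r]
    \;=\; \Var_{s,a}^{\theta_3}(r)\; B^\top M_{\theta_1-\theta_2}\varphi(s,a),
\end{equation*}
with \emph{no} factor $\tfrac12$: writing $\tilde Z$ for the one-dimensional log-partition, the identity $\tilde Z'(\eta_1)-\tilde Z'(\eta_2)=\tilde Z''(\tilde\eta)\,(\eta_1-\eta_2)$ is a first-derivative increment, so no second-order Taylor coefficient can appear. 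The paper's proof acquires its $\tfrac12$ by applying the second-order Lagrange-remainder form while simultaneously dropping the first-order term; that $\tfrac12$ is legitimate when expanding $\rz_{s,a}$ itself (as in Lemma~\ref{lem:Exp_KL_div}, where the gradient term cancels inside the KL divergence), but not when differencing $\nabla\rz_{s,a}$. So the lemma as stated is off by a factor of $2$, and your version is the correct one. The consequence is benign --- wherever the lemma is invoked (Lemma~\ref{lem:reward_estimation_bound}, the optimism argument of Appendix~\ref{app:stochastic_optimism}), the factor $\Var_{s,a}^{\theta_3}(r)/2$ should be $\Var_{s,a}^{\theta_3}(r)$, which rescales absolute constants such as $\betar/2\mapsto\betar$ but leaves the order of the regret bound untouched. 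Your closing remark that the multiplicative constant needs cross-checking was exactly the right instinct; the cross-check shows it is the paper's constant, not yours, that needs fixing.
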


\begin{proof}
    Let's recall the gradient of the reward log partition function:
    \begin{equation*}
        \left(\nabla_{i} \rz_{s, a}\right)(\thetar) =\mathbb{E}_{s, a}^{\thetar}\left[r\right]\: B^{\top} A_{i} \varphi(s, a)
    \end{equation*}
    then for all $\thetarp$ we have:
    \begin{equation*}
        \mathbb{E}_{s, a}^{\thetar}\left[r\right] = \frac{1}{ B^{\top} M_{\thetarp} \varphi(s, a)}  \nabla_{i} \rz_{s, a}(\thetar)^\top \thetarp
    \end{equation*}
    Let $\theta_1 , \theta_2 \in \bR^d$, using Taylor-Cauchy's formula there exists $\theta_3 \in [\theta_1, \theta_2]$ such that:
    \begin{align*}
        \mathbb{E}_{s, a}^{\theta_1}\left[r\right] = \mathbb{E}_{s, a}^{\theta_2}\left[r\right] + \frac{1}{ 2 B^{\top} M_{\thetarp} \varphi(s, a)} (\theta_1 - \theta_2)^\top \nabla^2 \rz_{s, a}(\theta_3)^\top \thetarp
    \end{align*}
    We know that $\left(\nabla_{i, j}^{2} \rz_{s, a}\right)(\theta) = \Var_{s,a}^\theta (r) \times \varphi(s,a)^\top A_i^\top B B^\top A_j \varphi(s,a)$, choosing $\thetarp = \theta_1 - \theta_2$ we find:
    \begin{equation*}
        \mathbb{E}_{s, a}^{\theta_1}\left[r\right] = \mathbb{E}_{s, a}^{\theta_2}\left[r\right] + \frac{ \Var_{s,a}^{\theta_3}  (r)}{ 2 } B^\top M_{\theta_1 - \theta_2}\varphi(s,a).
    \end{equation*}
\end{proof}

\subsection{Elliptical potentials and elliptical lemma} \label{app:elliptical_potentials}

\subsubsection{Elliptical lemma}\label{app:elliptical_lemma}

Here we show a lemma that is popular for regret control in linear MDPs and linear Bandits. 

First, consider the notations: $G_{s, a} :=(\varphi(s, a)^{\top} A_{i}^{\top} A_{j} \varphi(s, a))_{1\le i,j \le d} \:$, $\quad \bar{G}_{n}^\texttt{e} \equiv \bar{G}_{(k-1) H}^\texttt{e}:=G_{n}+(\alpha^\texttt{e})^{-1} \eta A \:$, and $G_{n} \equiv G_{(k-1) H}:=\sum_{\tau=1}^{k-1} \sum_{h=1}^{H} G_{s_{s}^{\tau}, a_{h}^{\tau}}$. Where $\texttt{e}$ represents either $\texttt{r}$ or $\texttt{p}$, we omit the superscript $\texttt{e}$ w.l.o.g in the rest of this section.

\begin{lemma}{(Elliptical lemma and variant for bounded potentials)} \label{lem:elliptical}
    Let $c\in \bR^+$, we can bound the sum of feature norms as follows
    \begin{equation*}
        \sum_{t=1}^{T} \min\{c, \sum_{h=1}^{H}\left\|\bar{G}_n^{-1/2} G_{s,a} \bar{G}_n^{-1/2}\right\|\} \le \frac{c}{\log(1+c)} d \log \left(1+\alpha \eta^{-1} B_{\varphi, \mathbb{A}} n\right).
    \end{equation*}
    where $B_{\varphi, \mathbb{A}}:=\sup _{s, a}\left\|\mathbb{A}^{-1} G_{s, a}\right\|$.
    
    Further, we have
    \begin{equation*}
        \sum_{t=1}^{T} \sum_{h=1}^{H}\left\|\bar{G}_n^{-1/2} G_{s,a} \bar{G}_n^{-1/2}\right\| \le 2d \log \left(1+\alpha \eta^{-1} B_{\varphi, \mathbb{A}} n\right) + \frac{3 d H}{\log (2)} \log \left(1+\frac{\alpha \|A\|_2^2 B_{\varphi, \mathbb{A}}^{2}}{\eta \log (2)}\right) 
    \end{equation*}
\end{lemma}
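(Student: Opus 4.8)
The plan is to deduce both inequalities from a single log-determinant telescoping, handling the episodic (within-episode \emph{frozen} Gram matrix) structure separately. Throughout, write $\bar G_t$ for the regularized matrix at the start of episode $t$ (so $\bar G_t$ is constant over the $H$ steps of episode $t$), set $S_t\eqdef\sum_{h=1}^H G_{s_h^t,a_h^t}$ and $M_h^t\eqdef \bar G_t^{-1/2}G_{s_h^t,a_h^t}\bar G_t^{-1/2}\succeq 0$, so that $\bar G_{t+1}=\bar G_t+S_t$ and $N_t\eqdef \sum_{h=1}^H M_h^t=\bar G_t^{-1/2}S_t\bar G_t^{-1/2}$ satisfies $\det\bar G_{t+1}=\det\bar G_t\,\det(I+N_t)$.

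For the first (min-truncated) bound I would first record the elementary scalar inequality $\min\{c,x\}\le \frac{c}{\log(1+c)}\log(1+x)$ for all $x\ge 0$, which follows from concavity of $x\mapsto\log(1+x)$ on $[0,c]$ and monotonicity beyond $c$. Since every $M_h^t$ is positive semidefinite, $\|M_h^t\|\le\tr M_h^t$, hence $\sum_h\|M_h^t\|\le\tr N_t$; moreover $\log(1+\tr N_t)\le\log\det(I+N_t)$ because $\prod_i(1+\nu_i)\ge 1+\sum_i\nu_i$ for the nonnegative eigenvalues $\nu_i$ of $N_t$. Chaining these three facts gives, per episode, $\min\{c,\sum_h\|M_h^t\|\}\le \frac{c}{\log(1+c)}\log\det(I+N_t)$, and summing over $t$ telescopes the right-hand side to $\frac{c}{\log(1+c)}\big(\log\det\bar G_{T+1}-\log\det\bar G_1\big)$. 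Finally, with $\bar G_1=\alpha^{-1}\eta\mathbb A$ the ratio $\det\bar G_{T+1}/\det\bar G_1=\det(I+\alpha\eta^{-1}\mathbb A^{-1}\sum_{t,h}G_{s_h^t,a_h^t})$ has all eigenvalues at most $1+\alpha\eta^{-1}\|\mathbb A^{-1}\sum_{t,h}G_{s_h^t,a_h^t}\|\le 1+\alpha\eta^{-1}nB_{\varphi,\mathbb A}$ (triangle inequality and the definition of $B_{\varphi,\mathbb A}$), so $\log\det\bar G_{T+1}-\log\det\bar G_1\le d\log(1+\alpha\eta^{-1}B_{\varphi,\mathbb A}n)$, which yields the first claim.

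For the second (untruncated) bound I would split the \emph{episodes} according to whether $\tr N_t=\sum_h\tr(\bar G_t^{-1}G_{s_h^t,a_h^t})<1$ (``good'') or $\ge 1$ (``bad''). On good episodes $\sum_h\|M_h^t\|\le \tr N_t$ and every eigenvalue of $N_t$ lies in $[0,1)$, where $\log(1+\nu)\ge\nu\log 2$; hence $\sum_h\|M_h^t\|\le \frac1{\log 2}\log\det(I+N_t)$, and summing over good episodes telescopes to at most $\frac{d}{\log 2}\log(1+\alpha\eta^{-1}B_{\varphi,\mathbb A}n)\le 2d\log(1+\alpha\eta^{-1}B_{\varphi,\mathbb A}n)$, the first term. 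Bad episodes each multiply the determinant by $\det(I+N_t)\ge 1+\tr N_t\ge 2$, so they are few; invoking the worst-case elliptical lemma (Lemma~\ref{lem:worst_case_elliptical}) bounds their number by $\frac{3d}{\log 2}\log\!\big(1+\frac{\alpha\|A\|_2^2B_{\varphi,\mathbb A}^2}{\eta\log 2}\big)$, independently of $H$ and $K$. Bounding the contribution of each of the $\le H$ steps of a bad episode by the uniform estimate $\|M_h^t\|\le \tr(\bar G_t^{-1}G_{s_h^t,a_h^t})\le\tfrac{\alpha}{\eta}\|\mathbb A^{-1}G_{s_h^t,a_h^t}\|\,d$ (whose magnitude is reflected in the logarithm's argument above) and multiplying by the number of bad episodes produces the second term $\frac{3dH}{\log 2}\log(1+\cdots)$.

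The main obstacle is precisely the per-episode freezing of $\bar G_t$: the classical elliptical-potential telescoping matches each increment $G_{s_h^t,a_h^t}$ with the determinant jump it produces, but here all $H$ increments of an episode are measured against the \emph{same} $\bar G_t$. A single episode that repeats a nearly novel direction $H$ times can make $\sum_h\|M_h^t\|$ of order $H$ while the determinant grows only by order $\log H$, so no truncation-free telescoping can avoid an $H$-dependent term. The good/bad dichotomy resolves this: good episodes carry a total trace below $1$ and telescope cleanly, while the few bad episodes are controlled, uniformly in $H$ and $K$, by Lemma~\ref{lem:worst_case_elliptical}; verifying that the bad-episode count carries exactly the stated constant $\tfrac{3d}{\log 2}\log(1+\cdots)$ is the delicate step, and is where the determinant-doubling argument behind that lemma is needed.
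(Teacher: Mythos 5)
Your proof of the first (truncated) inequality is correct, and it actually repairs a flaw in the paper's own derivation. The paper splits the episode into per-step determinants and asserts
$\sum_{h=1}^H\log\det\left(I+\bar G_n^{-1/2}G_{s_h,a_h}\bar G_n^{-1/2}\right)\le\log\left(\det(\bar G_n+\textstyle\sum_h G_{s_h,a_h})/\det(\bar G_n)\right)$;
because $\bar G_n$ is frozen over the whole episode, this inequality points the wrong way (log-det increments measured against a common base are superadditive: for $d=1$, $\bar G_n=1$ and two steps with $G_{s_h,a_h}=1$ it would read $2\log 2\le\log 3$). Your route --- apply the scalar bound $\min\{c,x\}\le\frac{c}{\log(1+c)}\log(1+x)$ to the whole episode sum, then $1+\tr N_t\le\det(I+N_t)$, then telescope --- is the correct way to make this step go through, and your spectral-radius estimate of the final determinant gives the same quantity as the paper's trace--determinant inequality.

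The second (untruncated) inequality is where your proposal has a genuine gap, exactly at the step you flag as delicate, and it comes in two parts. First, the count of bad episodes: with your definition (bad means $\tr N_t\ge 1$) and episodes fed to Lemma~\ref{lem:worst_case_elliptical} as single units, the feature bound entering that lemma is $\|S_t\|_2\le H\|A\|_2B_{\varphi,\bA}$, so the count you get carries $H$ \emph{inside} the logarithm, unlike the stated constant; your alternative determinant-doubling argument against the total growth gives a count of order $d\log(1+\alpha\eta^{-1}B_{\varphi,\bA}n)$, which depends on $H$ and $K$. An $H$- and $K$-free count needs a different construction: call an episode bad if it contains a single step with $\|\bar G_n^{-1/2}G_{s_h,a_h}\bar G_n^{-1/2}\|\ge 1$, select one such step per bad episode, and apply Lemma~\ref{lem:worst_case_elliptical} to that cross-episode subsequence, using that the frozen matrix of a bad episode dominates $\frac{\eta}{\alpha}\bA$ plus the Gram matrix of the previously selected steps; then $L^2\le\|A\|_2B_{\varphi,\bA}$ involves no $H$. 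Second, and more fundamentally, the final assembly does not produce the displayed term: a bad step has norm at least $1$ by definition, so it cannot be charged $1$; the only uniform bound is $\|M_h^t\|\le\frac{\alpha}{\eta}B_{\varphi,\bA}$ (your trace estimate adds a further factor $d$). Hence (number of bad episodes)$\,\times H\times$(per-step bound) overshoots the claimed $\frac{3dH}{\log 2}\log\left(1+\frac{\alpha\|A\|_2^2B_{\varphi,\bA}^2}{\eta\log 2}\right)$ by the factor $\frac{\alpha}{\eta}B_{\varphi,\bA}$, and your parenthetical claim that this magnitude is ``reflected in the logarithm's argument'' is not a proof: a linear quantity is not dominated by a logarithm of its square. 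To be fair, the paper's own write-up silently makes the same ``$\le 1$ per bad step'' substitution, so this soft spot is inherited rather than introduced by you; but as written your argument establishes the second bound only up to the extra factor $\frac{\alpha}{\eta}dB_{\varphi,\bA}$, or under the additional hypothesis $\frac{\alpha}{\eta}B_{\varphi,\bA}\le 1$.
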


\begin{proof}
First we have
\begin{align*}
    \|\bar{G}_n^{-1/2} G_{s,a} \bar{G}_n^{-1/2}\| &= \sqrt{\tr(\bar{G}_n^{-1/2} G_{s,a} \bar{G}_n^{-1/2}\bar{G}_n^{-1/2} G_{s,a} \bar{G}_n^{-1/2})}\\
    &\le \tr(\bar{G}_n^{-1/2} G_{s,a} \bar{G}_n^{-1/2}) = \tr(\bar{G}_n^{-1} G_{s,a}) = \tr(\boldsymbol{a}_h^\top \bar{G}_n^{-1}\boldsymbol{a}_h)
\end{align*}
the last line is because $G_{s,a} = \boldsymbol{a}_h\boldsymbol{a}_h^\top$, where $\boldsymbol{a}_h = (A_i\varphi(s_h,a_h))_{i\in[d]}$.

\textbf{First result.} Consider $h\in[H]$, denote $(\lambda_{h,i}){i\in[d]}$ the eigenvalues of $\boldsymbol{a}_h^\top \bar{G}_n^{-1}\boldsymbol{a}_h$. $\bar{G}_n$ is positive definite hence $\lambda_{h,i} > 0, \forall h,i$, then
\begin{align*}
    \min\{c,\sum_{h=1}^H \tr(\boldsymbol{a}_h^\top \bar{G}_n^{-1}\boldsymbol{a}_h)\} &= \min\{c,\sum_{h=1}^H \sum_{i=1}^d \lambda_{h,i}\}\\
    &\le \frac{c}{\log(1+c)} \sum_{h=1}^H \sum_{i=1}^d \log(1+\lambda_{h,i}) \tag{$\log$ is concave}\\
    &\le \frac{c}{\log(1+c)} \sum_{h=1}^H \log(\prod_{i=1}^d 1+\lambda_{h,i}) = \frac{c}{\log(1+c)} \sum_{h=1}^H \log \det(I + \boldsymbol{a}_h^\top \bar{G}_n^{-1}\boldsymbol{a}_h)\\
    &\le \frac{c}{\log(1+c)}\log\left(\frac{\det(\bar{G}_n + \sum_{h=1}^H G_{s_h,a_h})}{\det(\bar{G}_n)}\right)
\end{align*}
where the last line follows from the matrix determinant lemma:
\begin{equation*}
    \det\left(\bar{G}_n + \boldsymbol{a}_h \boldsymbol{a}_h^\top\right) = \det(I + \boldsymbol{a}_h^\top \bar{G}_n^{-1}\boldsymbol{a}_h)\det(\bar{G}_n)
\end{equation*}
Therefore:
\begin{equation*}
    \sum_{t=1}^{T} \min\{c, \sum_{h=1}^{H}\left\|\bar{G}_{n}^{-1} G_{s_{h}^{t}, a_{h}^{t}}\right\|\} \le \frac{c}{\log(1+c)} \sum_{t=1}^{T} \log \frac{\det\left(\bar{G}_{n+H}\right)}{\det\left(\bar{G}_{n}\right)},
\end{equation*}

We can now control the R.H.S. of the above equation, as
\begin{align*}
    \sum_{t=1}^{T} \log& \frac{\det\left(\bar{G}_{n+H}\right)}{\det\left(\bar{G}_{n}\right)} = \sum_{t=1}^{T} \log \frac{\det\left(\bar{G}_{t H}\right)}{\det\left(\bar{G}_{(t-1) H}\right)}=\log \frac{\det\left(\bar{G}_{T H}\right)}{\det\left(\bar{G}_{0}\right)}\\
    &= \log \frac{\det\left(\bar{G}_{N}\right)}{\det\left((\alphap)^{-1} \eta \mathbb{A}\right)}=\log \det\left(I+\alpha \eta^{-1} \mathrm{~A}^{-1} G_{N}\right)\\
    &\le d \log\left(1+\frac{\alphap \eta^{-1}}{d} \operatorname{tr}\left(\mathbb{A}^{-1} G_{n}\right)\right) \tag{Trace-determinant (or AM-GM) inequality}\\
    &\le d \log \left(1+\alphap \eta^{-1} B_{\varphi, \mathbb{A}} n\right)
\end{align*}
This concludes the proof of the first result.

\textbf{Second result.}
% since $G_{n}$ is positive semi-definite, we have $\bar{G}_{n} \succeq (\alpha)^{-1} \eta A$, and thus, in turn
% \begin{equation*}
%     \left\|\bar{G}_{n}^{-1} G_{s, a}\right\| \leq \frac{\alpha}{\eta}\left\|A^{-1} G_{s, a}\right\| \leq \frac{\alpha B_{\varphi, A}}{\eta} \qquad , \forall(s, a)
% \end{equation*}
First, we have $\sup _{s, a}\left\|G_{s, a}\right\|_2 \le \|A\|_2 B_{\varphi, \mathbb{A}}$.

Fix an episode $k\in [K], n = (k-1)H$, using Lemma~\ref{lem:worst_case_elliptical}, we know that the number of times $h\in [h]$ such that $\: \left\|\bar{G}_{n}^{-1} G_{s_h, a_h}\right\|\ge 1$ is smaller than $\frac{3 d}{\log (2)} \log \left(1+\frac{\alpha (\|A\|_2 B_{\varphi, \mathbb{A}})^{2}}{\eta \log (2)}\right)$. Let us call $\mathcal{T}_{k}:=\{h\in [H] \left\|\bar{G}_{(k-1)h}^{-1} G_{s_h, a_h}\right\|\le 1\}$, then
\begin{align*}
    \sum_{t=1}^{T} \sum_{h=1}^{H}\left\|\bar{G}_{n}^{-1} G_{s_{h}^{t}, a_{h}^{t}}\right\| \le \frac{3 d}{\log (2)} \log \left(1+\frac{\alpha \|A\|_2^2 B_{\varphi, \mathbb{A}}^{2}}{\eta \log (2)}\right) + \sum_{h\in \mathcal{T}_{k}} \min\{1, \left\|\bar{G}_{n}^{-1} G_{s_{h}^{t}, a_{h}^{t}}\right\|\}
\end{align*}
the sum of the right hand side is similar to the first result. Although the sum is not contiguous, the previous bound holds since if $h_1 < h_2, \det(\bar{G}_{n+h_1})\le \det(\bar{G}_{n+h_2})$, this concludes the proof.
\end{proof}

\begin{remark}
\label{rk:elliptical_norm_of_a}
We can also write from the lemma in terms of $\left\|(A_i\varphi(\tilde{s}_h,\pi(\tilde{s}_h)))_{1\le i\le d}\right\|_{(\rG)^{-1}}$ by skipping the norm upper bound at the beginning of the proof:
\begin{equation*}
    \sum_{t=1}^{T} \min\{c, \sum_{h=1}^{H}\left\|(A_i\varphi(\tilde{s}_h,\pi(\tilde{s}_h)))_{1\le i\le d}\right\|_{(\rG)^{-1}}\} \le \frac{c}{\log(1+c)} d \log \left(1+\alpha \eta^{-1} B_{\varphi, \mathbb{A}} n\right).
\end{equation*}
and
\begin{align*}
    \sum_{t=1}^{T} \sum_{h=1}^{H}\left\|(A_i\varphi(\tilde{s}_h,\pi(\tilde{s}_h)))_{1\le i\le d}\right\|_{(\rG)^{-1}} \le& 2d \log \left(1+\alpha \eta^{-1} B_{\varphi, \mathbb{A}} n\right)\\
    &+ \frac{3 d H}{\log (2)} \log \left(1+\frac{\alpha \|A\|_2^2 B_{\varphi, \mathbb{A}}^{2}}{\eta \log (2)}\right) 
\end{align*}
\end{remark}

%%%%%%%%%%%%%%%%%%%%%%%%%%%%%%%%%%%%%%%%%%%%%%%%%%%%%%%%%%
\subsubsection{Elliptical potentials: finite number of large feature norms (contribution)}\label{app:elliptical_finite_amnt_big_intervals}

\begin{lemma}{(Worst case elliptical potentials, adaptation of Exercise 19.3~\cite{lattimore2020bandit} for matrices)} \label{lem:worst_case_elliptical}
    Let $V_{0}=\lambda I$ and $a_{1}, \ldots, a_{n} \in \mathbb{R}^{d\times p}$ be a sequence of matrices with $\left\|a_{t}\right\|_{2} \leq L$ for all $t \in[n]$. Let $V_{t}=V_{0}+\sum_{s=1}^{t} a_{s} a_{s}^{\top}$, then
    \begin{equation*}
        \left|\{t \in \mathbb{N}^*, \left\|a_{t}\right\|_{V_{t-1}^{-1}} \geq 1\}\right| \le \frac{3 d}{\log (2)} \log \left(1+\frac{L^{2}}{\lambda \log (2)}\right)
    \end{equation*}
\end{lemma}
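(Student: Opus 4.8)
The plan is to adapt the standard elliptical-potential counting argument (Exercise~19.3 in \cite{lattimore2020bandit}) from vectors to the matrix-valued features $a_t \in \mathbb{R}^{d\times p}$, using $\det V_t$ as a potential. The one genuinely new ingredient is that the upper bound on the trace must depend only on the number of \emph{large} rounds rather than on $n$; I would secure this by passing to the subsequence of large rounds.

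First I would enumerate the large rounds $t_1 < t_2 < \cdots < t_\tau$, where $\tau = \left|\{t : \|a_t\|_{V_{t-1}^{-1}} \ge 1\}\right|$ is the quantity to bound, and define the restricted matrices $W_0 = \lambda I$ and $W_j = W_{j-1} + a_{t_j} a_{t_j}^\top$. Since $\{t_1,\dots,t_{j-1}\} \subseteq \{1,\dots,t_j-1\}$ and each omitted term $a_s a_s^\top$ is positive semidefinite, we have $V_{t_j-1} \succeq W_{j-1}$, hence $W_{j-1}^{-1} \succeq V_{t_j-1}^{-1}$ and therefore $\|a_{t_j}\|_{W_{j-1}^{-1}}^2 = \tr(a_{t_j}^\top W_{j-1}^{-1} a_{t_j}) \ge \tr(a_{t_j}^\top V_{t_j-1}^{-1} a_{t_j}) = \|a_{t_j}\|_{V_{t_j-1}^{-1}}^2 \ge 1$. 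Thus every round that is large for the full sequence stays large for the restricted sequence, while the number of terms building $W_\tau$ is exactly $\tau$.

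Second comes the potential argument. By the generalized matrix-determinant lemma (Sylvester), $\det W_j = \det W_{j-1}\,\det\!\left(I_p + a_{t_j}^\top W_{j-1}^{-1} a_{t_j}\right)$, and for any positive semidefinite $M$ one has $\det(I+M) \ge 1 + \tr M$ (expand $\prod_i(1+\mu_i) \ge 1 + \sum_i \mu_i$ over the eigenvalues). Applied with $M = a_{t_j}^\top W_{j-1}^{-1} a_{t_j}$ and using the previous step, this gives $\det W_j \ge \left(1 + \|a_{t_j}\|_{W_{j-1}^{-1}}^2\right)\det W_{j-1} \ge 2\,\det W_{j-1}$, so iterating yields $\det W_\tau \ge 2^\tau \lambda^d$. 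For the matching upper bound I would invoke the trace-determinant (AM-GM) inequality together with $\tr(a_{t_j} a_{t_j}^\top) \le L^2$, which gives $\det W_\tau \le (\tr W_\tau / d)^d \le \left(\lambda + \tau L^2/d\right)^d$. Combining the two estimates and taking logarithms produces the self-referential inequality $\tau \le \frac{d}{\log 2}\log\!\left(1 + \frac{L^2}{d\lambda}\tau\right)$.

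The final step, which I expect to be the main technical obstacle, is to unwind this transcendental inequality into a closed form. I would use a fixed-point argument: the map $g(\tau) = \frac{d}{\log 2}\log(1 + \frac{L^2}{d\lambda}\tau)$ is increasing and concave, so the set $\{\tau \ge 0 : \tau \le g(\tau)\}$ is an interval $[0,\tau^\star]$ and it suffices to exhibit a point $M$ with $g(M) \le M$. Taking $M = \frac{3d}{\log 2}\log\!\left(1 + \frac{L^2}{\lambda \log 2}\right)$ and writing $y = \frac{L^2}{\lambda \log 2}$, the condition $g(M) \le M$ reduces to $1 + 3y\log(1+y) \le (1+y)^3$, which holds because $\log(1+y) \le y$ gives $1 + 3y\log(1+y) \le 1 + 3y^2 \le 1 + 3y + 3y^2 + y^3 = (1+y)^3$. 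Hence $\tau \le M$, which is exactly the claimed bound $\frac{3d}{\log 2}\log\!\left(1 + \frac{L^2}{\lambda \log 2}\right)$.
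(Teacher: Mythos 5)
Your proof is correct and follows essentially the same route as the paper's: restrict the potential to the subsequence of large rounds, apply the matrix-determinant lemma with the trace--determinant (AM--GM) inequality to get the self-referential bound $\tau \le \frac{d}{\log 2}\log(1+\frac{L^2}{d\lambda}\tau)$, and unwind it with the same algebraic verification $1+3y\log(1+y)\le(1+y)^3$. If anything, your write-up is slightly more careful than the paper's, since you make explicit the step $\det(I+M)\ge 1+\tr(M)$ needed for matrix-valued (non-rank-one) features, which the paper glosses over by writing it as an equality, and your fixed-point phrasing avoids the paper's monotonicity typo.
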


\begin{proof}
    Let $\mathcal{T}$ be the set of rounds $t$ when $\left\|a_{t}\right\|_{V_{t-1}^{-1}} \geq 1$ and $G_{t}=V_{0}+\sum_{s=1}^{t} \mathbb{I}_{\mathcal{T}}(s) a_{s} a_{s}^{\top}$. Then
    \begin{align*}
        \left(\frac{d \lambda+|\mathcal{T}| L^{2}}{d}\right)^{d} & \geq\left(\frac{\operatorname{trace}\left(G_{n}\right)}{d}\right)^{d} \\
        & \geq \det\left(G_{n}\right) \tag{Trace-determinant inequality}\\
        &=\operatorname{det}\left(V_{0}\right) \prod_{t \in T}\left(1+\left\|a_{t}\right\|_{G_{t-1}^{-1}}^{2}\right) \\
        & \geq \operatorname{det}\left(V_{0}\right) \prod_{t \in T}\left(1+\left\|a_{t}\right\|_{V_{t-1}^{-1}}^{2}\right) \\
        & \geq \lambda^{d} 2^{|\mathcal{T}|}
    \end{align*}
    where the third line follows from the matrix determinant lemma:
    \begin{equation*}
        \det\left(\bar{G}_n + \boldsymbol{a}_h \boldsymbol{a}_h^\top\right) = \det(I + \boldsymbol{a}_h^\top \bar{G}_n^{-1}\boldsymbol{a}_h)\det(\bar{G}_n).
    \end{equation*}
    Rearranging and taking the logarithm shows that
    $$
    |\mathcal{T}| \leq \frac{d}{\log (2)} \log \left(1+\frac{|\mathcal{T}| L^{2}}{d \lambda}\right)
    $$
    Abbreviate $x=d / \log (2)$ and $y=L^{2} / d \lambda$, which are both positive. Then
    $$
    x \log (1+y(3 x \log (1+x y))) \leq x \log \left(1+3 x^{2} y^{2}\right) \leq x \log (1+x y)^{3}=3 x \log (1+x y).
    $$
    Since $z-x \log (1+y z)$ is decreasing for $z \geq 3 x \log (1+x y)$ it follows that
    $$
    |\mathcal{T}| \leq 3 x \log (1+x y)=\frac{3 d}{\log (2)} \log \left(1+\frac{L^{2}}{\lambda \log (2)}\right).
    $$
\end{proof}

%%------------- (End of Appendix) Technical results --------------- %%
\section{Tractable planning with random Fourier transform}\label{app:RFT}

\textbf{A Primer on random Fourier transforms.} We start by defining the Random Fourier Transform and its most relevant property. 
Let us consider the transition model of Equation~\eqref{def:transition_model}, we have
\begin{align*}
    \bP(s' \mid s,a, \theta) = \exp\left(\psi(s') M_\theta \varphi(s,a) - Z_{\theta}(s,a)\right) = \bE_{p(w,b)}\left[f\left(\psi(s'),w,b\right) f\left(M_\theta \varphi(s,a),w,b\right)\right],
\end{align*}
where $f\left(x,w,b\right) = \sqrt{2}\cos(w^\top x +b)$ are the random Fourier bases. $p(w,b) = \cN(0,\sigma^{-2} I) \times \mathcal{U}([0,2\pi])$, such that $\cN$ is the Gaussian distribution, $\mathcal{U}$ is the Uniform distribution, and $p(w,b)$ is a coupling among them. 

Notice that this provides an alternative approach to decompose the transition kernel and obtain linearity of the value function. Moreover, since $\forall x,w \in \bR^d , b \in \bR, |f(x,w,b)| \le \sqrt{2}$, we can use Hoeffding's inequality to prove that a Monte-Carlo approximation of $\bP(s' \mid s,a, \theta)$ using $N$ sample pairs of $(w,b)$ guarantees an error smaller than $\epsilon$ with probability at least $1-2\exp(-N\epsilon^2 /4)$. \cite{rahimi2007random} proves a stronger result: it provides an algorithm approximating the Gaussian kernel for which the following uniform convergence bound holds.

\begin{lemma}\label{lem:RFT_uniform_convergence}
    Let $\mathcal{M}$ be a compact subset of $\mathcal{R}^{p}$ with diameter $\operatorname{diam}(\mathcal{M})$. Then, using the explicit mapping $\mathbf{z}$ defined in Algorithm 1 in \cite{rahimi2007random} with $N$ samples, we have
    \begin{equation*}
        \operatorname{Pr}\left[\sup _{x, y \in \mathcal{M}}\left|\mathbf{z}(\mathbf{x})^{\prime} \mathbf{z}(\mathbf{y})-k(\mathbf{y}, \mathbf{x})\right| \geq \epsilon\right] \leq 2^{8}\left(\frac{\sigma_{p} \operatorname{diam}(\mathcal{M})}{\epsilon}\right)^{2} \exp \left(-\frac{N \epsilon^{2}}{4(p+2)}\right)
    \end{equation*}
    where $\sigma_{p}^{2} \equiv E_{p}\left[\omega^{\prime} \omega\right]$ is the second moment of the Fourier transform of $k$.
\end{lemma}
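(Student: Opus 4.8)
The statement is the classical uniform-approximation guarantee for random Fourier features (Claim~1 of \cite{rahimi2007random}), and the plan is to reconstruct its proof through a covering argument combined with a Lipschitz regularization bound. First I would reduce the two-argument supremum over $\mathcal{M}\times\mathcal{M}$ to a one-argument supremum over the difference set $\mathcal{M}_\Delta \triangleq \{x-y : x,y\in\mathcal{M}\}$. Taking the random bases in the form for which $\mathbf{z}(x)^\top\mathbf{z}(y)$ is a function of $\Delta = x-y$ (e.g. the cosine/sine pair, for which $\mathbf{z}_\omega(x)^\top\mathbf{z}_\omega(y) = \cos(\omega^\top\Delta)$), Bochner's theorem gives $\mathbb{E}[\mathbf{z}(x)^\top\mathbf{z}(y)] = k(\Delta)$, so the centred process $s(\Delta) \triangleq \mathbf{z}(x)^\top\mathbf{z}(y) - k(\Delta)$ is an average of $N$ i.i.d. bounded terms with mean zero. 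The set $\mathcal{M}_\Delta$ is compact with $\operatorname{diam}(\mathcal{M}_\Delta)\le 2\operatorname{diam}(\mathcal{M})$, which is what allows the $p$-dependence (rather than $2p$) in the exponent.

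The core is a net/Lipschitz decomposition. I would cover $\mathcal{M}_\Delta$ by $T \le (4\operatorname{diam}(\mathcal{M})/r)^p$ balls of radius $r$ with centres $\Delta_1,\dots,\Delta_T$. At each centre the summands defining $s(\Delta_i)$ lie in $[-1,1]$, so Hoeffding's inequality yields $\Pr[|s(\Delta_i)|\ge \epsilon/2]\le 2\exp(-N\epsilon^2/8)$, and a union bound controls all centres simultaneously by $2T\exp(-N\epsilon^2/8)$. To pass from the centres to arbitrary $\Delta$ I would control the Lipschitz constant $L_s \triangleq \|\nabla s(\Delta^\star)\|$ at the maximiser: since the gradient of $\cos(\omega^\top\Delta)$ is $-\omega\sin(\omega^\top\Delta)$, convexity gives $\mathbb{E}[\|\nabla s\|^2] \le \mathbb{E}_\omega[\|\omega\|^2] = \sigma_p^2$, whence Markov's inequality yields $\Pr[L_s \ge \epsilon/(2r)] \le (2r\sigma_p/\epsilon)^2$. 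On the complementary event every point is within $r$ of a centre where $|s|<\epsilon/2$, and $L_s r < \epsilon/2$, so $\sup_{\Delta}|s(\Delta)| < \epsilon$. Combining,
\begin{equation*}
    \Pr\Big[\sup_{\Delta\in\mathcal{M}_\Delta}|s(\Delta)|\ge\epsilon\Big] \le 2\Big(\tfrac{4\operatorname{diam}(\mathcal{M})}{r}\Big)^p\exp\!\Big(-\tfrac{N\epsilon^2}{8}\Big) + \Big(\tfrac{2r\sigma_p}{\epsilon}\Big)^2.
\end{equation*}

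The final step, and the one requiring the most care, is optimising the free radius $r$: the two terms trade off as $a\,r^{-p}$ against $b\,r^{2}$, and minimising their sum at $r^{p+2} = pa/(2b)$ collapses the bound into a product of fractional powers in which the exponential is raised to the power $2/(p+2)$, producing exactly the factor $\exp(-N\epsilon^2/(4(p+2)))$ and, after collecting the numerical constants and the $(\sigma_p\operatorname{diam}(\mathcal{M})/\epsilon)^2$ dependence, the prefactor $2^8$. The main obstacle is this balancing and constant-bookkeeping step together with the regularity estimate $\mathbb{E}[\|\nabla s\|^2]\le\sigma_p^2$: one must verify that $s$ is almost surely differentiable (it is, being a finite trigonometric sum), that the maximiser of a continuous function on a compact set is attained so the mean-value bound applies, and that the Markov bound on $L_s^2$ and the Hoeffding bound are intersected on compatible events. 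I expect no conceptual difficulty beyond faithfully tracking these constants to land on the stated $2^8$ and $4(p+2)$.
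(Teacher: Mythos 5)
This lemma is not proved in the paper at all --- it is imported verbatim as Claim~1 of \cite{rahimi2007random} --- and your reconstruction (reduction to the difference set $\mathcal{M}_\Delta$, covering with an $r$-net, Hoeffding plus union bound at the centres, Markov's inequality on the gradient norm for the Lipschitz step, and finally optimising $r$ against the $a\,r^{-p}$ versus $b\,r^{2}$ trade-off) is exactly the argument given in that source, so your approach coincides with the paper's implicit, cited proof. The one caveat, inherited from the original source itself, is that the statement nominally concerns the shifted-cosine features of Algorithm~1 of \cite{rahimi2007random}, for which $\mathbf{z}(x)^{\top}\mathbf{z}(y)$ is \emph{not} a function of $x-y$ alone; your switch to the cosine--sine pair is the standard repair (it also makes your uniform bound $\mathbb{E}\|\nabla \hat{k}\|^{2}\le\sigma_p^{2}$ legitimate at the random maximiser, where the original proof is loose) and changes constants only.
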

Further, it implies that if $N=\Omega\left(\frac{p}{\epsilon^{2}} \log \frac{\sigma_{p} \operatorname{diam}(\mathcal{M})}{\epsilon}\right)$, then $\sup _{x, y \in \mathcal{M}}\left|\mathbf{z}(\mathbf{x})^{\prime} \mathbf{z}(\mathbf{y})-k(\mathbf{y}, \mathbf{x})\right| \leq \epsilon$ with constant probability.

\textbf{Application to planning in \algo.} Since our regret analysis is done under the high probability event of bounded estimation parameters, we know that the spaces of $\psi(s')$ and $M_\theta \varphi(s,a)$ are bounded and the diameter depends on the dimensions. We abstain from explicating the exact diameter as it only influences the number of samples logarithmically. Using $N \approx p/\epsilon^{2}$ samples, we can construct a uniform $\epsilon$-approximation of $\bP(s' \mid s,a, \theta)$.

Let's call $\hat{V}_h$ the estimated value function using Algorithm~\ref{alg_2:planning} with the above approximation of transition. Here, we elucidate the span of this estimation of value function. First we have:
\begin{equation*}
    \hat{V}_H^\pi - V_H^\pi = \int_{s'} (\hat{P}-P)(s'\mid s,a) r(s',\pi(s')) \dd s'
    \le \epsilon d H^{3/2}
\end{equation*}
Here, we use the facts that $\mathbb{S}\left(V_{\hat{\theta}, \tilde{\theta}^{\mathrm{x}}, h}\right) \leq d H^{3 / 2}$ (\cf Section~\ref{app:learning_error}) and the error in approximating $P$ is bounded by $\epsilon$, i.e. $\sup_{s',s,a} |(\hat{P}-P)(s'|s,a)| \leq \epsilon$.

Assume that at step $h+1$, we have $\hat{V}_{h+1}^\pi - V_{h+1}^\pi \le \sum_{j=1}^{h+1} \epsilon^j \alpha_{h+1,j}$. Then, we obtain
\begin{align*}
    \hat{V}_{h}^\pi - V_{h}^\pi &\le \int_{s'} (\hat{P}-P)(s'\mid s,a) \hat{V}_{h+1}^\pi(s') \dd s' + \int_{s'} P(s'\mid s,a)(\hat{V}_{h+1}^\pi - V_{h+1}^\pi)(s') \dd s' \\
    &= \int_{s'} (\hat{P}-P)(s'\mid s,a) (V_{h+1}^\pi + \hat{V}_{h+1}^\pi - V_{h+1}^\pi) \dd s' + \int_{s'} P(s'\mid s,a)(\hat{V}_{h+1}^\pi - V_{h+1}^\pi)(s') \dd s'\\
    &\le \epsilon(d H^{3/2} + \sum_{j=1}^{h+1} \epsilon^j \alpha_{h+1,j}) + \sum_{j=1}^{h+1} \epsilon^j \alpha_{h+1,j}\\
    &\le \epsilon (d H^{3/2} + \alpha_{h+1,1}) + \sum_{j=2}^{h+1} \epsilon^j (\alpha_{h+1,j-1} + \alpha_{h+1,j}) + \epsilon^{h+2} \alpha_{h+1,h+1}
\end{align*}
Using the fact that $\alpha_{1,1}=d H^{3/2}$ and with a proper induction, we find that:
\begin{equation*}
    \hat{V}_{1}^\pi - V_{1}^\pi \le \epsilon d H^{5/2}  \frac{1-\epsilon^{H-h}}{1-\epsilon} \underset{H \rightarrow \infty}{\leq} \epsilon d H^{5/2}
\end{equation*}

This concludes the proof of the arguments provided in §~Planning of Section~\ref{sec:planning}. This means that the extra regret due to planning with the approximation by RFT features is of order $\bigO(\epsilon d H^{5/2} K)$. By choosing an $\epsilon$ of order $1/(H \sqrt{K})$, we deduce that approximating the probability kernel with $\bigO(p H^2 K)$ samples induces a tractable planning procedure without harming the regret.

\begin{remark}
    The reader might be tempted to combine the finite approximation using RFT with algorithms from the linear reinforcement learning literature \cite{jin2020provably}. However, note that the dimensionality of the linear space induced by RFT is polynomial in $H$ and $K$. Consequently, applying algorithms designed with the assumption of linear value function incurs a linear regret.
\end{remark}

\section{Tractable Maximum Likelihood estimation}\label{app:MLe}

The maximum likelihood estimation is explicit for simple distributions like the Gaussian \cite{rogers1977explicit} and for Linearly controlled dynamical systems. But it requires integral approximations for generic transitions. However, we believe that this estimation problem is far simpler than the planning problem since the latter traditionally involves approximating an integral for all state-action pairs. 

Different approximation techniques have been used in literature to handle the penalized ML estimation. For instance, \textit{Integral Approximation} techniques are well studied for this problem. Indeed, \cite{neal2001annealed} proposes to handle the ML estimation using simulated annealing, a method that starts from a tractable distribution and updates it to resemble the distribution at hand. \cite{vembu2012probabilistic} proposes \textit{MCMC techniques} for approximating the partition function. \cite{carreira2005contrastive} shows that optimizing a different objective, called the \textit{contrastive divergence} leads to a good approximation of the ML. Another line of work is related to \textit{Score matching}, a technique that avoids approximating the partition function and is well studied in literature, see \cite{jorgensen1983maximum}. More recently, \cite{li2021exponential} proposed an adaptation of this technique to the exact setting we consider. The latter shows that under certain conditions, that we are unable to verify, the estimation can be solved in $\mathcal{O}(d^3)$ time. Furthermore, in the case of \textit{Bounded distribution support and natural parameter}, \cite{shah2021computationally} shows that for a minimally represented $k$-parameter Exponential family, an $\alpha$-approximation of the ML can be derived in $\mathcal{O}(\operatorname{poly}(k/\alpha))$ time. The latter assumes a specific definition of compactness of the representation as well as knowledge of the support and shows how to re-parameterize the density to a specific class of exponential families that are easier to study. Finally, \cite{dai2019kernel} studies \textit{exponential families such that the natural parameter belongs to an RKHS}, it proposes a method that improves over score matching in time and in memory complexity.

% \section{Low rank}
% We recall the assumption~\ref{ass:link_finction} on the rewards' link function.
% \begin{assumption} There exist constants $L_{f}, c_{f} \geq$ $0, \kappa_{f}>0$, such that the link function $f(\cdot)$ is $L_{f}-$ Lipschitz on $[-1,1]$, continuously differentiable on $(-1,1), \inf _{z \in(-1,1)} f^{\prime}(z):=\kappa_{f}$ and $|f(0)| \leq c_{f}.$
% \end{assumption}
% This following lemma provides a confidence set for the true parameter of the reward from the EWA forecaster.

% \begin{lemma}{(Lemma 12 from \cite{lu2021low})}\\
% For the EWA forecaster, for all $\delta>0$, with probability at least $1-\delta$, the following holds uniformly over $k \in [K]$:
% \begin{equation*}
%         \|\thetar\|_{F}^{2}+\sum_{t=1}^{n}\left(\hat{r}_{t}-\left\langle\thetar, \varphi(s_t,a_t)\right\rangle\right)^{2} \leq \beta_{t}^{\mathrm{r}}(\delta) =
%         2+\frac{4}{\kappa_{f}} B_{n}^{\mathrm{r}} + \frac{32 L_{f}}{\kappa_{f}^{2}} \log \left(\left(\sqrt{L_{f}} \sqrt{\frac{8}{\kappa_{f}^{2}}} + \sqrt{\frac{2}{\kappa_{f}} B_{n}^{\mathrm{r}}+1}\right) \frac{1}{\delta}\right)
% \end{equation*}
% where $B_n^{\mathrm{r}} = O\left(\frac{L_{f}^{2} + c_{f}^{2}}{\kappa_{f}} d \log n \log \left(\frac{n}{\delta}\right)\right)$, \ie $\beta_n^r(\delta) = O\left(\frac{L_{f}^{2} + c_{f}^{2}}{\kappa_{f}^2} d \log n \log \left(\frac{n}{\delta}\right)\right)$.
% \end{lemma}

%%%%%%---------- (End of Appendix) Low rank------------------

\end{document}